\title{Convergence of neural networks to Gaussian mixture distribution }
\author[Y. Asao]{Yasuhiko Asao\textsuperscript{1}}
\address[Asao]{Department of Applied Mathematics\\ Fukuoka University \\ 8-19-1\\Nanakuma\\Jonan-ku\\Fukuoka city\\ Japan}
\email{asao@fukuoka-u.ac.jp}
\author[R. Sakamoto]{Ryotaro Sakamoto\textsuperscript{2}}
\address[Sakamoto]{Department of Mathematics\\University of Tsukuba\\1-1-1 Tennodai\\Tsukuba\\Ibaraki 305-8571 Japan}
\email{rsakamoto@math.tsukuba.ac.jp}
\author[S. Takagi]{Shiro Takagi\textsuperscript{3}}
\address[Takagi]{Independent Researcher}
\email{takagi4646@gmail.com}
\begin{document}
\maketitle
\theoremstyle{definition}
\newtheorem{theorem}{Theorem}[section]
\newtheorem{definition}[theorem]{Definition}
\newtheorem{proposition}[theorem]{Proposition}
\newtheorem{lemma}[theorem]{Lemma}
 \newtheorem{corollary}[theorem]{Corollary}
 \newtheorem{remark}[theorem]{Remark}
 \newtheorem{example}[theorem]{Example}
 \newtheorem{assumption}[theorem]{Assumption}

\newcommand{\Hom}{{\rm Hom}}
\newcommand{\Gal}{{\rm Gal}}
\newcommand{\SL}{{\rm SL}}

\newcommand{\bA}{\mathbb{A}}
\newcommand{\bB}{\mathbb{B}}
\newcommand{\bC}{\mathbb{C}}
\newcommand{\bD}{\mathbb{D}}
\newcommand{\bE}{\mathbb{E}}
\newcommand{\bF}{\mathbb{F}}
\newcommand{\bG}{\mathbb{G}}
\newcommand{\bH}{\mathbb{H}}
\newcommand{\bI}{\mathbb{I}}
\newcommand{\bJ}{\mathbb{J}}
\newcommand{\bK}{\mathbb{K}}
\newcommand{\bL}{\mathbb{L}}
\newcommand{\bM}{\mathbb{M}}
\newcommand{\bN}{\mathbb{N}}
\newcommand{\bO}{\mathbb{O}}
\newcommand{\bP}{\mathbb{P}}
\newcommand{\bQ}{\mathbb{Q}}
\newcommand{\bR}{\mathbb{R}}
\newcommand{\bS}{\mathbb{S}}
\newcommand{\bT}{\mathbb{T}}
\newcommand{\bU}{\mathbb{U}}
\newcommand{\bV}{\mathbb{V}}
\newcommand{\bW}{\mathbb{W}}
\newcommand{\bX}{\mathbb{X}}
\newcommand{\bY}{\mathbb{Y}}
\newcommand{\bZ}{\mathbb{Z}}

\newcommand{\cA}{\mathcal{A}}
\newcommand{\cB}{\mathcal{B}}
\newcommand{\cC}{\mathcal{C}}
\newcommand{\cD}{\mathcal{D}}
\newcommand{\cE}{\mathcal{E}}
\newcommand{\cF}{\mathcal{F}}
\newcommand{\cG}{\mathcal{G}}
\newcommand{\cH}{\mathcal{H}}
\newcommand{\cI}{\mathcal{I}}
\newcommand{\cJ}{\mathcal{J}}
\newcommand{\cK}{\mathcal{K}}
\newcommand{\cL}{\mathcal{L}}
\newcommand{\cM}{\mathcal{M}}
\newcommand{\cN}{\mathcal{N}}
\newcommand{\cO}{\mathcal{O}}
\newcommand{\cP}{\mathcal{P}}
\newcommand{\cQ}{\mathcal{Q}}
\newcommand{\cR}{\mathcal{R}}
\newcommand{\cS}{\mathcal{S}}
\newcommand{\cT}{\mathcal{T}}
\newcommand{\cU}{\mathcal{U}}
\newcommand{\cV}{\mathcal{V}}
\newcommand{\cW}{\mathcal{W}}
\newcommand{\cX}{\mathcal{X}}
\newcommand{\cY}{\mathcal{Y}}
\newcommand{\cZ}{\mathcal{Z}}

\newcommand{\fa}{\mathfrak{a}}
\newcommand{\fb}{\mathfrak{b}}
\newcommand{\fc}{\mathfrak{c}}
\newcommand{\fd}{\mathfrak{d}}
\newcommand{\fe}{\mathfrak{e}}
\newcommand{\ff}{\mathfrak{f}}
\newcommand{\fg}{\mathfrak{g}}
\newcommand{\fh}{\mathfrak{h}}

\newcommand{\fj}{\mathfrak{j}}
\newcommand{\fk}{\mathfrak{k}}
\newcommand{\fl}{\mathfrak{l}}
\newcommand{\fm}{\mathfrak{m}}
\newcommand{\fn}{\mathfrak{n}}
\newcommand{\fo}{\mathfrak{o}}
\newcommand{\fp}{\mathfrak{p}}
\newcommand{\fq}{\mathfrak{q}}
\newcommand{\fr}{\mathfrak{r}}
\newcommand{\fs}{\mathfrak{s}}
\newcommand{\ft}{\mathfrak{t}}
\newcommand{\fu}{\mathfrak{u}}
\newcommand{\fv}{\mathfrak{v}}
\newcommand{\fw}{\mathfrak{w}}
\newcommand{\fx}{\mathfrak{x}}
\newcommand{\fy}{\mathfrak{y}}
\newcommand{\fz}{\mathfrak{z}}

\newcommand{\fA}{\mathfrak{A}}
\newcommand{\fF}{\mathfrak{F}}

\newcommand{\red}[1]{\textcolor{red}{#1}}
\newcommand{\blue}[1]{\textcolor{blue}{#1}}
\newcommand{\di}{d_\mathrm{{in}}}
\newcommand{\NN}{\textbf{\textrm{NN}}}

\begin{center}
    \textsuperscript{1}Department of Applied Mathematics, Fukuoka University\\
    \vspace{0.3cm}
    \textsuperscript{2}Department of Mathematics, University of Tsukuba\\
    \vspace{0.3cm}
    \textsuperscript{3}Independent Researcher
\end{center}

\begin{abstract}
We give a proof that, under relatively mild conditions, fully-connected feed-forward deep random neural networks converge to a \textit{Gaussian mixture distribution} as only the width of the last hidden layer goes to infinity. We conducted experiments for a simple model which supports our result. Moreover, it gives a detailed description of the convergence, namely, the growth of the last hidden layer gets the distribution closer to the Gaussian mixture, and the other layer successively get the Gaussian mixture closer to the normal distribution.
\end{abstract}

\section{Introduction}

Neural networks with a large number of parameters have had great success in recent years. However, their theoretical characteristics are not well understood yet. One direction to study the theoretical properties of neural networks is to take the limit of the number of parameters to infinity. In the following, we call neural networks with a very large number of parameters \textit{wide-width neural networks}. 

One of the known properties of wide-width neural networks is that fully-connected feed-forward random neural networks converge weakly to a Gaussian process as the widths -- the number of neurons -- of the hidden layers tend to infinity. The first study to show this phenomenon is \cite{Neal96}, in which he showed that neural networks with one hidden layer converge weakly to a Gaussian process as the width of the hidden layer goes to infinity.

Recently, it has been claimed that, under certain conditions, the wide-width neural networks with more than one hidden layer also converge weakly to a Gaussian process \cite{Matthews18,Lee18}. Following these works, several studies have given proofs of weak convergence to Gaussian processes in different ways \cite{Matthews18v2,Hanin21,Bracale21}\footnote{Note that \cite{Matthews18v2} is the extended version of \cite{Matthews18}. Although both have the same 
title, they differ greatly in content, including the main proof procedure.}. Our current research is also in the vein of this research direction. Although there are several works that touch on the convergence of  wide-width neural networks to Gaussian processes, giving the rigorous proof is a challenging task, as \cite{Matthews18} points out.

In the present paper, we discuss the weak convergence of wide-width  neural neural networks as only the width of the last hidden layer goes to infinity. Our main theorem is the following. We refer to Theorem \ref{thm:main} for more precise statement. 
\begin{theorem}\label{mainthm}
Consider a neural network with random weights and random biases. Suppose that the weights and biases has finite $n$-th moments for any $n \in \bN$. We also suppose that the activation function is bounded by a  polynomial. Then the neural network converges to a \textit{ Gaussian mixture distribution} as the width of the last hidden layer goes to infinity.
\end{theorem}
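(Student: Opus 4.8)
The plan is to exploit the fact that only the last hidden layer is widened. Conditioning on the weights and biases of all earlier layers, together with the biases feeding the output layer, the output becomes a sum of conditionally independent and identically distributed contributions, one per neuron of the last hidden layer, so that a classical central limit theorem applies to this sum. Because the earlier layers stay finite-dimensional, their randomness survives in the limit and parametrizes the covariance of the limiting Gaussian; averaging over this residual randomness produces a Gaussian \emph{mixture} rather than a single Gaussian.

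Concretely, write the $i$-th output coordinate evaluated at an input $x$ as
\[
f_i(x) \;=\; \sum_{j=1}^{N} W^{(L+1)}_{ij}\,\phi\!\bigl(z^{(L)}_j(x)\bigr) + b^{(L+1)}_i,
\]
where $N$ is the width of the last hidden layer and $z^{(L)}_j$ is the associated pre-activation. Let $\theta$ collect all random weights and biases of layers $1,\dots,L-1$ together with the output biases $b^{(L+1)}$; since the widths $d_1,\dots,d_{L-1}$ are finite, this is a finite family of random variables, independent of $N$. The first step is to note that, conditionally on $\theta$, the pairs $\bigl(W^{(L+1)}_{ij},\,z^{(L)}_j\bigr)$ are independent and identically distributed across $j$, because distinct neurons $j$ of the last hidden layer use disjoint, independent families of incoming weights $\{W^{(L)}_{jk}\}_k$ and of output weights $\{W^{(L+1)}_{ij}\}_i$.

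The second step is a conditional central limit theorem. Fixing finitely many inputs $x^{(1)},\dots,x^{(m)}$ and reducing to a scalar statement by the Cram\'er--Wold device, one studies a fixed linear combination of the summands, which conditionally on $\theta$ is a suitably normalized sum of i.i.d.\ variables (centered, under the standard zero-mean assumption on the output weights). The hypotheses enter precisely in verifying the Lyapunov condition: conditionally on $\theta$ each pre-activation $z^{(L)}_j$ is a linear combination over the fixed width $d_{L-1}$ of the weights $W^{(L)}_{jk}$, which have finite moments of every order, hence $z^{(L)}_j$ does too; since $\phi$ grows at most polynomially, $\phi(z^{(L)}_j)$ inherits finite moments of all orders, and multiplying by $W^{(L+1)}_{ij}$ yields summands with a finite $(2+\delta)$-th moment. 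The conditional sum therefore converges in distribution to a centered Gaussian whose covariance $\Sigma(\theta)$ is the conditional second-moment matrix, finite $\bP$-almost surely; adding the output biases shifts the mean to $b^{(L+1)}$. Thus, conditionally on $\theta$, the output vector converges in distribution to $\cN\bigl(b^{(L+1)},\Sigma(\theta)\bigr)$.

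Finally, I would pass from the conditional to the unconditional statement via characteristic functions. The unconditional characteristic function of the output equals $\bE_\theta$ of the conditional one; the conditional characteristic function converges pointwise in $\theta$ to $\exp\bigl(i\langle t,b^{(L+1)}\rangle-\tfrac12\,t^\top\Sigma(\theta)\,t\bigr)$, and since every characteristic function is bounded by $1$, bounded convergence permits interchanging the limit with $\bE_\theta$. The resulting limit $\bE_\theta\bigl[\exp\bigl(i\langle t,b^{(L+1)}\rangle-\tfrac12\,t^\top\Sigma(\theta)\,t\bigr)\bigr]$ is exactly the characteristic function of the law obtained by mixing the Gaussians $\cN\bigl(b^{(L+1)},\Sigma(\theta)\bigr)$ over the distribution of $\theta$, that is, a Gaussian mixture; L\'evy's continuity theorem then yields the claimed weak convergence. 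The step I expect to be the main obstacle is the conditional central limit theorem together with its use inside the outer average: one must check the Lyapunov/moment conditions carefully, confirm that $\Sigma(\theta)$ is almost surely finite and measurable so that the bounded-convergence interchange is legitimate, and handle all output coordinates and all chosen inputs jointly (through Cram\'er--Wold) rather than one coordinate at a time. This is exactly where the polynomial bound on $\phi$ and the all-order moment assumptions on the parameters are indispensable.
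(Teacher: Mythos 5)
Your proposal is correct, and it reaches the conclusion by a genuinely different route than the paper. The paper never conditions on the earlier layers: it checks that the summands $Y_i = X_i^{(d_0,\dots,d_{L-1})}w_{i,j}^{(L)}$ form an exchangeable sequence with $\bE[Y_1]=0$, $\bE[Y_1Y_2]=0$, $\bE[Y_1^2]<\infty$ (Proposition \ref{prop:main}, with Proposition \ref{xinl} supplying the moment bound from the polynomial envelope and finite-moment hypotheses), invokes de Finetti's theorem (Theorem \ref{thm:finetti}) to obtain an abstract mixing measure $\mu_X$ on the space $\fF$ of distribution functions, and then proves a CLT for exchangeable sequences (Theorem \ref{thm:exchan-central}) by the same analytic skeleton you use: classical i.i.d.\ CLT under each mixture component, dominated convergence of characteristic functions, and L\'evy continuity. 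Your argument replaces de Finetti by the concrete observation that conditioning on the finitely many parameters $\theta$ of the earlier layers makes the per-neuron contributions i.i.d.; this is exactly where the hypothesis that only the last hidden width tends to infinity (all other widths fixed) enters, and it can be made rigorous by independence and Fubini rather than abstract conditional distributions. What your route buys: it avoids the paper's supplemental measure-theoretic work on $(\fF,\fA)$ (e.g.\ measurability of sets such as $\{F \in \fF \mid \bE[|X_F|^{2}]=\infty\}$, Proposition \ref{expmeasurable}); it identifies the mixing measure explicitly as the law of $\Sigma(\theta)$, which makes transparent why growing $d_1,\dots,d_{L-1}$ should collapse the mixture toward a single Gaussian --- the phenomenon the paper only probes experimentally through $\mathrm{Cov}(Y_1^2,Y_2^2)$ in Section \ref{section:experiment}; and, via Cram\'er--Wold, it yields joint convergence over finitely many inputs and output coordinates (with output biases included), which is stronger than Theorem \ref{thm:main}, stated for a single coordinate at a single input. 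What the paper's route buys: Theorem \ref{thm:exchan-central} is a self-contained statement about arbitrary exchangeable sequences requiring only the orthogonality and moment conditions, independent of the network's particular dependence structure, and the de Finetti representation plugs directly into the criterion of \cite{BCRT58} (Corollary \ref{cor:equiv}) that the limit is genuinely Gaussian if and only if $\mathrm{Cov}(Y_1^2,Y_2^2)=0$, which is precisely the quantity the experiments measure. Two minor points in your write-up: Lyapunov is more than you need, since conditionally on $\theta$ the summands are i.i.d.\ with finite variance, so the plain L\'evy CLT suffices (and also covers the degenerate case $\Sigma(\theta)=0$, which you should admit as a point-mass Gaussian); and for the bounded-convergence interchange you should record that the conditional characteristic function is measurable as a function of $\theta$, which again follows from Fubini.
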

Here \textit{a Gaussian mixture distribution} means a mixture of centered Gaussian distributions \footnote{For notational simplicity, we call this distribution \textit{centered Gaussian mixture}.}(see also Definition \ref{def:mixt}). The main idea of the proof is to use the exchangeablity property of neural networks as random variables, inspired by \cite{Matthews18v2}. We give the following remarks on our main result:

\begin{enumerate}
\item While in the previous studies, it was considered the case that widths of all the hidden layers go to infinity, in the present paper only the width of the last hidden layer goes to infinity. 
This is the main difference between our main result and previous studies, and we consider it remarkable that only the last hidden layer limit makes the distribution Gaussian-like. 
\item As pointed out in Remark \ref{remark:fin-var}, the assumption that appears in our main result (Theorem \ref{thm:main}) is very mild and it does not seem possible to be removed. Moreover, our assumption is milder than those appearing in most previous studies. For example, in the paper \cite{Matthews18v2}, they assumed that weights and biases are sampled from Gaussian and the activation function is bounded by a linear function, which is pivotal to prove their main result. On the other hand, in our setting, we can consider more general weights, biases, 
\end{enumerate}

The remained part of this paper is organized as follows. 
In Section \ref{section:related-works}, we survey several researches of wide-width neural networks. 
In Section \ref{section:experiment}, we empirically confirm the validity of our result in a simple model. We verify that the neural network approaches the normal distribution as only the width of the last hidden layer goes to infinity by using \textit{ kernel two-sample test} method  \cite{Gretton12, Fukumizu07}. On the other hand, we verify that the approaching is not exactly a convergence to the normal distribution, which supports our statement that the limit is just a centered Gaussian mixture. For this, we compute a specific covariance of components of the neural network. We have a supplemental material section for mathematical details.

\subsubsection*{Acknowledgements}

We thank Jumpei Nagase for many assistances and helpful advices.
The first and the second authors are grateful to RIKEN AIP for good treatment as special postdoctral researchers.

\section{Related works}\label{section:related-works}

The seminal work that discusses the convergence of wide-width neural networks to Gaussian processes is done by Matthews et al. \cite{Matthews18}. They prove that when each layer grows at a particular rate respectively, neural networks with ReLU nonlinearity converge weakly to Gaussian processes. 
At about the same time, Lee et al. also give an insight on the infinitely wide random neural network, while their proof is not rigorous in that it seems conflating almost everywhere convergence and weak convergence \cite{Lee18}.
Although \cite{Matthews18} gives a mathematically rigorous proof, the condition for the proof is strict and somewhat artificial. Thus, several follow-up researches have been conducted to relax the conditions. The work of \cite{Matthews18v2} introduces an idea to use exchangeable central limit theorem for removing these constraints, which inspired us to study this subject. They prove the weak convergence to a Gaussian process under a condition that every covariance of squared pre-activations converges to 0.  
In contrast to \cite{Matthews18v2}, \cite{Bracale21} uses characteristic functions for the proof. Their proof assumes  weaker assumptions than \cite{Matthews18v2} in that it requires only polynomial envelop condition for an activation function, which is also the case with us, while they consider specific speed of growth of the widths of layers. The work of \cite{Hanin21} provides a strong result. Not only it requires activation function just a condition on its almost everywhere derivation, but also it requires the weights $w$ and biases $b$ have finite moments, which is also the case with us.

Although it is out of the scope of our work because we focus on fully-connected feed-forward neural networks, some studies discuss the extension of the relationship between neural networks and the Gaussian process beyond such neural networks. Some of them extend the proof to convolutional neural networks \cite{Novak19,Garriga-Alonso18}, a wide class of neural network architectures \cite{Yang19}, neural networks with bottleneck \cite{Agrawal20}, stable distribution \cite{Favaro21}, polynomial networks \cite{Klukowski21}, and uncountable inputs \cite{Bracale21}. 

\section{Main result}
\label{section:proof}
In this section, we prove Theorem \ref{mainthm} which is restated in Theorem \ref{thm:main} in a more precise manner.
Furthermore, in Corollary \ref{cor:equiv}, we give a sufficient (which is almost necessary) condition for the convergent distribution to be normal. We experimentally see that this condition seems hard to be attained, namely, the limit is not genuine Gaussian, in Section \ref{section:experiment}. 

For the proof, we use the notion of exchangeable sequence and de Finetti's theorem inspired by \cite{Matthews18v2}. We first give a short preliminary for the exchangeable sequence in subsection 3.1, and then we prove a central limit type theorem for the exchangeable sequence in a suitable setting for our neural network study in subsection 3.2. Finally, we prove the main theorem in subsection 3.3. We also give a short preliminary for measure-theoretic probability theory and exchangeable sequences in the supplemental material section. Throughout this paper, we write $\Omega$ for a sample space. 
We put $\bN := \{1, 2, 3, \dots \}$. 

\subsection{de Finetti's theorem}

\begin{definition}
We say that a sequence $X_{1}, X_{2}, X_{3},\dots$
of $\bR$-valued random variables is \textit{exchangeable} if for any  integer $N>0$
and any permutation $\tau \colon \{1, \ldots, N\} \longrightarrow \{1, \ldots, N\}$,  the joint probability distribution of the permuted sequence 
\[
X_{\tau (1)},  \dots, X_{\tau (N)}
\]
is the same as the joint probability distribution of the original sequence $X_{1},  \dots, X_{N}$. 
\end{definition}

The next lemma follows immediately from the definition of exchangeability.

\begin{lemma}\label{lemma:exchangeable}
Let $X_{1}, X_{2}, X_{3}, \dots$ and  $Y_{1}, Y_{2}, Y_{3},\dots$ be exchangeable sequences of $\bR$-valued random variables. 
\begin{itemize}
\item[(1)] The sequences $X_{1}+Y_{1}, X_{2}+Y_{2}, X_{3}+Y_{3},\dots$ and  $X_{1}Y_{1}, X_{2}Y_{2}, X_{3}Y_{3}, \dots$ are exchangeable. 
\item[(2)] The sequences $f(X_{1}), f(X_{2}), f(X_{3}), \dots$ is exchangeable for any measurable map $f \colon \bR \longrightarrow \bR$. 
\end{itemize}
\end{lemma}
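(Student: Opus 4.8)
The plan is to reduce both claims to a single principle: permutation-invariance of a joint law is preserved when a fixed measurable map is applied in every coordinate simultaneously. First I would dispose of (2), which is the conceptual core. Fix $N$ and a permutation $\tau$ of $\{1,\dots,N\}$, and consider the product map $F\colon \bR^{N}\to\bR^{N}$, $F(x_{1},\dots,x_{N})=(f(x_{1}),\dots,f(x_{N}))$, which is measurable because $f$ is. The random vector $(f(X_{\tau(1)}),\dots,f(X_{\tau(N)}))$ is exactly the image of $(X_{\tau(1)},\dots,X_{\tau(N)})$ under $F$, so its law is the pushforward $F_{*}\mu_{\tau}$ of the law $\mu_{\tau}$ of $(X_{\tau(1)},\dots,X_{\tau(N)})$. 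Exchangeability of $(X_{i})_{i}$ says $\mu_{\tau}=\mu_{\mathrm{id}}$, hence $F_{*}\mu_{\tau}=F_{*}\mu_{\mathrm{id}}$, which is the statement that $(f(X_{i}))_{i}$ is exchangeable.

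For (1), the idea is to treat the two sequences as a single sequence of pairs $Z_{i}:=(X_{i},Y_{i})$ valued in $\bR^{2}$ and then apply the $\bR^{2}$-analogue of the argument above with the measurable maps $g(x,y)=x+y$ and $h(x,y)=xy$. Concretely, once $(Z_{i})_{i}$ is known to be exchangeable as a sequence of $\bR^{2}$-valued variables, the coordinatewise compositions $g(Z_{i})=X_{i}+Y_{i}$ and $h(Z_{i})=X_{i}Y_{i}$ are exchangeable by the same pushforward computation as in (2) (applied to the product maps $G,H\colon (\bR^{2})^{N}\to\bR^{N}$ built from $g$ and $h$). So the whole of (1) hinges on establishing \emph{joint} exchangeability of the paired sequence, i.e.\ that the law of $\bigl((X_{\tau(i)},Y_{\tau(i)})\bigr)_{i=1}^{N}$ is permutation invariant.

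The hard part will be precisely this joint-exchangeability step, because \emph{marginal} exchangeability of $(X_{i})_{i}$ and of $(Y_{i})_{i}$ separately is strictly weaker than joint exchangeability of $(Z_{i})_{i}$, and does not in general imply it. The clean hypothesis under which it does hold — and the one that is in force in our intended application, where the two sequences are assembled from independent families of i.i.d.\ weights — is that the two families are independent of each other: then the joint law of $\bigl((X_{\tau(i)},Y_{\tau(i)})\bigr)_{i}$ factors as a product of the laws of $(X_{\tau(i)})_{i}$ and $(Y_{\tau(i)})_{i}$, each permutation invariant by hypothesis, and a product of permutation-invariant laws is permutation invariant. Granting this, everything else is routine bookkeeping: both the sum and the product are instances of applying one fixed measurable map in each coordinate, so the conclusion follows immediately from the definition of exchangeability via the pushforward argument established in (2).
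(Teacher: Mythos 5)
The paper offers no actual argument for Lemma \ref{lemma:exchangeable}; it is simply asserted to follow immediately from the definition. Your pushforward argument for part (2) is precisely the argument that assertion has in mind, and it is complete: the law of $(f(X_{\tau(1)}),\dots,f(X_{\tau(N)}))$ is the image under the product map $F$ of the law of $(X_{\tau(1)},\dots,X_{\tau(N)})$, and equal laws have equal pushforwards. Nothing more is needed there.

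Your hesitation about part (1) is not excessive caution; it identifies a genuine flaw in the lemma as stated, and your diagnosis is exactly right. Marginal exchangeability of the two sequences does not imply joint exchangeability of the pairs, and without joint exchangeability the conclusion is false. Concretely, let $X_{1}, X_{2}, X_{3}, \dots$ be i.i.d.\ Rademacher variables (equal to $\pm 1$ with probability $1/2$ each) and set $Y_{1} := X_{2}$, $Y_{2} := X_{1}$, and $Y_{i} := X_{i}$ for $i \geq 3$. Both sequences are i.i.d.\ Rademacher, hence exchangeable, but the product sequence is $(X_{1}X_{2},\, X_{1}X_{2},\, 1,\, 1, \dots)$, whose first term is $\pm 1$ with probability $1/2$ while its third term is identically $1$; since exchangeability forces all one-dimensional marginals to coincide, this sequence is not exchangeable. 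The sums fail similarly: $X_{1}+Y_{1} = X_{1}+X_{2}$ vanishes with probability $1/2$, while $X_{3}+Y_{3} = 2X_{3}$ never vanishes. So part (1) needs an additional hypothesis, and the one you propose --- independence of the family $(X_{i})_{i}$ from the family $(Y_{i})_{i}$, which yields joint exchangeability of the pairs because the joint law factors into two permutation-invariant laws --- is the correct patch. It is also exactly what is available in the one place the paper uses this lemma, namely Proposition \ref{prop:main}: there $Y_{i} = X_{i}^{(d_{0},\dots,d_{L-1})} w_{i,j}^{(L)}$, the weights $w_{i,j}^{(L)}$ are i.i.d.\ and independent of the previous layer's outputs by the paper's independence assumptions, so the pairs are jointly exchangeable and your coordinatewise-map argument goes through. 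In short: your proof is correct under the hypothesis you flag, that hypothesis cannot be dropped, and the lemma ought to be restated (assuming either joint exchangeability of $((X_{i}, Y_{i}))_{i}$ or independence of the two sequences) for the paper's application of it to remain valid.
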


\begin{definition}\label{dfdist}\ 
\begin{itemize}
\item[(1)] A function $F \colon \bR \longrightarrow [0,1]$ is called a \textit{(one-dimensional) distribution function} if $F$ is a  right continuous monotone increasing function satisfying
\[
\lim_{x \to -\infty}F(x) = 0, \,\,\, \textrm{ and } \,\,\, \lim_{x \to +\infty}F(x) = 1. 
\]
\item[(2)] We denote by $\fF$ the set of one-dimensional distribution functions. 
\item[(3)] We denote by $\mathfrak{A}$ the $\sigma$-field on $\fF$ generated by the class of sets 
$\fF(x,y) := \{F \in \fF \mid F(x) \leq y\}$. 
\item[(4)] For any  function $F \in \fF$, we write $X_{F}, X_{F,1}, X_{F,2}, X_{F,3}, \dots$ for i.i.d random variables with distribution $F$. 
\end{itemize}
\end{definition}

The following theorem is proved by De Finetti in \cite{Finetti37} (see also \cite[\S1]{BCRT58}). 
\begin{theorem}\label{thm:finetti}
Let $X_{1}, X_{2}, X_{3},\dots$ be an exchangeable sequence of $\bR$-valued random variables. 
Then there exists a probability measure $\mu_{X}$ (depending on  $X_{1}, X_{2}, X_{3},\dots$) on $(\fF, \fA)$ such that 
for any integer $n > 0$ and any measurable set $B \subset \bR^{n}$, we have 
\[
\bP((X_{1}, \dots, X_{n}) \in B) = \int_{\fF} \bP((X_{F, 1}, \dots, X_{F, n}) \in B) \, {d}\mu_{X}(F).  
\]
\end{theorem}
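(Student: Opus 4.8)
The plan is to realize the representing measure $\mu_{X}$ as the law of the almost-sure limit of the empirical distribution functions of the sequence, and to identify the conditional law of the sequence given the exchangeable $\sigma$-field as an i.i.d.\ law governed by that limit. Concretely, for each $n$ set
\[
F_{n}(x) := \frac{1}{n}\sum_{i=1}^{n} \mathbf{1}[X_{i} \le x],
\]
and let $\cE_{n} \subset \cE_{n-1} \subset \cdots$ be the decreasing family of $\sigma$-fields of events invariant under permutations of the first $n$ coordinates, with $\cE := \bigcap_{n} \cE_{n}$ the exchangeable $\sigma$-field. Exchangeability gives $\bE[\mathbf{1}[X_{1}\le x]\mid \cE_{n}] = F_{n}(x)$, so $(F_{n}(x))_{n}$ is a reverse martingale; by the reverse martingale convergence theorem it converges almost surely and in $L^{1}$ to $F(x) := \bE[\mathbf{1}[X_{1} \le x]\mid \cE]$. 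Taking the almost-sure limit simultaneously over rational $x$ and extending right-continuously yields a random function $F \in \fF$ (monotonicity and the boundary conditions of Definition \ref{dfdist}(1) follow from conditional monotone convergence applied to $\bE[\,\cdot\mid\cE]$). Since each $\omega \mapsto F_{\omega}(x)$ is measurable and $\fA$ is generated by the sets $\fF(x,y)=\{F\mid F(x)\le y\}$, the map $\omega \mapsto F$ is measurable into $(\fF,\fA)$, and I would define $\mu_{X}$ to be its pushforward distribution. In particular $F$ is the conditional distribution function of $X_{1}$ given $\cE$, so $\bE[g(X_{1})\mid\cE]=\int_{\bR} g\,dF$ for every bounded measurable $g$.

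The heart of the argument is to show that, conditionally on $\cE$, the variables $X_{1}, X_{2}, \dots$ are i.i.d.\ with distribution $F$; that is, for bounded measurable $g_{1},\dots,g_{k}$,
\[
\bE\!\left[\prod_{j=1}^{k} g_{j}(X_{j})\;\Big|\;\cE\right] = \prod_{j=1}^{k} \int_{\bR} g_{j} \, dF = \prod_{j=1}^{k} \bE[g_{j}(X_{1})\mid \cE].
\]
To prove this I would pass through the symmetric U-statistics $U_{n} := \tfrac{1}{k!}\binom{n}{k}^{-1}\sum \prod_{j} g_{j}(X_{i_{j}})$, the sum running over ordered $k$-tuples of distinct indices $i_{1},\dots,i_{k} \le n$. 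Each $U_{n}$ is $\cE_{n}$-measurable and, by the same exchangeability computation as above, equals $\bE[\prod_{j} g_{j}(X_{j})\mid\cE_{n}]$, so $(U_{n})_{n}$ is a reverse martingale converging a.s.\ to $\bE[\prod_{j} g_{j}(X_{j})\mid\cE]$. On the other hand, expanding the product of empirical means $\prod_{j}\bigl(n^{-1}\sum_{i} g_{j}(X_{i})\bigr)$ and discarding the $O(n^{-1})$ fraction of terms with a repeated index shows that this product differs from $U_{n}$ by a quantity tending to $0$; since each factor $n^{-1}\sum_{i} g_{j}(X_{i}) \to \bE[g_{j}(X_{1})\mid\cE] = \int g_{j}\,dF$ a.s.\ by the reverse-martingale argument of the first paragraph applied to $g_{j}$, the factorization follows.

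The main obstacle is precisely this factorization step: controlling the diagonal (repeated-index) contributions to the U-statistic and justifying the passage to the limit requires the uniform integrability supplied by the reverse martingale structure together with the boundedness of the $g_{j}$, and the combinatorial bookkeeping must be carried out carefully. Granting the factorization, the conclusion is routine. For measurable $B \subset \bR^{n}$ the tower property gives
\[
\bP((X_{1},\dots,X_{n})\in B) = \bE\bigl[\bE[\mathbf{1}_{B}(X_{1},\dots,X_{n})\mid\cE]\bigr] = \bE\bigl[\bP((X_{F,1},\dots,X_{F,n})\in B)\bigr],
\]
where the inner equality uses that conditioning on $\cE$ makes the coordinates i.i.d.\ with law $F$ (apply the factorization to products of indicators and extend to all measurable $B$ by a monotone class argument). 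Rewriting the outer expectation as an integral against the pushforward law $\mu_{X}$ of $F$ yields exactly the asserted representation
\[
\bP((X_{1},\dots,X_{n})\in B) = \int_{\fF} \bP((X_{F,1},\dots,X_{F,n})\in B)\, d\mu_{X}(F).
\]
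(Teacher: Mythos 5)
Your proposal cannot be checked against a proof in the paper, because the paper offers none: Theorem \ref{thm:finetti} is quoted as a known result, with the proof deferred to De Finetti's original article \cite{Finetti37} and to \cite{BCRT58}. What you have written is the standard modern proof via the exchangeable $\sigma$-field, and it is correct in its essentials: empirical distribution functions as reverse martingales with respect to the decreasing family $\cE_n$, L\'evy's downward convergence theorem, the U-statistic computation showing that the coordinates are conditionally i.i.d.\ given $\cE$ (with the diagonal terms killed by the $O(n^{-1})$ count and boundedness of the $g_j$), and a monotone class argument to pass from products of indicators to general Borel sets $B$. The two points you flag as delicate --- the combinatorial control of repeated indices and the rational-extension construction of the random $F \in \fF$ --- are indeed the right ones, and your treatment of both is adequate. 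One further point deserves explicit mention in this paper's setting: your final step, rewriting the outer expectation as $\int_{\fF}\bP((X_{F,1},\dots,X_{F,n})\in B)\,d\mu_X(F)$, requires that $F \mapsto \bP((X_{F,1},\dots,X_{F,n})\in B)$ be measurable with respect to the $\sigma$-field $\fA$ of Definition \ref{dfdist}(3); for rectangles $B=\prod_i(-\infty,x_i]$ this map is $\prod_i F(x_i)$, which is manifestly $\fA$-measurable, and the general case follows by a $\pi$--$\lambda$ argument. This parallels what the paper itself establishes (for single-variable functionals) in its supplemental material, Propositions \ref{isom}--\ref{expmeasurable}, so the gap is cosmetic rather than substantive. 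In sum: the paper buys brevity by citation; your argument buys a self-contained proof, at the cost of needing the reverse-martingale machinery, and it holds up.
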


Theorem \ref{thm:finetti} and the definition of integration implies the following.
\begin{corollary}\label{cordefi}
Let $f : \bR^n \longrightarrow \bR$ be an integrable function with respect to the joint distribution of $X_1, \dots, X_n$. Then we have 
\[
\bE[f(X_{1}, \dots, X_{n})] = \int_{\fF} \bE[f(X_{F, 1}, \dots, X_{F, n})] \, {d}\mu_{X}(F).  
\]
\end{corollary}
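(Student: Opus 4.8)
The plan is to extend the set-level identity of Theorem~\ref{thm:finetti} to integrable functions by the standard measure-theoretic approximation argument (the ``standard machine''). First I would observe that Theorem~\ref{thm:finetti} is precisely the assertion of the corollary for indicator functions: taking $f = \mathbf{1}_{B}$ for a measurable $B \subset \bR^{n}$ gives $\bE[\mathbf{1}_{B}(X_{1},\dots,X_{n})] = \bP((X_{1},\dots,X_{n})\in B)$ and likewise for the i.i.d.\ variables $X_{F,i}$, so the two sides agree verbatim. By linearity of both the expectation $\bE[-]$ and the integral $\int_{\fF}(-)\,d\mu_{X}$, the identity then holds for every nonnegative simple function $f = \sum_{i} c_{i}\,\mathbf{1}_{B_{i}}$.

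Next I would pass to an arbitrary nonnegative measurable $f$. Choosing an increasing sequence of nonnegative simple functions $s_{k} \uparrow f$ pointwise, the inner expectations $g_{k}(F) := \bE[s_{k}(X_{F,1},\dots,X_{F,n})]$ form an increasing sequence of $\mathfrak{A}$-measurable functions of $F$; each $g_{k}$ is measurable because it is a finite linear combination of the maps $F \mapsto \bP((X_{F,1},\dots,X_{F,n})\in B_{i})$, whose $\mathfrak{A}$-measurability is implicit in the very well-definedness of the integral appearing in Theorem~\ref{thm:finetti}. Applying the monotone convergence theorem on the sample space $\Omega$ (for each fixed $F$) shows $g_{k}(F) \uparrow \bE[f(X_{F,1},\dots,X_{F,n})]$, so the limit is again $\mathfrak{A}$-measurable; applying monotone convergence a second time, now against $\mu_{X}$ and using the simple-function identity $\bE[s_{k}(X_{1},\dots,X_{n})] = \int_{\fF} g_{k}\,d\mu_{X}$, upgrades the result to $\bE[f(X_{1},\dots,X_{n})] = \int_{\fF}\bE[f(X_{F,1},\dots,X_{F,n})]\,d\mu_{X}(F)$ for all nonnegative measurable $f$.

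Finally I would treat a general integrable $f$ by writing $f = f^{+} - f^{-}$. Applying the nonnegative case to $|f| = f^{+}+f^{-}$ yields $\int_{\fF}\bE[|f|(X_{F,1},\dots,X_{F,n})]\,d\mu_{X}(F) = \bE[|f|(X_{1},\dots,X_{n})] < \infty$ by the integrability hypothesis, so the inner expectation is finite for $\mu_{X}$-almost every $F$ and the splitting into positive and negative parts is legitimate both $\mu_{X}$-a.e.\ and after integration. Subtracting the two nonnegative identities for $f^{+}$ and $f^{-}$ then gives the claim. The only point requiring genuine care throughout is the $\mathfrak{A}$-measurability of $F \mapsto \bE[f(X_{F,1},\dots,X_{F,n})]$ together with the finiteness needed to perform the subtraction, and this is exactly what the two monotone-convergence passages and the integrability assumption secure; everything else is the routine bookkeeping of the standard machine.
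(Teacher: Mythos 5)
Your proposal is correct and is exactly the argument the paper intends: the paper dispatches this corollary in one line (``Theorem~\ref{thm:finetti} and the definition of integration''), and your standard-machine expansion --- indicators via de Finetti, simple functions by linearity, nonnegative $f$ by two applications of monotone convergence, then $f = f^{+} - f^{-}$ using integrability --- is precisely the spelled-out version of that, with the $\mathfrak{A}$-measurability of $F \mapsto \bE[f(X_{F,1},\dots,X_{F,n})]$ handled correctly along the way.
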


In the following, let $X_{1}, X_{2}, X_{3},\dots$ be an exchangeable sequence of $\bR$-valued  random variables and let $\mu_{X}$ be the probability measure on $(\fF, \fA)$ whose existence is guaranteed by Theorem \ref{thm:finetti}. 

\begin{lemma}\label{lemma:third}
If $\bE[|X_{1}|^{r}] < \infty$, then 
\[
\mu_{X}(\{F \in \fF \mid \bE[|X_{F}|^{r}]  = \infty \}) = 0. 
\]
Here the set $\{F \in \fF \mid \bE[|X_{F}|^{r}]  = \infty \}$ is measurable by Proposition \ref{expmeasurable} in the supplemental material.
\end{lemma}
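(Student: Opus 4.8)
The plan is to reduce the statement to the elementary measure-theoretic fact that a nonnegative function with finite integral is finite almost everywhere, applying the de Finetti representation through Corollary \ref{cordefi} to the single observable $f(x) = |x|^{r}$.

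First I would record that the set in question is genuinely measurable: writing $g \colon \fF \longrightarrow [0, +\infty]$ for the map $g(F) = \bE[|X_{F}|^{r}] = \int_{\bR} |x|^{r} \, dF(x)$, Proposition \ref{expmeasurable} in the supplemental material guarantees that $g$ is $\fA$-measurable, so that $\{F \in \fF \mid \bE[|X_{F}|^{r}] = \infty\} = g^{-1}(\{+\infty\})$ lies in $\fA$ and the assertion $\mu_{X}(g^{-1}(\{+\infty\})) = 0$ is meaningful.

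Next I would apply Corollary \ref{cordefi} with $n = 1$ and $f(x) = |x|^{r}$. By hypothesis $\bE[|X_{1}|^{r}] < \infty$, so $f$ is integrable with respect to the distribution of $X_{1}$ and the corollary gives
\[
\bE[|X_{1}|^{r}] = \int_{\fF} \bE[|X_{F}|^{r}] \, d\mu_{X}(F) = \int_{\fF} g \, d\mu_{X}.
\]
Since $g \geq 0$, this identity exhibits $g$ as a nonnegative measurable function with finite $\mu_{X}$-integral. The final step is then a Markov-inequality argument: for every $M > 0$ one has $M \cdot \mu_{X}(\{g \geq M\}) \leq \int_{\fF} g \, d\mu_{X} = \bE[|X_{1}|^{r}]$, and since $\{g = +\infty\} \subseteq \{g \geq M\}$ for all $M$, letting $M \to \infty$ forces $\mu_{X}(\{g = +\infty\}) = 0$, which is exactly the claim.

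I do not expect a serious obstacle in the main argument, which is short; the only genuinely delicate points are external to it. The first is the measurability of $g$, which is why the lemma explicitly cites Proposition \ref{expmeasurable}. The second is a point of care in invoking Corollary \ref{cordefi}: its integrand $F \mapsto \bE[|X_{F}|^{r}]$ may legitimately equal $+\infty$ on part of $\fF$, so one must read the right-hand side as the integral of a $[0,+\infty]$-valued measurable function. Because $f = |x|^{r}$ is nonnegative, this is unproblematic — one can justify the identity directly by monotone convergence applied to the truncations $\min(|x|^{r}, k)$ — and it is precisely the finiteness of the left-hand side that then propagates to $\mu_{X}$-almost-everywhere finiteness of the integrand.
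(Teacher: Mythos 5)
Your proposal is correct and follows the paper's own route: the paper likewise applies Corollary \ref{cordefi} with $f(x)=|x|^{r}$ to obtain $\int_{\fF}\bE[|X_{F}|^{r}]\,d\mu_{X}(F)=\bE[|X_{1}|^{r}]<\infty$ and then concludes by the fact that a nonnegative measurable function with finite integral is $\mu_{X}$-a.e.\ finite. The only difference is that you spell out this last step (via Markov's inequality and the truncation/monotone-convergence remark), which the paper compresses into ``Hence the statement follows.''
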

\begin{proof}
By Corollary \ref{cordefi} and the assumption, we have
\[
\int_{\fF}  \bE[|X_{F}|^{r}]  \, d\mu_{X}(F) =  \bE[|X_{1}|^{r}] < \infty. 
\]
Hence the statement follows.
\end{proof}

\begin{lemma}\label{lemma:zero}
We put $\fF_{0} := \{F \in \fF \mid \bE[X_{F}] = 0\}$, which is measurable by Proposition \ref{expmeasurable}. If $\bE[X_{1}X_{2}] = 0$, then we have 
\[
\mu_{X}(\fF \setminus \fF_{0}) = 0. 
\]
\end{lemma}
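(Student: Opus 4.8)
The plan is to feed the function $f(x_{1}, x_{2}) = x_{1}x_{2}$ into the de Finetti representation and exploit the fact that, conditionally on $F$, the sampled variables are genuinely independent, so that the cross moment factorizes into a perfect square. First I would apply Corollary \ref{cordefi} to this $f$, which is integrable against the joint law of $(X_{1}, X_{2})$ because the hypothesis $\bE[X_{1}X_{2}] = 0$ presupposes that $X_{1}X_{2}$ has a finite expectation. This yields
\[
0 = \bE[X_{1}X_{2}] = \int_{\fF} \bE[X_{F,1}X_{F,2}] \, d\mu_{X}(F).
\]
Since $X_{F,1}$ and $X_{F,2}$ are i.i.d.\ with common distribution $F$, and hence independent, the inner expectation factorizes as $\bE[X_{F,1}X_{F,2}] = \bE[X_{F,1}]\,\bE[X_{F,2}] = (\bE[X_{F}])^{2}$, giving
\[
\int_{\fF} (\bE[X_{F}])^{2} \, d\mu_{X}(F) = 0.
\]

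Next I would observe that the integrand $(\bE[X_{F}])^{2}$ is nonnegative, so an integral equal to zero forces the integrand to vanish $\mu_{X}$-almost everywhere. Thus $\bE[X_{F}] = 0$ for $\mu_{X}$-a.e.\ $F$, which is precisely the assertion $\mu_{X}(\fF \setminus \fF_{0}) = 0$, with $\fF_{0}$ measurable by Proposition \ref{expmeasurable}.

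The point requiring care is the factorization step: it is only legitimate on the set of $F$ for which $\bE[X_{F}]$ is finite, since otherwise $\bE[X_{F,1}X_{F,2}]$ need not split as a square. To handle this I would first note that $\bE[|X_{1}X_{2}|] < \infty$, together with Corollary \ref{cordefi} applied to $|x_{1}x_{2}|$ and Tonelli's theorem, gives
\[
\int_{\fF} (\bE[|X_{F}|])^{2}\, d\mu_{X}(F) = \bE[|X_{1}X_{2}|] < \infty,
\]
so that $\bE[|X_{F}|] < \infty$ for $\mu_{X}$-a.e.\ $F$; equivalently one may invoke Lemma \ref{lemma:third} under the ambient finite-second-moment assumption. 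On this full-measure set the variables $X_{F,1}, X_{F,2}$ are integrable and independent, so the factorization is valid, and the almost-everywhere-vanishing conclusion is unaffected by the null set where $\bE[X_{F}]$ fails to exist. This finiteness bookkeeping is the only real obstacle; the remaining steps are the routine nonnegativity argument above.
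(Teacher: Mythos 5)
Your proof is correct and takes essentially the same route as the paper's: apply Corollary \ref{cordefi} to $x_{1}x_{2}$, factor the conditional expectation by independence of $X_{F,1}, X_{F,2}$ into $\bE[X_{F}]^{2}$, and conclude from the vanishing of a nonnegative integral that $\bE[X_{F}]=0$ for $\mu_{X}$-a.e.\ $F$. The only difference is your explicit finiteness bookkeeping (Tonelli applied to $|x_{1}x_{2}|$ to get $\bE[|X_{F}|]<\infty$ a.e.) justifying the factorization, a point the paper's proof passes over silently.
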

\begin{proof}
By Corollary \ref{cordefi}, we have $\bE[X_{1}X_{2}] = \int_{\fF} \bE[X_{F,1}X_{F,2}] \, d\mu_{X}(F)$. Since $X_{F,1}$ and $X_{F,2}$ are independent, we have $\int_{\fF} \bE[X_{F,1}X_{F,2}] \, d\mu_{X}(F) = \int_{\fF} \bE[X_{F}]^2 \, d\mu_{X}(F)$. Hence we obtain 
\[
0 = \bE[X_{1}X_{2}] = \int_{\fF} \bE[X_{F}]^{2} \, d\mu_{X}(F), 
\]
which implies that $\mu_{X}(\fF \setminus \fF_{0}) = 0$. 
\end{proof}

\subsection{Gaussian mixture distribution and the convergence theorem for exchangeable sequences}

\begin{definition}[Gaussian mixture distribution]\label{def:mixt} 
Let $\mathcal{N}$ be the set of probability density functions of all normal distributions. Let $\theta : \Omega \longrightarrow \mathcal{N}$ be any map, and consider a map $\tilde{\theta} : \Omega \times \bR \longrightarrow \bR_{\geq 0} ; \tilde{\theta}(\omega, x) = \theta(\omega)(x)$. Then a function $f_\theta$ defined by $f_\theta(x) = \int_\Omega \tilde{\theta}(\omega, x)$ is a probability density function, which we call a \textit{Gaussian mixture distribution function}.
\end{definition}

\begin{example}
The figures in Fig. \ref{fig:gauss_mix} show Gaussian mixture distribution of two centered Gaussians (left) and two non-centered ones (right).
\begin{figure}[H]
    \centering
    \includegraphics[width=0.48\linewidth]{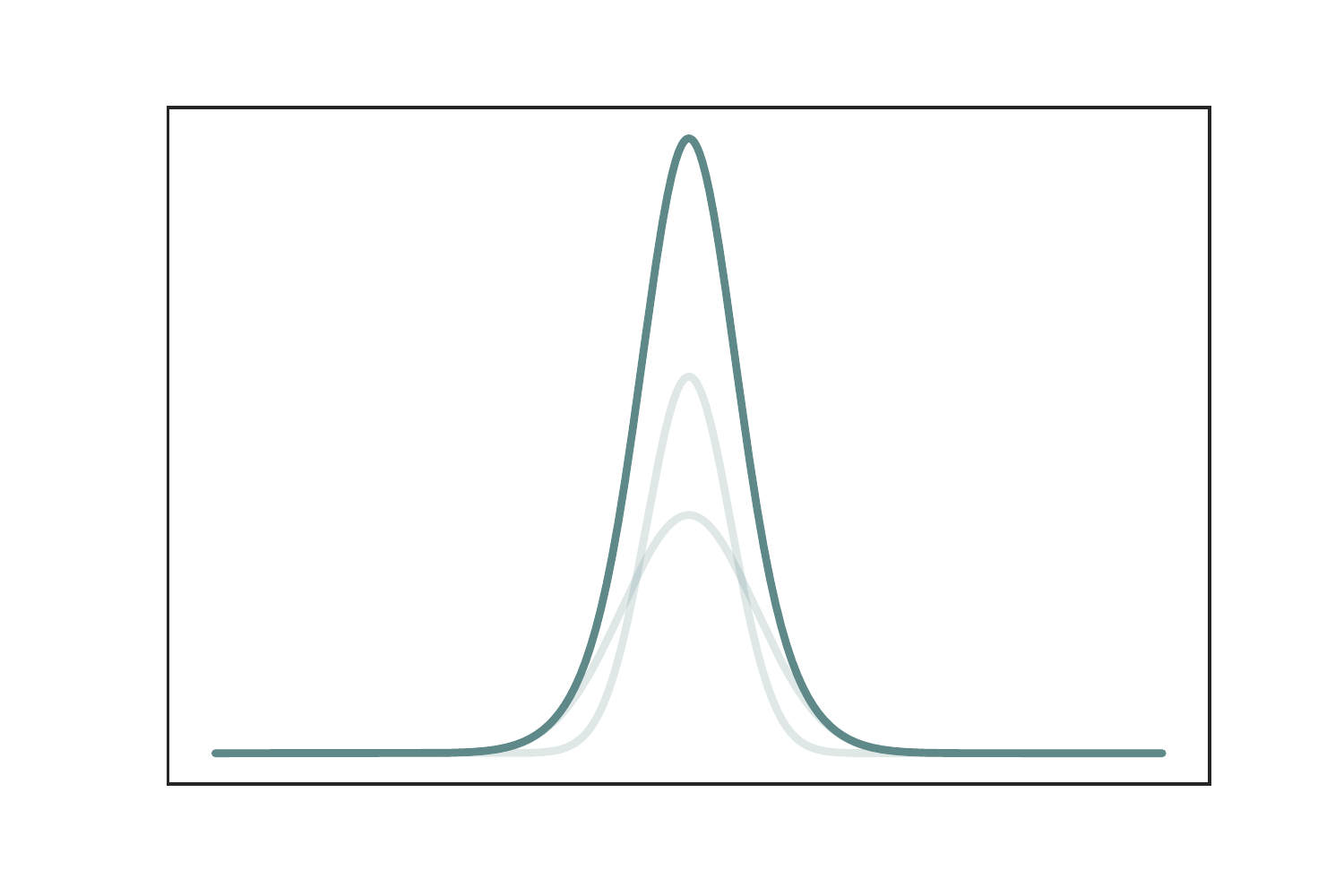}
    \includegraphics[width=0.48\linewidth]{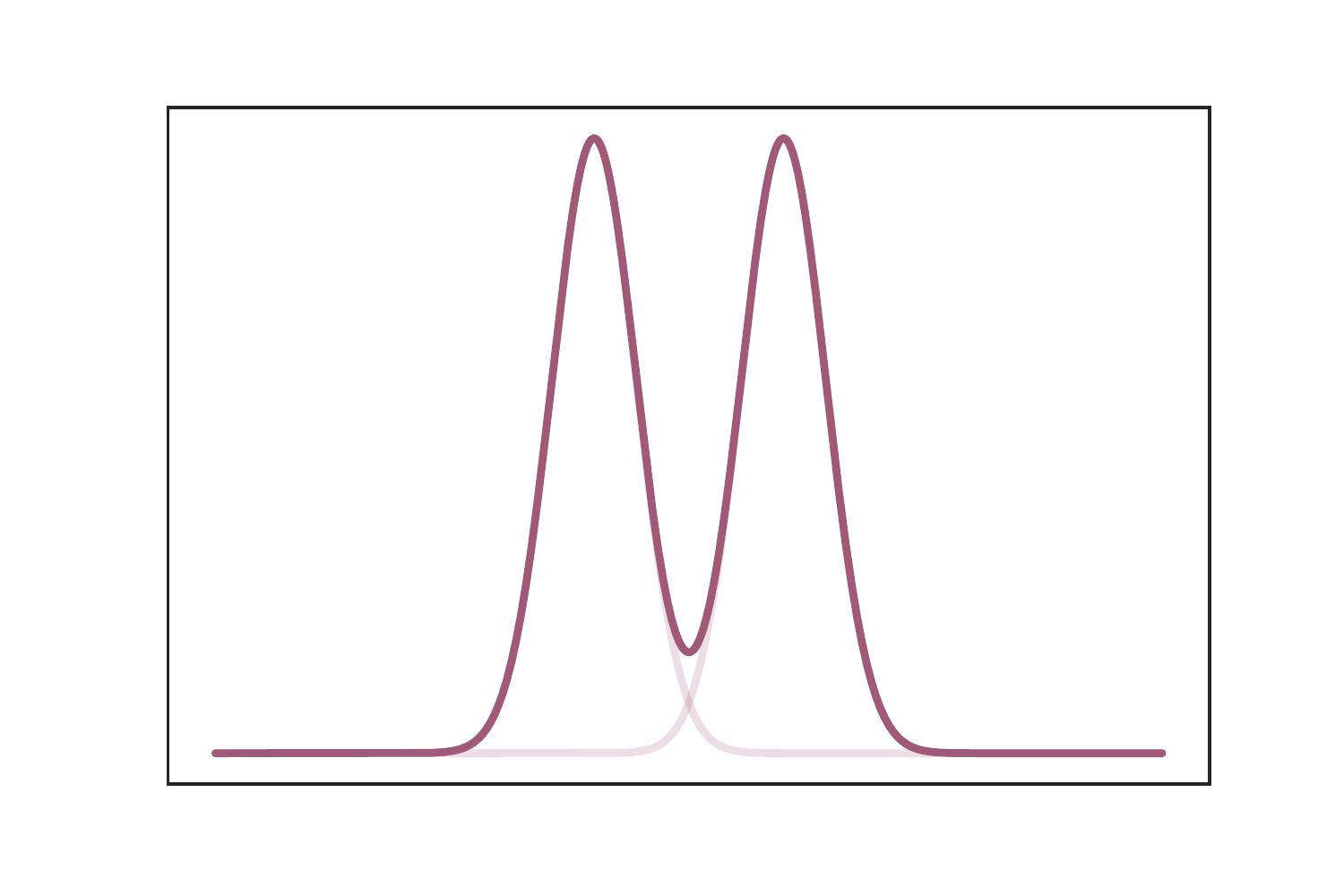}
    \caption{Centered vs Non-centered}
    \label{fig:gauss_mix}
\end{figure}
\end{example}

Now we construct a Gaussian mixture distribution function pivotal for our study of wide-width neural networks. Let $X_{i}$ and $\mu_{X}$ be as in the previous subsection. 
Suppose that $\bE[X_{1}] = 0$, $\bE[X_{1}X_{2}] = 0$, and $\bE[|X_{1}|^{2}] < \infty$. 
We put 
\[
\fF_{0}' := \{F \in \fF \mid \bE[X_{F}] = 0 \, \textrm{ and } \, \bE[|X_{F}|^{2}]  < \infty \}, 
\]
which is measurable by Proposition \ref{expmeasurable} in the supplemental material. Then by Lemmas \ref{lemma:third} and \ref{lemma:zero}, we have 
\[
\mu_{X}(\fF \setminus \fF_{0}') = 0. 
\]

\begin{definition}\label{cgauss}
We define a probability density function $f_{X}(x)$ by 
\begin{align*}
f_{X}(x) &:= \int_{\fF_{0}'} \frac{1}{\sqrt{2\pi \bE[|X_{F}|^{2}] }}\exp\left(-\frac{x^{2}}{2\bE[|X_{F}|^{2}] }\right) \, d\mu_{X}(F). 
\end{align*}
Let $Z_{X}$ denotes a random variable with the probability density function $f_{X}(x)$ and 
we put its characteristic function  $\Phi_{X}(t) := \bE[e^{itZ_{X}}]$. 
\end{definition}
\begin{remark}
The distribution in Definition \ref{cgauss} is a mixture of centered Gaussian (namely 0 mean), which we call {\it centered Gaussian mixture}. Hence it is more like normal distribution. 
\end{remark}

\begin{lemma}\label{lemma:phi}
We have 
\[
\Phi_{X}(t) = \int_{\fF_{0}'}\mathrm{exp}\left(-\frac{\bE[|X_{F}|^{2}] t^{2}}{2}\right) \, d\mu_{X}(F). 
\]
\end{lemma}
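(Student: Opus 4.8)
The plan is to compute the characteristic function $\Phi_X(t) = \bE[e^{itZ_X}]$ directly by writing out the expectation as an integral against the density $f_X(x)$ from Definition \ref{cgauss}, and then swapping the order of the $x$-integral and the $\mu_X$-integral over $\fF_0'$. Explicitly, I would start from
\[
\Phi_X(t) = \int_{\bR} e^{itx} f_X(x)\, dx = \int_{\bR} e^{itx} \left( \int_{\fF_0'} \frac{1}{\sqrt{2\pi \bE[|X_F|^2]}} \exp\left(-\frac{x^2}{2\bE[|X_F|^2]}\right) d\mu_X(F) \right) dx.
\]
The goal is to exchange the two integrals so that the inner integral becomes the $x$-integral of $e^{itx}$ against a single centered Gaussian density with variance $\sigma_F^2 := \bE[|X_F|^2]$.

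Once the interchange is justified, the core computation is the standard evaluation of the characteristic function of a centered normal distribution, namely
\[
\int_{\bR} e^{itx} \frac{1}{\sqrt{2\pi \sigma_F^2}} \exp\left(-\frac{x^2}{2\sigma_F^2}\right) dx = \exp\left(-\frac{\sigma_F^2 t^2}{2}\right),
\]
which follows by completing the square in the exponent and using the Gaussian integral. Substituting $\sigma_F^2 = \bE[|X_F|^2]$ and carrying this through the outer $\mu_X$-integral yields exactly the claimed formula
\[
\Phi_X(t) = \int_{\fF_0'} \exp\left(-\frac{\bE[|X_F|^2]\, t^2}{2}\right) d\mu_X(F).
\]

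The only genuine obstacle is rigorously justifying the interchange of the order of integration, for which I would invoke the Fubini--Tonelli theorem. The clean way is to first apply Tonelli to the modulus: since $|e^{itx}| = 1$, the absolute value of the integrand is just the product density, and its double integral over $\bR \times \fF_0'$ equals $\int_{\fF_0'} 1\, d\mu_X(F) = \mu_X(\fF_0')$, which is finite because $\mu_X$ is a probability measure (and $\mu_X(\fF \setminus \fF_0') = 0$). This integrability certificate then licenses Fubini's theorem for the complex-valued integrand $e^{itx}$ times the Gaussian density, allowing the swap. A minor point worth noting is that the formula is interpreted for those $F$ with $\bE[|X_F|^2] > 0$; the degenerate case $\bE[|X_F|^2] = 0$ corresponds to a point mass at $0$, whose characteristic function is the constant $1$, which is consistent with the limiting value $\exp(0) = 1$ of the displayed formula, so it causes no difficulty.
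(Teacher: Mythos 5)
Your proposal is correct and follows essentially the same route as the paper's proof: expand $\Phi_{X}(t)$ against the density $f_{X}$, justify the interchange of the $x$-integral and the $\mu_{X}$-integral by Fubini--Tonelli using $|e^{itx}| = 1$ and $\mu_{X}(\fF_{0}') = 1 < \infty$, then evaluate the inner integral via the standard characteristic function of a centered Gaussian. Your closing remark on the degenerate case $\bE[|X_{F}|^{2}] = 0$ is a small point the paper's proof passes over silently, but it does not change the argument.
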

\begin{proof}
By definition, we have 
\begin{align*}
\Phi_{X}(t) &= \bE[e^{itZ_{X}}] 
\\
&= \int_{\bR} e^{itx}f_{X}(x) \, dx 
\\
&= \int_{\bR} \int_{\fF_{0}'} e^{itx} \frac{1}{\sqrt{2\pi \bE[|X_{F}|^{2}]}}\exp\left(-\frac{x^{2}}{2 \bE[|X_{F}|^{2}]}\right) \, d\mu_{X}(F) \, dx. 
\end{align*}
Since 
\begin{align*}
&\int_{\fF_{0}'} \int_{\bR}\left| e^{itx} \frac{1}{\sqrt{2\pi\rho_{2}(X_{F})}}\exp\left(-\frac{x^{2}}{2\rho_{2}(X_{F})}\right) \right|  \, dx \, d\mu_{X}(F) 
\\
&= \int_{\fF_{0}'} \int_{\bR}\frac{1}{\sqrt{2\pi \bE[|X_{F}|^{2}]}}\exp\left(-\frac{x^{2}}{2 \bE[|X_{F}|^{2}]}\right)  \, dx \, d\mu_{X}(F)
\\
&= \int_{\fF_{0}'}1 \, d\mu_{X}(F) 
\\
&=1 < \infty, 
\end{align*}
Fubini--Tonelli theorem implies that 
\begin{align*}
 \Phi_{X}(t) &= \int_{\bR} \int_{\fF_{0}'} e^{itx} \frac{1}{\sqrt{2\pi \bE[|X_{F}|^{2}]}}\exp\left(-\frac{x^{2}}{2 \bE[|X_{F}|^{2}]}\right) \, d\mu_{X}(F) \, dx 
 \\
 &=   \int_{\fF_{0}'}  \int_{\bR} e^{itx} \frac{1}{\sqrt{2\pi \bE[|X_{F}|^{2}]}}\exp\left(-\frac{x^{2}}{2 \bE[|X_{F}|^{2}]}\right) \, dx \, d\mu_{X}(F)
 \\
 &= \int_{\fF_{0}'} \mathrm{exp}\left(-\frac{ \bE[|X_{F}|^{2}]t^{2}}{2}\right) \, d\mu_{X}(F). 
 \end{align*}
 Here the last equality follows from the fact that the characteristic function of the Gaussian distribution with mean $0$  and standard deviation $\sigma$ is given by $\mathrm{exp}\left(-\sigma^{2}t^2/2\right)$. 
\end{proof}

\begin{theorem}\label{thm:exchan-central}
Suppose that $\bE[X_{1}] = 0$, $\bE[X_{1}X_{2}] = 0$, and $\bE[X_{1}^{2}]  < \infty$. 
Let 
\[
S_{n} := \frac{1}{\sqrt{n}}\sum_{i=1}^{n}X_{i}. 
\]
Then the sum $S_{n}$ of random variables converges in distribution to the centered  Gaussian mixture random variable $Z_{X}$. 
\end{theorem}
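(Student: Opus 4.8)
The plan is to prove convergence in distribution by establishing pointwise convergence of characteristic functions and then invoking L\'evy's continuity theorem. Concretely, I would verify that $\bE[e^{itS_{n}}] \to \Phi_{X}(t)$ for every $t \in \bR$, where $\Phi_{X}$ is the characteristic function of $Z_{X}$ from Definition \ref{cgauss}; L\'evy's continuity theorem then yields $S_{n} \to Z_{X}$ in distribution.

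First I would rewrite the characteristic function of $S_{n}$ using de Finetti's theorem. For fixed $t \in \bR$, both the real part $x \mapsto \cos(t x /\sqrt{n})$ and the imaginary part $x \mapsto \sin(t x/\sqrt{n})$ of $e^{itx/\sqrt{n}}$ are bounded, hence integrable against the joint distribution, so applying Corollary \ref{cordefi} to each part gives
\[
\bE[e^{itS_{n}}] = \int_{\fF} \bE\!\left[\exp\!\left(\frac{it}{\sqrt{n}}\sum_{i=1}^{n} X_{F,i}\right)\right] d\mu_{X}(F).
\]
Since $\mu_{X}(\fF \setminus \fF_{0}') = 0$, as established above from Lemmas \ref{lemma:third} and \ref{lemma:zero}, this integral may be taken over $\fF_{0}'$.

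Next, for each fixed $F \in \fF_{0}'$ the variables $X_{F,1}, X_{F,2}, \dots$ are i.i.d.\ with mean $0$ and finite variance $\bE[|X_{F}|^{2}]$. The classical Lindeberg--L\'evy central limit theorem therefore gives, for $\mu_{X}$-almost every $F$,
\[
\bE\!\left[\exp\!\left(\frac{it}{\sqrt{n}}\sum_{i=1}^{n} X_{F,i}\right)\right] \xrightarrow[n\to\infty]{} \exp\!\left(-\frac{\bE[|X_{F}|^{2}]\,t^{2}}{2}\right),
\]
so the integrand converges pointwise to the Gaussian characteristic function appearing in Lemma \ref{lemma:phi}.

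Finally I would interchange limit and integral by dominated convergence. Each inner expectation is a characteristic function, hence bounded in modulus by $1$, and the constant $1$ is $\mu_{X}$-integrable because $\mu_{X}$ is a probability measure. Passing to the limit therefore yields
\[
\lim_{n\to\infty}\bE[e^{itS_{n}}] = \int_{\fF_{0}'}\exp\!\left(-\frac{\bE[|X_{F}|^{2}]\,t^{2}}{2}\right) d\mu_{X}(F) = \Phi_{X}(t),
\]
where the last equality is exactly Lemma \ref{lemma:phi}. I expect the only genuinely delicate point to be the measurability of the map $F \mapsto \bE[\exp((it/\sqrt{n})\sum_{i} X_{F,i})]$ that makes the de Finetti integral meaningful, which should follow from the measurability machinery behind Proposition \ref{expmeasurable}; once that is granted, the dominated-convergence step is painless precisely because characteristic functions are uniformly bounded by $1$. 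The conceptual heart of the argument is thus the reduction, via de Finetti, of the exchangeable central limit theorem to a $\mu_{X}$-average of ordinary i.i.d.\ central limit theorems.
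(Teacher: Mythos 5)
Your proposal is correct and takes essentially the same route as the paper's own proof: de Finetti (Corollary \ref{cordefi}) to write $\bE[e^{itS_{n}}]$ as a $\mu_{X}$-integral over $\fF_{0}'$, the classical CLT pointwise in $F$, dominated convergence via the uniform bound $|\bE[e^{itS_{F,n}}]|\leq 1$, and L\'evy's continuity theorem combined with Lemma \ref{lemma:phi}. If anything, you are slightly more careful than the paper, which silently passes over both the real/imaginary-part splitting needed to apply Corollary \ref{cordefi} to a complex integrand and the measurability of $F \mapsto \bE[e^{itS_{F,n}}]$.
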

\begin{proof}
We take a real number $t \in \bR$. 
Let $f_{n}(t) := \bE[e^{itS_{n}}]$ denote the characteristic function of $S_{n}$. 
Then we have 
\begin{align*}
f_{n}(t)  = \int_{\fF_{0}'} \bE[e^{itS_{F, n}}] \, d\mu_{X}(F), 
\end{align*}
where $S_{F, n} := \frac{1}{\sqrt{n}}\sum_{i=1}^{n}X_{F, i}$. 
Since $|\bE[e^{itS_{F, n}}] | \leq 1$ and $\mu_{X}(\fF_{0}') = 1$,  Lebesgue's dominated convergence theorem  implies that 
\[
\lim_{n \to \infty} f_{n}(t)  = \int_{\fF_{0}'} \lim_{n \to \infty}\bE[e^{itS_{F, n}}] \, d\mu_{X}(F). 
\]
By the central limit theorem, for each function $F \in \fF_{0}'$, we have 
\[
\lim_{n \to \infty}\bE[e^{itS_{F, n}}] = \mathrm{exp}\left(-\frac{\bE[X_{F}^{2}]t^{2}}{2}\right), 
\]
and hence $\lim_{n \to \infty} f_{n}(t)  = \Phi_{X}(t)$. 
L\'evy's continuity theorem shows that $S_{n}$ converges in distribution to $Z_{X}$. 
\end{proof}

The following gives a sufficient condition for the distribution of $Z_X$ to be normal.
\begin{corollary}\label{cor:equiv}
Suppose that $\bE[X_{1}] = 0$, $\bE[X_{1}X_{2}] = 0$, and $E[|X_{1}|^{4}] < \infty$. 
Then the following are equivalent. 
\begin{itemize}
\item[(1)] $\mathrm{Cov}(X_{1}^{2}, X_{2}^{2}) = 0$. 
\item[(2)] $\mu_{X}(\{F \in \fF_{0} \mid \bE[|X_{F}|^{2}] = \bE[|X_{1}|^{2}]\}) = 1$. 
\item[(3)] The random variable $Z_{X}$ is normally distributed. 
\end{itemize}
\end{corollary}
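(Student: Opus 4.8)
The plan is to translate all three conditions into statements about the single function $\sigma^{2} \colon \fF_{0}' \to \bR_{\geq 0}$ defined by $\sigma^{2}(F) := \bE[|X_{F}|^{2}]$, regarded as a random variable on the probability space $(\fF, \fA, \mu_{X})$. Write $v := \bE[|X_{1}|^{2}]$. I would first record, via Corollary \ref{cordefi}, that $v = \int_{\fF_{0}'} \sigma^{2}(F)\, d\mu_{X}(F)$, so that $v$ is exactly the $\mu_{X}$-mean of $\sigma^{2}$. With this dictionary in hand, the equivalences $(1) \Leftrightarrow (2)$ and $(2) \Leftrightarrow (3)$ each become a short statement about $\sigma^{2}$, and I would prove them separately rather than cyclically, since there is no direct route from $(3)$ back to $(1)$.

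For $(1) \Leftrightarrow (2)$, the key computation is that $\mathrm{Cov}(X_{1}^{2}, X_{2}^{2})$ equals the $\mu_{X}$-variance of $\sigma^{2}$. The hypothesis $\bE[|X_{1}|^{4}] < \infty$ guarantees, by Cauchy--Schwarz and exchangeability, that $\bE[X_{1}^{2} X_{2}^{2}] < \infty$, so Corollary \ref{cordefi} applies to $f(x_{1},x_{2}) = x_{1}^{2} x_{2}^{2}$; using that $X_{F,1}$ and $X_{F,2}$ are independent under each $F$ gives $\bE[X_{1}^{2} X_{2}^{2}] = \int_{\fF_{0}'} \sigma^{2}(F)^{2} \, d\mu_{X}(F)$ (a finite quantity, so $\sigma^{2} \in L^{2}(\mu_{X})$). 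Since $\bE[X_{1}^{2}] = \bE[X_{2}^{2}] = v$, I obtain
\[
\mathrm{Cov}(X_{1}^{2}, X_{2}^{2}) = \int_{\fF_{0}'} \sigma^{2}(F)^{2} \, d\mu_{X}(F) - \left(\int_{\fF_{0}'}\sigma^{2}(F)\, d\mu_{X}(F)\right)^{2} = \mathrm{Var}_{\mu_{X}}(\sigma^{2}).
\]
A nonnegative $L^{2}$ random variable has zero variance iff it is $\mu_{X}$-a.e. equal to its mean $v$; this is precisely the equivalence of $(1)$ with $(2)$.

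For $(2) \Leftrightarrow (3)$, I would work with the characteristic function $\Phi_{X}$ from Lemma \ref{lemma:phi}. If $(2)$ holds, substituting $\sigma^{2}(F) = v$ into the integral formula of Lemma \ref{lemma:phi} gives $\Phi_{X}(t) = \exp(-vt^{2}/2)$, the characteristic function of $\cN(0,v)$, so $Z_{X}$ is normal: $(2) \Rightarrow (3)$. Conversely, if $Z_{X}$ is normal then it is centered (being a centered Gaussian mixture) with variance $\bE[Z_{X}^{2}] = \int_{\fF_{0}'}\sigma^{2}(F)\, d\mu_{X}(F) = v$, whence $\Phi_{X}(t) = \exp(-vt^{2}/2)$ for all $t$. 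Comparing with Lemma \ref{lemma:phi} and setting $u = t^{2}/2 \geq 0$ yields
\[
\int_{\fF_{0}'} e^{-u\sigma^{2}(F)}\, d\mu_{X}(F) = e^{-uv} \qquad \text{for all } u \geq 0,
\]
i.e. the Laplace transform of the pushforward of $\mu_{X}$ under $\sigma^{2}$ coincides with that of the point mass $\delta_{v}$.

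The main obstacle is deducing $\sigma^{2}(F) = v$ $\mu_{X}$-a.e. from this equality of Laplace transforms, which is the only non-formal step. I would settle it in one of two ways: either invoke uniqueness of the Laplace transform for finite Borel measures on $[0,\infty)$, or, more self-containedly, apply Jensen's inequality to the strictly convex map $s \mapsto e^{-us}$ (for fixed $u > 0$), obtaining $\int e^{-u\sigma^{2}}\, d\mu_{X} \geq e^{-u\int \sigma^{2}\, d\mu_{X}} = e^{-uv}$ with equality forcing $\sigma^{2}$ to be $\mu_{X}$-a.e. constant. Either way $\sigma^{2}(F) = v$ a.e., which is $(2)$; this closes $(3) \Rightarrow (2)$ and completes the proof.
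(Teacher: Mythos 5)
Your proposal is correct, and it departs from the paper's proof at both of the nontrivial steps. The paper proves $(2) \Rightarrow (3)$ exactly as you do, but it handles $(1) \Leftrightarrow (2)$ by citing \cite{BCRT58}, and it closes the loop via $(3) \Rightarrow (1)$ rather than $(3) \Rightarrow (2)$: it Taylor-expands the formula of Lemma \ref{lemma:phi} to fourth order to read off $\bE[Z_X^2] = \bE[X_1^2]$ and $\bE[Z_X^4] = 3\bE[X_1^2X_2^2]$, and then compares with the fourth moment $3\bE[X_1^2]^2$ of a centered normal variable to force $\mathrm{Cov}(X_1^2,X_2^2)=0$. You do two things differently. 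First, you prove $(1) \Leftrightarrow (2)$ from scratch via the identity $\mathrm{Cov}(X_1^2,X_2^2) = \mathrm{Var}_{\mu_X}\bigl(F \mapsto \bE[|X_F|^2]\bigr)$; this is precisely the content of the result the paper imports from \cite{BCRT58}, so your version makes the corollary self-contained, and your Cauchy--Schwarz check $\bE[X_1^2X_2^2] \le \bE[X_1^4] < \infty$ is exactly what legitimizes applying Corollary \ref{cordefi} there. Second, for $(3) \Rightarrow (2)$ you identify the Laplace transform of the pushforward of $\mu_X$ under $F \mapsto \bE[|X_F|^2]$ with that of the point mass at $v$, and conclude either by transform uniqueness or by the equality case of Jensen's inequality for the strictly convex map $s \mapsto e^{-us}$; both closures are valid, and your Tonelli computation $\bE[Z_X^2] = v$ correctly pins down which normal law $Z_X$ must be. As for what each approach buys: the paper's moment comparison is quantitative -- it exhibits the excess kurtosis of $Z_X$ as $3\,\mathrm{Cov}(X_1^2,X_2^2)$, which is exactly the quantity the experiments in Section \ref{sub3} track -- but its term-by-term expansion with an $O(t^6)$ error inside the integral implicitly requires control of $\int \bE[|X_F|^2]^3 \, d\mu_X$, i.e.\ sixth-moment information not granted by the hypotheses, so as written it needs a patch (e.g.\ computing $\bE[Z_X^4]$ directly by Tonelli). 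Your Jensen argument needs only a single $u>0$, strict convexity, and integrability of $F \mapsto \bE[|X_F|^2]$, so it runs entirely under the stated assumptions; indeed in your scheme the fourth-moment hypothesis is needed only for $(1) \Leftrightarrow (2)$, while $(2) \Leftrightarrow (3)$ holds with second moments alone. The only point to make explicit in a write-up is that equality in Jensen forces $F \mapsto \bE[|X_F|^2]$ to be $\mu_X$-a.e.\ equal to its mean $v$, and that this yields statement (2) as phrased because $\mu_X(\fF \setminus \fF_0') = 0$.
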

\begin{proof}
It is proved in the paper \cite{BCRT58} that claims (1) and (2) are equivalent. 
It follows immediately from the definition of $Z_{X}$ that claim (2)  implies claim (3). 

Let us prove that claim (3)  implies claim (1). 
Put $\fF_{0}'' := \{F \in \fF \mid \bE[X_{F}] = 0 \, \textrm{ and } \, \bE[|X_{1}|^{4}]  < \infty \}$. 
Since we assume that  $E[|X_{1}|^{4}] < \infty$,  we have $\mu_{X}(\fF_{0}' \setminus \fF_{0}'') = 0$ by Lemma \ref{lemma:third}. We also have $\bE[|X_{F}|^{2}] = \bE[|X_{1}|^{2}]$ and $\bE[|X_{F}|^{2}]^{2} = \bE[|X_{1}X_{2}|^{2}]$ by Corollary \ref{cordefi} and the independence of $X_F$'s.
Hence  Lemma \ref{lemma:phi} shows that 
\begin{align*}
\Phi_{X}(t) &= \int_{\fF_{0}''} 1 - \frac{ \bE[|X_{F}|^{2}]t^{2}}{2} + \frac{ \bE[|X_{F}|^{2}]^{2}t^{4}}{8} + O(t^{6}) \, d\mu_{X}
\\
&= 1 - \frac{ \bE[|X_{1}|^{2}]t^{2}}{2} + \frac{3\bE[|X_{1}X_{2}|^{2}]t^{4}}{4!} + O(t^{6}), 
\end{align*}
which implies that $\bE[Z_{X}] = 0$, $\bE[Z_{X}^{2}] = \bE[X_{1}^{2}]$, and $\bE[Z_{X}^{4}] = 3\bE[|X_{1}X_{2}|^{2}]$. 
Since we suppose that  $Z_{X}$ is normally distributed, we have 
\[
\Phi_{X}(t) = \mathrm{exp}\left(-\frac{ \bE[|X_{1}|^{2}]t^{2}}{2}\right). 
\]
This fact implies that $3  \bE[X_{1}^{2}]^{2} = \bE[Z_{X}^{4}]  = 3\bE[|X_{1}X_{2}|^{2}]$, i.e., 
$\mathrm{Cov}(X_{1}^{2}, X_{2}^{2}) = \bE[X_{1}^{2}X_{2}^{2}] - \bE[X_{1}^{2}] \bE[X_{2}^{2}] = (\bE[Z_{X}^{4}] - \bE[Z_{X}^{4}])/3 = 0$. 
\end{proof}

\subsection{Convergence theorem for wide-width neural networks}\label{subsec:conv-thm}

\subsubsection*{{\bf Settings}}
Let us consider a neural network with $L$ hidden layers and an activation function $\sigma : \bR \longrightarrow \bR$ which is a measurable map. 
That is a sequence of maps $\sigma \circ g_0 : \bR^{d_0} \longrightarrow \bR^{d_1}, \sigma \circ g_1 : \bR^{d_1} \longrightarrow \bR^{d_2}, \dots, \sigma \circ g_{L-1} : \bR^{d_{L-1}} \longrightarrow \bR^{d_{L}}$ and $g_{L} : \bR^{d_{L}} \longrightarrow \bR^{d_{L+1}}$ (see Definition \ref{def:nn} for details). Here we denote by $\sigma$ the $d_i$ times direct product of $\sigma$ for each $i$, and $d_0, d_{L+1}$ are input, output dimensions respectively. 

In the previous research of wide-width neural networks, one considers a limit of the neural network as $d_1, \dots, d_{L} \to \infty$. Instead of dealing with this limit literally, in the present paper,  we extend the domains of maps $g_i$'s and $\sigma$'s to $\bR^\bN$, and take a limit of their supports. This modification may not have any discrepancy in the setting of previous studies of wide-width neural networks.

For any $\ell \in \{0, \dots, L\}$ and any positive integers $i$ and $j$, we take random variables 
\[
w^{(\ell)}_{i,j} \colon \Omega \longrightarrow \bR \,\,\,  \textrm{ and } \,\,\, b^{(\ell)}_{i} \colon \Omega \longrightarrow \bR
\]
satisfying the following: 
\begin{itemize}
\item[(a)] The set $\{w^{(\ell)}_{i,j}, b^{(\ell)}_{i} \mid i,j \in \bN, 0 \leq \ell \leq L\}$ of the random variables  are mutually independent. 
\item[(b)] For any  $0 \leq \ell \leq L$ and $j \geq 1$, the random variables $w^{(\ell)}_{1, j}, w^{(\ell)}_{2, j}, w^{(\ell)}_{3, j}, \dots$ are identically distributed. 
\item[(c)] For any  $0 \leq \ell \leq L$, the random variables $b^{(\ell)}_{1}, b^{(\ell)}_{2}, b^{(\ell)}_{3}, \dots$ are identically distributed. 
\item[(d)] For any $i, j \in \bN$, we have $\bE[w^{(L)}_{i, j}] = 0$.  
\end{itemize}

\begin{remark}
In previous studies \cite{Matthews18,Matthews18v2}, the random variables $w^{(\ell)}_{i,j}$ and $b^{(\ell)}_{i}$ are assumed to be normally distributed. 
In the present paper, however, we are dealing with a more general case. 
\end{remark}

 For notational simplicity, we put 
 \[
\bm{b}^{(\ell)} := (b_i^{(\ell)})_{i > 0} \colon \Omega \longrightarrow \bR^{\bN}. 
 \]

\begin{definition}[Neural networks]\label{def:nn}\ 
\begin{itemize}
\item[(1)] 
We define a map $\widetilde{\NN}^{(0)} \colon \Omega \times \bR^{d_{0}} \longrightarrow \bR^{\bN}$ by 
\[
\mathrm{pr}_{j} (\widetilde{\NN}^{(0)}(\omega, t_{1}, \dots, t_{d_{0}})) := \frac{1}{\sqrt{d_{0}}} \sum_{i=1}^{d_{0}} t_{i} w^{(0)}_{i,j}(\omega). 
\]
We also define a map $\NN^{(0)} \colon \Omega \times \bR^{d_{0}} \longrightarrow \bR^{\bN}$ by 
\[
\NN^{(0)}(\omega, t) := \sigma_{0}(\widetilde{\mathrm{NN}}^{(0)}(\omega, t) + \bm{b}^{(0)}(\omega)). 
\]

\item[(2)] Let $d \geq 1$ be an integer and $1 \leq \ell \leq L$. 
We define a map $\widetilde{\NN}^{(\ell)}_{d} \colon \Omega \times \bR^{\bN} \longrightarrow \bR^{\bN}$ by 
\[
\mathrm{pr}_{j}(\widetilde{\NN}^{(\ell)}_{d}(\omega, (t_{i})_{i \in \bN})) = \frac{1}{\sqrt{d}}\sum_{i=1}^{d}t_{i}w^{(\ell)}_{i,j}(\omega).  
\]

\item[(3)] For each integer $0 \leq \ell \leq L-1$, we define a map $\NN^{(d_{0}, \dots, d_{j})} \colon \Omega \times \bR^{d_{0}}  \longrightarrow \bR^{\bN}$ by 
\begin{itemize}
\item $\NN^{(d_{0})}(\omega, -) := \widetilde{{\NN}}^{(0)}(\omega, -)$, 
\item 
$
\NN^{(d_{0}, \dots, d_{\ell +1})}(\omega, -) := \widetilde{\NN}^{(\ell+1)}_{d_{\ell + 1}}(\omega, -) \circ \sigma(\NN^{(d_{1}, \dots, d_{\ell})}(\omega, -) + \bm{b}^{(\ell)}(\omega)). 
$
\end{itemize}

\item[(4)]  We call $\NN^{(d_{0}, d_{1}, \dots, d_{L})} \colon \Omega \times \bR^{d_{0}}  \longrightarrow \bR^{\bN}
$ a $(d_{1}, \dots, d_{L})$-layer neural network  stochastic process. 
\end{itemize}
\end{definition}

\begin{remark}
We use the so-called \textit{NTK parametrization}; scaling factor $1/\sqrt{d_l}$ is multiplied to matrix product of activation and weight, instead of taking standard deviation of the weights proportional in $1/\sqrt{d_l}$ (standard parametrization) \cite{Jacot18}. Although the parametrization differs, both parametrization represent the same set of functions and this difference has no effect on prediction.
\end{remark}
\begin{lemma}\label{lemma:nnsp}
For any integer $0 \leq \ell \leq L$, the  neural network stochastic process  
$\NN^{(d_{0}, \dots, d_{\ell})} \colon \Omega  \times \bR^{d_{0}}  \longrightarrow \bR^{\bN}$ is measurable. 
\end{lemma}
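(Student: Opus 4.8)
The plan is to proceed by induction on $\ell$, following the recursive definition of $\NN^{(d_0, \dots, d_\ell)}$ in Definition \ref{def:nn}. Throughout I equip $\bR^{\bN}$ with the product $\sigma$-algebra and $\Omega \times \bR^{d_0}$ with the product of the sample-space $\sigma$-algebra and the Borel $\sigma$-algebra. The key reduction is the standard fact that a map into $\bR^{\bN}$ is measurable if and only if each of its coordinates $\mathrm{pr}_j \circ (-)$ is measurable; this lets me verify measurability one coordinate at a time, where each coordinate is an honest $\bR$-valued function, and then invoke closure of measurable functions under finite sums and products.

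For the base case $\ell = 0$, the map $\NN^{(d_0)} = \widetilde{\NN}^{(0)}$ has $j$-th coordinate $(\omega, t) \mapsto \frac{1}{\sqrt{d_0}} \sum_{i=1}^{d_0} t_i w^{(0)}_{i,j}(\omega)$. Each summand is a product of $\omega \mapsto w^{(0)}_{i,j}(\omega)$, measurable because $w^{(0)}_{i,j}$ is a random variable, with $(\omega, t) \mapsto t_i$, which is continuous hence measurable. Finite sums and products of measurable functions being measurable, each coordinate is measurable, so $\widetilde{\NN}^{(0)}$ is measurable.

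For the inductive step, suppose $\NN^{(d_1, \dots, d_\ell)} \colon \Omega \times \bR^{d_0} \to \bR^{\bN}$ is measurable. I would build $\NN^{(d_0, \dots, d_{\ell+1})}$ as a composition of measurable maps. First, $(\omega, t) \mapsto \NN^{(d_1, \dots, d_\ell)}(\omega, t) + \bm{b}^{(\ell)}(\omega)$ is measurable, being a coordinatewise sum of the inductive hypothesis with the measurable map $\bm{b}^{(\ell)} = (b_i^{(\ell)})_{i>0}$, each $b_i^{(\ell)}$ being a random variable. Applying $\sigma$ coordinatewise preserves measurability because $\sigma$ is assumed measurable, yielding a measurable map $\Psi \colon \Omega \times \bR^{d_0} \to \bR^{\bN}$, $\Psi(\omega, t) := \sigma(\NN^{(d_1,\dots,d_\ell)}(\omega,t) + \bm{b}^{(\ell)}(\omega))$. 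It then remains to apply $\widetilde{\NN}^{(\ell+1)}_{d_{\ell+1}}(\omega, -)$ to $\Psi(\omega, t)$.

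The one point requiring care is that $\widetilde{\NN}^{(\ell+1)}_{d_{\ell+1}}$ depends on $\omega$ as well as on its $\bR^{\bN}$-argument, so this last step is not a literal composition of a function of $t$ alone. To handle it cleanly I would first check that $\widetilde{\NN}^{(\ell+1)}_{d} \colon \Omega \times \bR^{\bN} \to \bR^{\bN}$ is itself jointly measurable: its $j$-th coordinate $\frac{1}{\sqrt{d}} \sum_{i=1}^d t_i w^{(\ell+1)}_{i,j}(\omega)$ is again a finite sum of products of the measurable coordinate projections $(\omega, (t_i)_{i \in \bN}) \mapsto t_i$ with the random variables $w^{(\ell+1)}_{i,j}$. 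Then I would form the measurable map $(\omega, t) \mapsto (\omega, \Psi(\omega, t))$ from $\Omega \times \bR^{d_0}$ to $\Omega \times \bR^{\bN}$, whose two components are the projection $\omega \mapsto \omega$ and $\Psi$, and compose it with $\widetilde{\NN}^{(\ell+1)}_{d_{\ell+1}}$. Since a composition of measurable maps is measurable, the result $\NN^{(d_0, \dots, d_{\ell+1})}$ is measurable, completing the induction. I expect this bookkeeping of the $\omega$-dependence — retaining $\omega$ as an auxiliary coordinate so that the recursive step becomes a genuine composition rather than a pointwise-in-$\omega$ substitution — to be the main, though not deep, obstacle.
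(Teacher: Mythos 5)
Your proof is correct and follows essentially the same route as the paper, whose entire proof is the one-line observation that sums and compositions of measurable maps are measurable; you have simply carried out that argument in full detail (induction on $\ell$, coordinatewise reduction via the product $\sigma$-algebra on $\bR^{\bN}$, and the joint-measurability bookkeeping for the $\omega$-dependence via the map $(\omega,t)\mapsto(\omega,\Psi(\omega,t))$). Your explicit handling of the last point is a genuine refinement of what the paper leaves implicit, but it is the same underlying argument.
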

\begin{proof}
This lemma follows from the facts that the sum of measurable maps is measurable and that the composition of measurable maps is measurable. 
\end{proof}

\begin{definition}
For each real vector $t \in \bR^{d_0}$, we put 
\[
\NN^{(d_{0}, \dots, d_{L})}(-, t) =: (S_{1}^{(d_{0}, \dots, d_{L})}(t), S_{2}^{(d_{0}, \dots, d_{L})}(t), S_{3}^{(d_{0}, \dots, d_{L})}(t), \dots ). 
\]
Note that $S_{i}^{(d_{0}, \dots, d_{L})}(t)$ is a random variable by Lemma \ref{lemma:nnsp}. 
\end{definition}

Fix a real vector $t \in \bR^{d_0}$ and an integer $\ell \geq 1$. 
We put 
\begin{align*}
\sigma(\NN^{(d_{0}, \dots, d_{\ell})}(-, t) + \bm{b}^{(\ell)}(-)) =: (X_{1}^{(d_{0}, \dots, d_{\ell})}, X_{2}^{(d_{0}, \dots, d_{\ell})},  \dots). 
\end{align*}
Then by definition, we have 
\begin{align}\label{eq:S}
S_{j}^{(d_{0}, \dots, d_{L})}(t) = \frac{1}{\sqrt{d_{L}}}\sum_{i=1}^{d_{L}}X_{i}^{(d_{0}, \dots, d_{L-1})}w_{i,j}^{(L)}. 
\end{align}

For notational simplicity, we put $Y_{i} := X_{i}^{(d_{0}, \dots, d_{L-1})}w_{i,j}^{(L)}$, which is independent of $j$ in distribution. Then we have 
\[
S_{j}^{(d_{0}, \dots, d_{L})}(t) = \frac{1}{\sqrt{d_{L}}}\sum_{i=1}^{d_{L}}Y_i. 
\]

\begin{proposition}\label{prop:main}
Suppose that 
$\bE[Y_1^2] < \infty$. 
\begin{itemize}
\item[(1)] The sequence $Y_{1}, Y_{2}, Y_{3}, \dots$ of random variables is exchangeable. 
\item[(2)] $\bE[Y_{1}] = 0$. 
\item[(3)] $\bE[Y_{1}Y_{2}] = 0$. 
\item[(4)]$\bE[|S_{j}^{(d_{0}, \dots, d_{L})}(t)|^2] = \bE[Y_1^2]$.
\end{itemize}
\end{proposition}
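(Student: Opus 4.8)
The plan is to treat part (1), the exchangeability of $Y_1, Y_2, Y_3, \dots$, as the heart of the matter, and then to read off parts (2), (3), (4) quickly from (1) together with the independence structure encoded in conditions (a)--(d) and the mean-zero condition (d) on the last layer.

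First I would isolate the index-dependent randomness in each $Y_i$. Write $Z := (X_k^{(d_0, \dots, d_{L-2})})_{k}$ for the activations feeding the last hidden layer; by construction $Z$ depends only on the weights and biases of layers $0, \dots, L-2$. For each $i$ I would bundle the index-$i$ parameters,
\[
\xi_i := \big( (w_{k,i}^{(L-1)})_{k=1}^{d_{L-1}},\, b_i^{(L-1)},\, w_{i,j}^{(L)} \big),
\]
so that, unwinding Definition \ref{def:nn}, there is a single measurable function $h$ (depending on the fixed input $t$ and on $Z$) with
\[
Y_i = h(Z, \xi_i) = \sigma\Big( \tfrac{1}{\sqrt{d_{L-1}}} \sum_{k=1}^{d_{L-1}} X_k^{(d_0,\dots,d_{L-2})}\, w_{k,i}^{(L-1)} + b_i^{(L-1)} \Big)\, w_{i,j}^{(L)}.
\]
For $L=1$ the vector $Z$ degenerates to the deterministic input $t$ and the argument below is unchanged.

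For part (1) I would argue as follows. By condition (a) the bundles $\xi_1, \xi_2, \dots$ involve pairwise disjoint families of the basic random variables, hence are mutually independent, and each is independent of $Z$ because $Z$ uses only layers $0, \dots, L-2$. By conditions (b) and (c) the bundles are identically distributed, so $(\xi_i)_i$ is i.i.d. Consequently, for any $N$ and any permutation $\tau$ of $\{1, \dots, N\}$, the joint law of $(Z, \xi_{\tau(1)}, \dots, \xi_{\tau(N)})$ equals that of $(Z, \xi_1, \dots, \xi_N)$, since $Z$ is shared and independent of the i.i.d. bundles; applying $h$ coordinatewise yields $(Y_{\tau(1)}, \dots, Y_{\tau(N)}) \overset{d}{=} (Y_1, \dots, Y_N)$, which is exchangeability. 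Equivalently one may condition on $Z$: given $Z$ the $Y_i$ are i.i.d. and hence exchangeable, and a mixture of exchangeable laws is exchangeable; the multiplicative stability packaged in Lemma \ref{lemma:exchangeable} can also be invoked once the activation sequence is known to be exchangeable. I expect the only real work here to be bookkeeping: making the decomposition $Y_i = h(Z, \xi_i)$ precise and checking that the law of $\xi_i$ genuinely does not depend on $i$, which is exactly where conditions (b) and (c) (the within-layer identical distribution of the weights) must be used.

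Parts (2) and (3) then follow from the same decomposition together with condition (d). Since $w_{i,j}^{(L)}$ is independent of $X_i^{(d_0,\dots,d_{L-1})}$ (the latter uses only layers $0, \dots, L-1$) and $\bE[w_{i,j}^{(L)}] = 0$, I obtain $\bE[Y_1] = \bE[X_1^{(d_0,\dots,d_{L-1})}]\,\bE[w_{1,j}^{(L)}] = 0$, the integrability of $X_1^{(d_0,\dots,d_{L-1})}$ coming from $\bE[Y_1^2]<\infty$ via independence of the two factors. Likewise $w_{1,j}^{(L)}$ and $w_{2,j}^{(L)}$ are independent of one another and of $X_1^{(d_0,\dots,d_{L-1})}, X_2^{(d_0,\dots,d_{L-1})}$, so $\bE[Y_1 Y_2] = \bE[X_1^{(d_0,\dots,d_{L-1})} X_2^{(d_0,\dots,d_{L-1})}]\,\bE[w_{1,j}^{(L)}]\,\bE[w_{2,j}^{(L)}] = 0$. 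Finally, for (4) I would expand the square in \eqref{eq:S},
\[
\bE\big[|S_j^{(d_0,\dots,d_L)}(t)|^2\big] = \frac{1}{d_L}\Big( \sum_{i=1}^{d_L} \bE[Y_i^2] + \sum_{i \neq i'} \bE[Y_i Y_{i'}] \Big),
\]
and use (1): every diagonal term equals $\bE[Y_1^2]$ and every off-diagonal term equals $\bE[Y_1 Y_2] = 0$ by (3), so the right-hand side collapses to $\bE[Y_1^2]$, which is the claim.
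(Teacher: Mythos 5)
Your proof is correct, and on part (1) it takes a genuinely different route from the paper's. The paper disposes of (1) in one line: $Y_i$ is viewed as the product of the activation sequence $(X_i^{(d_0,\dots,d_{L-1})})_i$ with the weight sequence $(w_{i,j}^{(L)})_i$, and Lemma \ref{lemma:exchangeable} is cited. That argument leaves two things implicit: first, the exchangeability of $(X_i^{(d_0,\dots,d_{L-1})})_i$ itself is never proved; second, the product statement in Lemma \ref{lemma:exchangeable}, applied to two \emph{different} sequences, is only valid under an additional joint hypothesis such as independence of the two sequences (which does hold here, the last-layer weights being independent of everything below, but which can fail for arbitrary exchangeable sequences). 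Your decomposition $Y_i = h(Z,\xi_i)$, with $Z$ the shared activations of layers up to $L-2$ and $(\xi_i)_i$ the i.i.d.\ index-$i$ parameter bundles independent of $Z$, supplies both missing pieces at once: the joint law of $(Z,\xi_1,\dots,\xi_N)$ is a product measure, permutation-invariant in the $\xi$-coordinates, so the $Y_i$ are conditionally i.i.d.\ given $Z$ and hence exchangeable. The cost is the bookkeeping you acknowledge; the gain is a self-contained argument in which the exchangeability of the hidden-layer activations is derived rather than assumed. Parts (2)--(4) essentially coincide with the paper's proof: the paper performs the same factorizations, additionally recording an AM--GM bound to guarantee integrability of $X_1^{(d_0,\dots,d_{L-1})}X_2^{(d_0,\dots,d_{L-1})}$ in (3), and asserts that (4) ``follows from (3)''; your diagonal/off-diagonal expansion is that same deduction written out, and you correctly note that the diagonal terms also require claim (1). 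One caveat your proof shares with the paper's: both need the law of $w_{k,i}^{(L-1)}$ to be independent of the \emph{second} index $i$, so that the bundles $\xi_i$ are identically distributed, whereas hypothesis (b) literally asserts identical distribution only along the first index; you rightly flag this as exactly where (b) and (c) enter, and the layer-wise reading is clearly the intended one.
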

\begin{proof}
Claim (1) follows from the definition of $Y_{i}$ and Lemma \ref{lemma:exchangeable}. 
Since we assume that $\bE[Y^2_1]<\infty$, claims (2) and (3) follow from easy direct computations; 
\begin{align*}
\bE[Y_{1}] &= \bE[X_{1}^{(d_{0}, \dots, d_{L-1})}] \bE[w_{i,j}^{(L)}] = 0, 
\\ 
\bE[Y_{1}Y_{2}] &= \bE[X_{1}^{(d_{0}, \dots, d_{L-1})}X_{2}^{(d_{0}, \dots, d_{L-1})}] \bE[w_{1,j}^{(L)}] \bE[w_{2,j}^{(L)}] = 0,
\end{align*} 
here we have 
\[
\bE[X_{1}^{(d_{0}, \dots, d_{L-1})}X_{2}^{(d_{0}, \dots, d_{L-1})}] 
\leq \frac{1}{2}\left(\bE[|X_{1}^{(d_{0}, \dots, d_{L-1})}|^2] + \bE[|X_{2}^{(d_{0}, \dots, d_{L-1})}|^2] \right)=  \bE[Y_1^2]/\bE[w_{i,j}^2]
< \infty. 
\]
Claim (4) follows from claim (3).
\end{proof}

\begin{corollary}\label{cor:main}
Suppose that $\bE[|Y_1|^2] < \infty$ and fix a real vector $t \in \bR^{d_0}$.
\begin{itemize}
\item[(1)] The sequence $S_{1}^{(d_{0}, \dots, d_{L})}(t), S_{2}^{(d_{0}, \dots, d_{L})}(t), S_{3}^{(d_{0}, \dots, d_{L})}(t), \dots$ of random variables is exchangeable. 
\item[(2)] $\bE[S_{1}^{(d_{0}, \dots, d_{L})}(t)] = 0$. 
\item[(3)] $\bE[S_{1}^{(d_{0}, \dots, d_{L})}(t)S_{2}^{(d_{0}, \dots, d_{L})}(t)] = 0$. 
\item[(4)] $\bE[|S_{j}^{(d_{0}, \dots, d_{L})}(t)|^2] < \infty$.
\end{itemize}
\end{corollary}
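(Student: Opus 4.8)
The plan is to reduce every assertion to the corresponding statement of Proposition \ref{prop:main} together with the defining relation (\ref{eq:S}), isolating claim (1) as the only part that requires a genuine argument. Throughout I write $S_j := S_j^{(d_0, \dots, d_L)}(t)$ and $X_i := X_i^{(d_0, \dots, d_{L-1})}$, so that by (\ref{eq:S}) we have $S_j = \frac{1}{\sqrt{d_L}}\sum_{i=1}^{d_L} X_i\, w_{i,j}^{(L)}$ for every output index $j$. Two structural features drive the proof: the family $(X_i)_{i}$ is built only from the weights and biases of layers $0, \dots, L-1$, hence by assumption (a) it is independent of the layer-$L$ weights $\{w_{i,j}^{(L)}\}$; and, again by (a) together with (b), the weight vectors $W^{(j)} := (w_{i,j}^{(L)})_{i=1}^{d_L}$ are independent and identically distributed across $j$ (consistent with the paper's observation that $Y_i$ is independent of $j$ in distribution).

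For claim (1) I would argue by permutation invariance of the joint law. Fix $N$ and a permutation $\tau$ of $\{1, \dots, N\}$, and introduce the single measurable map $\Phi \colon \bR^{d_L} \times (\bR^{d_L})^{N} \longrightarrow \bR^{N}$ whose $j$-th coordinate is $\Phi(x, v^{(1)}, \dots, v^{(N)})_j := \frac{1}{\sqrt{d_L}}\sum_{i=1}^{d_L} x_i v^{(j)}_i$. Then $(S_1, \dots, S_N) = \Phi\bigl((X_i)_i, W^{(1)}, \dots, W^{(N)}\bigr)$ while $(S_{\tau(1)}, \dots, S_{\tau(N)}) = \Phi\bigl((X_i)_i, W^{(\tau(1))}, \dots, W^{(\tau(N))}\bigr)$. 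Since $W^{(1)}, \dots, W^{(N)}$ are i.i.d.\ in $j$ they are exchangeable, and since $(X_i)_i$ is independent of the entire weight block, the two inputs $\bigl((X_i)_i, (W^{(j)})_j\bigr)$ and $\bigl((X_i)_i, (W^{(\tau(j))})_j\bigr)$ have the same law; applying the fixed map $\Phi$ then yields $(S_1, \dots, S_N) \stackrel{d}{=} (S_{\tau(1)}, \dots, S_{\tau(N)})$, which is exactly exchangeability. (Equivalently one may condition on the $\sigma$-field generated by layers $0, \dots, L-1$: given it the $X_i$ are constants and the $S_j$ become i.i.d.\ linear functionals of the independent weight vectors $W^{(j)}$, so the conditional law is exchangeable, and independence of the weights from that $\sigma$-field removes the conditioning.)

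Claims (2)--(4) are then short. For (4) I simply invoke Proposition \ref{prop:main}(4), which gives $\bE[|S_j|^2] = \bE[Y_1^2] < \infty$ under the standing hypothesis. For (2), linearity gives $\bE[S_1] = \frac{1}{\sqrt{d_L}}\sum_{i=1}^{d_L} \bE[X_i\, w_{i,1}^{(L)}]$; each summand factors as $\bE[X_i]\,\bE[w_{i,1}^{(L)}]$ by independence of layer $L$ from the earlier layers, and vanishes because $\bE[w_{i,1}^{(L)}] = 0$ by assumption (d). For (3), expand
\[
\bE[S_1 S_2] = \frac{1}{d_L}\sum_{i,k=1}^{d_L} \bE\bigl[X_i X_k\, w_{i,1}^{(L)} w_{k,2}^{(L)}\bigr];
\]
in every term the pair $(w_{i,1}^{(L)}, w_{k,2}^{(L)})$ has distinct output indices $1 \neq 2$, hence is independent (assumption (a)) and independent of $X_i X_k$, so the expectation factors as $\bE[X_i X_k]\,\bE[w_{i,1}^{(L)}]\,\bE[w_{k,2}^{(L)}] = 0$, again by (d). Thus $\bE[S_1 S_2] = 0$.

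The only real obstacle is the distributional bookkeeping behind claim (1): one must ensure that the layer-$L$ weight vectors $W^{(j)}$ are genuinely identically distributed across the output index $j$ (not merely across the input index $i$ for fixed $j$), so that $(W^{(1)}, \dots, W^{(N)})$ is exchangeable and the transfer through $\Phi$ is legitimate. This is precisely where assumptions (a) and (b) — read as making the full layer-$L$ weight family i.i.d., in agreement with the remark that $Y_i$ is independent of $j$ in distribution — are used; once this is granted, all four claims follow immediately from Proposition \ref{prop:main} and the elementary factorizations above.
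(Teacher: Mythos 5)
Your proof is correct, and the comparison here is slightly unusual because the paper states Corollary \ref{cor:main} with no proof at all, presenting it as an immediate consequence of Proposition \ref{prop:main}. For claims (2)--(4) your argument is exactly the implicit one: (4) is literally Proposition \ref{prop:main}(4), and your expansions for (2) and (3) repeat the factor-and-kill computations from the proof of that proposition (with the same level of attention to integrability as the paper itself, which leans on $\bE[Y_1^2]<\infty$). The genuine added value is claim (1), and you are right to isolate it: exchangeability of $S_1, S_2, \dots$ is exchangeability in the \emph{output} index $j$, whereas Proposition \ref{prop:main}(1) gives exchangeability of $Y_1, Y_2, \dots$ in the \emph{hidden} index $i$, so (1) does not follow formally from the proposition, and the paper never writes down the missing step. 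Your argument --- the columns $W^{(j)}=(w^{(L)}_{i,j})_{i}$ are i.i.d.\ across $j$ and independent of $(X_i)_i$, so permuting them leaves the joint law invariant, and the fixed measurable map $\Phi$ transports this invariance to $(S_1,\dots,S_N)$ --- is sound, as is your parenthetical variant (conditionally on the earlier layers the $S_j$ are i.i.d., and a mixture of i.i.d.\ laws is exchangeable, in the spirit of Theorem \ref{thm:finetti}). You also correctly flag the one delicate point: assumption (b) as literally written only makes the weights identically distributed across $i$ for each fixed $j$, so the identical distribution of the columns across $j$ --- needed both for your claim (1) and for the paper's own assertion that the law of $Y_i$ ``is independent of $j$'' --- rests on reading (a) and (b) as making the whole layer-$L$ family i.i.d.; that is clearly the intended reading, and under it your proof is complete.
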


The following is the main result of the present paper.

\begin{theorem}[Convergence theorem]\label{thm:main}
Let $t \in \bR^{d_0}$ be a real vector and  $i \geq 1$ an  integer. 
If $\bE[|Y_1|^2] < \infty$, then the random variable $S_{i}^{(d_{0}, \dots, d_{L})}(t)$  converges weakly to a centered Gaussian mixture random variable as $d_{L} \to \infty$. 
\end{theorem}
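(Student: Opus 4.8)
The plan is to reduce the statement to a single application of the exchangeable central limit theorem, Theorem \ref{thm:exchan-central}, to the sequence $Y_{1}, Y_{2}, Y_{3}, \dots$. First I would fix the output coordinate $i$ and the vector $t$, and set $Y_{k} := X_{k}^{(d_{0}, \dots, d_{L-1})} w_{k,i}^{(L)}$, so that by \eqref{eq:S} we have exactly
\[
S_{i}^{(d_{0}, \dots, d_{L})}(t) = \frac{1}{\sqrt{d_{L}}} \sum_{k=1}^{d_{L}} Y_{k}.
\]
The crucial observation is that the summands $Y_{k}$ do not depend on $d_{L}$: the pre-activations $X_{k}^{(d_{0}, \dots, d_{L-1})}$ are built from the first $L-1$ layers, and the last-layer weights $w_{k,i}^{(L)}$ form a fixed infinite family indexed by $k \in \bN$. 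Hence $\{Y_{k}\}_{k \geq 1}$ is a single fixed infinite exchangeable sequence, and $S_{i}^{(d_{0}, \dots, d_{L})}(t)$ is precisely its normalized partial sum of length $d_{L}$; letting $d_{L} \to \infty$ is therefore letting the number of terms of a \emph{fixed} sequence tend to infinity, which is exactly the regime covered by Theorem \ref{thm:exchan-central}.

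Next I would check that $\{Y_{k}\}$ meets the hypotheses of Theorem \ref{thm:exchan-central}, namely exchangeability together with $\bE[Y_{1}] = 0$, $\bE[Y_{1}Y_{2}] = 0$, and $\bE[Y_{1}^{2}] < \infty$. All of this is already supplied by Proposition \ref{prop:main} under the standing assumption $\bE[|Y_{1}|^{2}] < \infty$. Applying Theorem \ref{thm:exchan-central} with the roles $X_{k} = Y_{k}$ and $n = d_{L}$ then yields that $S_{i}^{(d_{0}, \dots, d_{L})}(t)$ converges in distribution, as $d_{L} \to \infty$, to the centered Gaussian mixture random variable associated with the sequence $\{Y_{k}\}$ through de Finetti's measure $\mu_{Y}$ as in Definition \ref{cgauss}.

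Finally, to handle an arbitrary index $i$, one notes that the limiting distribution is the same for every coordinate. This follows from Corollary \ref{cor:main}(1), since the coordinates $S_{j}^{(d_{0}, \dots, d_{L})}(t)$ are identically distributed; equivalently, the argument above is simply repeated verbatim with the coordinate-$i$ family of last-layer weights, giving the desired weak convergence for each fixed $i \geq 1$.

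I do not expect a serious obstacle, since all the probabilistic work has been front-loaded into Proposition \ref{prop:main} and Theorem \ref{thm:exchan-central}. The one point that genuinely requires care is the structural observation noted above: confirming that the quantity tending to infinity is the length of a fixed exchangeable sequence, and not a parameter that simultaneously alters the summands $Y_{k}$. This is precisely what makes the last-hidden-layer limit fit the exchangeable central limit theorem cleanly, and it is the conceptual reason that sending only the final width to infinity already forces the Gaussian-mixture shape.
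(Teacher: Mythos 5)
Your proposal is correct and takes essentially the same approach as the paper: the paper's own proof is the one-line remark that the result follows from Theorem \ref{thm:exchan-central} together with Corollary \ref{cor:main} (which itself rests on Proposition \ref{prop:main}), i.e.\ precisely the application of the exchangeable central limit theorem to the normalized partial sums of the sequence $Y_1, Y_2, \dots$ that you describe. Your write-up merely makes explicit the structural point the paper leaves implicit --- that the $Y_k$ form a fixed exchangeable sequence not depending on $d_L$, so the limit $d_L \to \infty$ is a limit in the number of summands only.
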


\begin{proof}
This theorem follows from Theorem \ref{thm:exchan-central} and Corollary \ref{cor:main}. 
\end{proof}

\begin{remark}\label{remark:fin-var}
The finite variance condition always appears in any kind of central limit type theorems. 
(So we may not remove this condition.)  
\end{remark}

Note that the assumption that $\bE[Y^2_1]<\infty$ (appears in Theorem \ref{thm:main}) is very weak as shown in Proposition \ref{xinl} below. 
More precisely, when the map $\sigma$ is a typical activation function (e.g., Binary step, ReLU,... ,more generally, a function bounded by a polynomial function) and 
the $n$-th moments of $w_{i,j}^{(\ell)}$ and $b_j$ are finite for all $n$  (e.g., $w_{i,j}^{(\ell)}$ and $b_j$  are normally distributed), then $\bE[Y^2_1]<\infty$.

\begin{proposition}\label{xinl}
Suppose that any $w^{(\ell)}_{i, j}, b^{(\ell)}_j$ have finite $n$-th moments for any $n \in \bN$. Further, we suppose that the activation function $\sigma$ satisfies $|\sigma| \leq |u|$ for some polynomial $u$. Then we have $\bE[|Y_1|^2] < \infty$.
\end{proposition}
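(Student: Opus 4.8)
The plan is to establish, by induction on the layer index $\ell$, the stronger assertion that for every fixed index $j$ the pre-activation $\mathrm{pr}_{j}(\NN^{(d_{0}, \dots, d_{\ell})}(-, t))$ and the post-activation $X_{j}^{(d_{0}, \dots, d_{\ell})}$ each have finite $n$-th moment for \emph{every} $n \in \bN$. Granting this, the proposition is immediate: the random variable $X_{1}^{(d_{0}, \dots, d_{L-1})}$ is a measurable function of the weights and biases of layers $0, \dots, L-1$ only, so by the mutual independence in assumption (a) it is independent of $w_{1,j}^{(L)}$; hence $\bE[|Y_{1}|^{2}] = \bE[|X_{1}^{(d_{0}, \dots, d_{L-1})}|^{2}] \, \bE[|w_{1,j}^{(L)}|^{2}]$, and both factors are finite, the first by the case $\ell = L-1$ of the induction and the second by the finite-moment hypothesis on the weights.

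It is crucial to propagate \emph{all} moments, not merely the second, because the activation bound inflates the moment order from layer to layer: to control $\bE[|\sigma(z)|^{n}] \leq \bE[|u(z)|^{n}]$ one needs the moments of the argument $z$ up to order $n \deg u$, and $z$ is in turn assembled from the previous layer's post-activations, so the order required at the bottom layer compounds roughly like a power of $\deg u$. This is precisely why the hypothesis demands finite moments of all orders, and it is the only genuinely delicate point in the argument.

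For the base case $\ell = 0$, the quantity $\mathrm{pr}_{j}(\NN^{(d_{0})}(-, t)) = \frac{1}{\sqrt{d_{0}}} \sum_{i=1}^{d_{0}} t_{i} w_{i,j}^{(0)}$ is a finite linear combination of random variables each lying in every $L^{n}$, hence lies in every $L^{n}$ by Minkowski's inequality. For the inductive step I would proceed in three moves. First, adding the bias $b_{j}^{(\ell)}$ preserves finiteness of all moments, again by Minkowski. Second, applying $\sigma$: from $|\sigma| \leq |u|$ with $\deg u = d$ one gets $|\sigma(z)|^{n} \leq |u(z)|^{n}$, and the right side is a polynomial in $z$ of degree $dn$, i.e.\ a finite linear combination of powers $z^{k}$ with $k \leq dn$; taking expectations and using that $z$ has all moments finite yields $\bE[|X_{j}^{(d_{0}, \dots, d_{\ell})}|^{n}] < \infty$ for all $n$. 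Third, forming the next pre-activation $\frac{1}{\sqrt{d_{\ell+1}}} \sum_{i=1}^{d_{\ell+1}} X_{i}^{(d_{0}, \dots, d_{\ell})} w_{i,j}^{(\ell+1)}$: each summand is a product of $X_{i}^{(d_{0}, \dots, d_{\ell})}$, a function of layers $0, \dots, \ell$, and the layer-$(\ell+1)$ weight $w_{i,j}^{(\ell+1)}$, which are independent by assumption (a); therefore $\bE[|X_{i}^{(d_{0}, \dots, d_{\ell})} w_{i,j}^{(\ell+1)}|^{n}] = \bE[|X_{i}^{(d_{0}, \dots, d_{\ell})}|^{n}] \, \bE[|w_{i,j}^{(\ell+1)}|^{n}] < \infty$, and a finite sum of such products again has all moments finite. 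This closes the induction.

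The computations in each move are routine applications of Minkowski's inequality and the factorization of moments of independent products. I expect the only real care to be needed in (i) stating the induction hypothesis at full strength, so that the polynomially-bounded activation can be absorbed, and (ii) invoking assumption (a) at the right place to split $\bE[|X_{i}^{(d_{0}, \dots, d_{\ell})} w_{i,j}^{(\ell+1)}|^{n}]$ into a product. Neither presents a genuine obstacle once the hypothesis is recognized as the natural one making the polynomial envelope survive composition across all $L$ layers.
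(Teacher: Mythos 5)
Your proof is correct and follows the same overall strategy as the paper's: induction on the layer index, propagating the invariant that activations have finite moments of \emph{all} orders, with the polynomial envelope $|\sigma|\leq|u|$ handled by expanding $|u(z)|^{n}$ into finitely many powers of $|z|$. The one place where you diverge is the treatment of the products $X_{i}^{(d_{0},\dots,d_{\ell})}w_{i,j}^{(\ell+1)}$ (and of $Y_{1}$ itself): you factor their moments via independence, noting that $X_{i}^{(d_{0},\dots,d_{\ell})}$ is a function of the layer-$0,\dots,\ell$ weights and biases only, hence independent of $w_{i,j}^{(\ell+1)}$ by assumption (a). The paper avoids independence at this point entirely: it packages the induction into the statement that activations lie in $\tilde{L}(\Omega)=\bigcap_{p\in\bN}L^{p}(\Omega)$ and handles products with Corollary \ref{prodl} ($\tilde{L}(\Omega)$ is an algebra), which rests on the polarization identity $XY=\frac{1}{2}((X+Y)^{2}-X^{2}-Y^{2})$ together with the vector-space property (Remark \ref{vector}) and the envelope lemma (Lemma \ref{bddpoly}). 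Both mechanisms are valid here; yours yields exact moment factorizations (the same device the paper itself uses in Proposition \ref{prop:main}) at the cost of having to invoke independence correctly, whereas the paper's algebra argument bounds moments of products of arbitrary, possibly dependent, elements of $\tilde{L}(\Omega)$, which is what lets it dispatch the bias, the products, the sums, and the activation in a single appeal to Corollary \ref{prodl} at each layer. Your observation that propagating only second moments could not survive the polynomial envelope is exactly the reason the paper's invariant is membership in $\tilde{L}(\Omega)$ rather than in $L^{2}(\Omega)$.
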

\begin{proof}
We prove it by induction on $\ell$. Since we have 
\[
X^{(d_0)}_1(t) = \sigma \left(\frac{1}{\sqrt{d_0}}\sum_{i = 1}^{d_0}t_i w^{(0)}_{i, 1} + b^{(0)}_1\right),
\]
we obtain that $X^{(d_0)}_1(t) \in \tilde{L}(\Omega)$ by Corollary \ref{prodl} in supplemental material. Further, we have 
\begin{align*}
X^{(d_0, \dots, d_{\ell+1})}_1(t) &= \sigma \left( \mathrm{ pr}_1(\NN^{(d_0, \dots, d_{\ell+1})})+ b^{(\ell + 1)}_1\right) \\
&= \sigma \left(\frac{1}{\sqrt{d_{\ell + 1}}}\sum_{i = 1}^{d_{\ell + 1}}X^{(d_0, \dots, d_{\ell})}_i(t)w^{(\ell + 1)}_{i, 1} + b^{(\ell + 1)}_1\right),
\end{align*}
we obtain that $X^{(d_0, \dots, d_{\ell+1})}_1(t) \in \tilde{L}(\Omega)$ by Corollary \ref{prodl} in supplemental material and the induction hypothesis. In particular, we obtain $\bE[|Y_1|^2] < \infty$.
\end{proof}

\subsection{Simple verification}

Figure \ref{fig:fig1} shows distributions of outputs for several width of $3$ hidden layers. We can see that the distribution approaches a Gaussian mixture when when $d_3$ gets large. 
Moreover, somewhat surprisingly, when $d_3$ is not sufficiently large, the distribution does not change much even if $d_1$ and $d_2$ get large. 
The most Gauss-like case may be when $d_1, d_2$ and $d_3$ gets large. 
\begin{figure}[H]\label{fig:fig1}
    \centering
    \includegraphics[width=\textwidth]{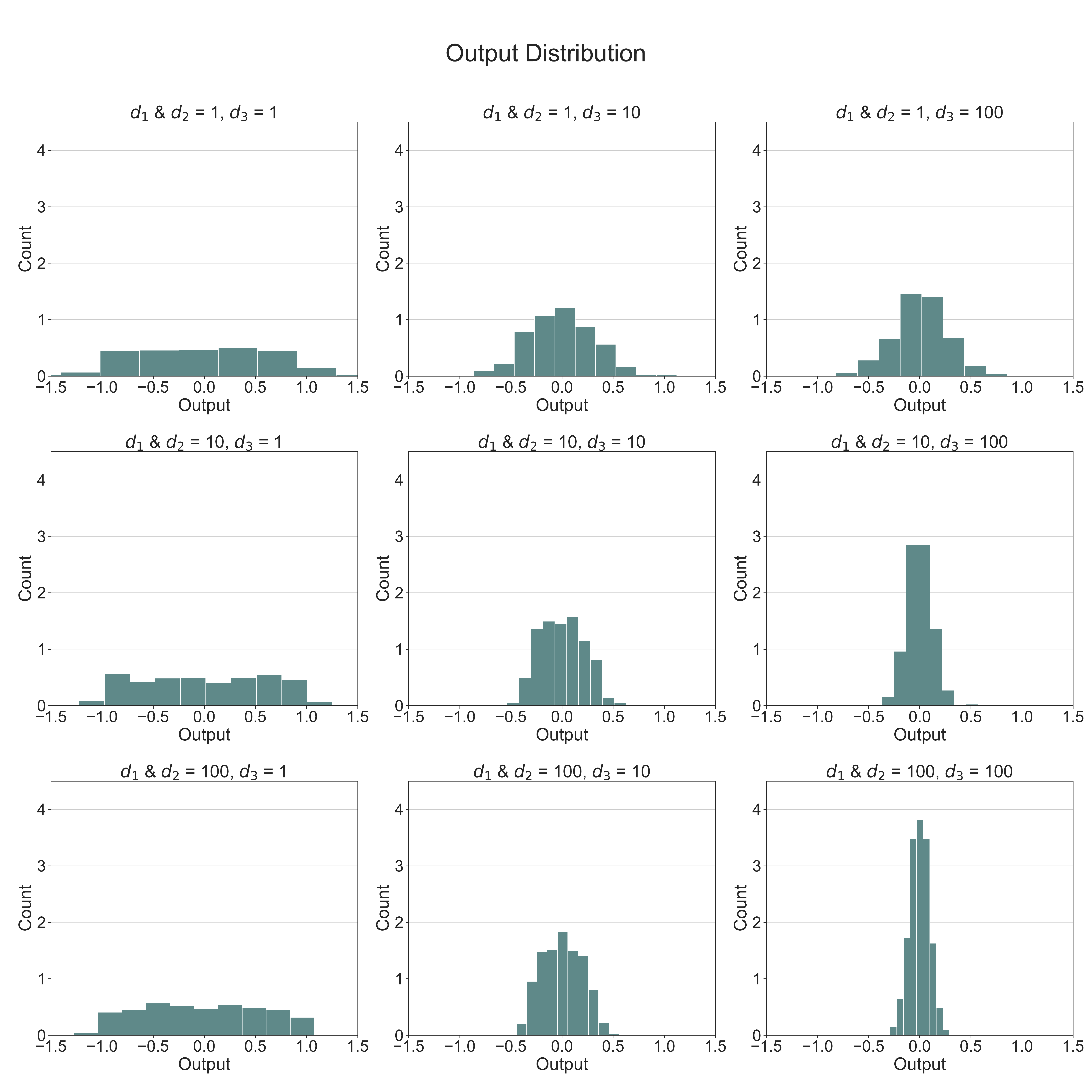}
    \caption{Output distribution of neural networks.}
    \label{fig:output_1_10_100}
\end{figure}

\section{Experiments}\label{section:experiment}
In this section, we empirically study the behavior of convergence of wide-width neural networks as the widths go to infinity. In subsection \ref{sub2}, we compute the difference between a neural network and the normal distribution. The computation result shows that the difference gets smaller as the width of the last hidden layer gets larger. Note that this difference is not necessarily getting to 0. In subsection \ref{sub3}, we verify that the neural network can converge to the normal distribution as widths of the layer other than the last one get large. This result supports our assertion that the difference considered in subsection \ref{sub2} is not necessarily 0. Further, the above experiments give a detailed description of the convergence, namely, the growth of the last hidden layer gets the distribution closer to the Gaussian mixture, and the other layer successively get the Gaussian mixture closer to the normal distribution.

\subsection{difference from the normal distribution}\label{sub2}
To quantify the difference between distributions, we employ the \textit{ kernel two-sample test} method from \cite{Gretton12, Fukumizu07}. We briefly recall this method in the supplemental material. By Proposition \ref{quantity} in the supplemental material, we calculate the \textit{maximum mean discrepancy}, that is the following quantity
\begin{equation}\label{dist3}
    \frac{1}{\sqrt{2\sigma^2 + 1}} - \frac{2}{n}\sum_{i=1}^{n}\frac{1}{\sqrt{\sigma^2 + 1}} \exp\left(\frac{y_i^2}{2(\sigma^2 + 1)}\right) + \frac{1}{n^2}\sum_{i, j = 1}^{n}\exp\left(-\frac{(y_i-y_j)^2}{2}\right),
\end{equation}
where $\sigma^2$ is the variance of the samples $\mathcal{Y} = \{y_n\}_n$ from the distribution of the neural network.
The neural network we deal with is in the following setting : 
\begin{itemize}
    \item Setting of the neural network
        \begin{itemize}
            \item Layers: input (1-dimensional), 3 hidden layers (width $d_1, d_2$ and $d_3$ respectively), output (1-dimensional), 
            \item activation function: ReLU, 
            \item input data: generated from the standard normal distribution $N(0, 1)$.
            \item weight initialization: we sample weights $w_{i, j}^{(l)}$ from the uniform distribution on a interval $U(-1/\sqrt{d_{L}}, 1/\sqrt{d_{L}})$, following Pytorch default initialization \cite{Paszke19}. 
       \end{itemize}
\end{itemize}
The computation results are in the Figure \ref{fig:mmd}.
\begin{figure}[h]
    \centering
    \includegraphics[width=0.8\textwidth]{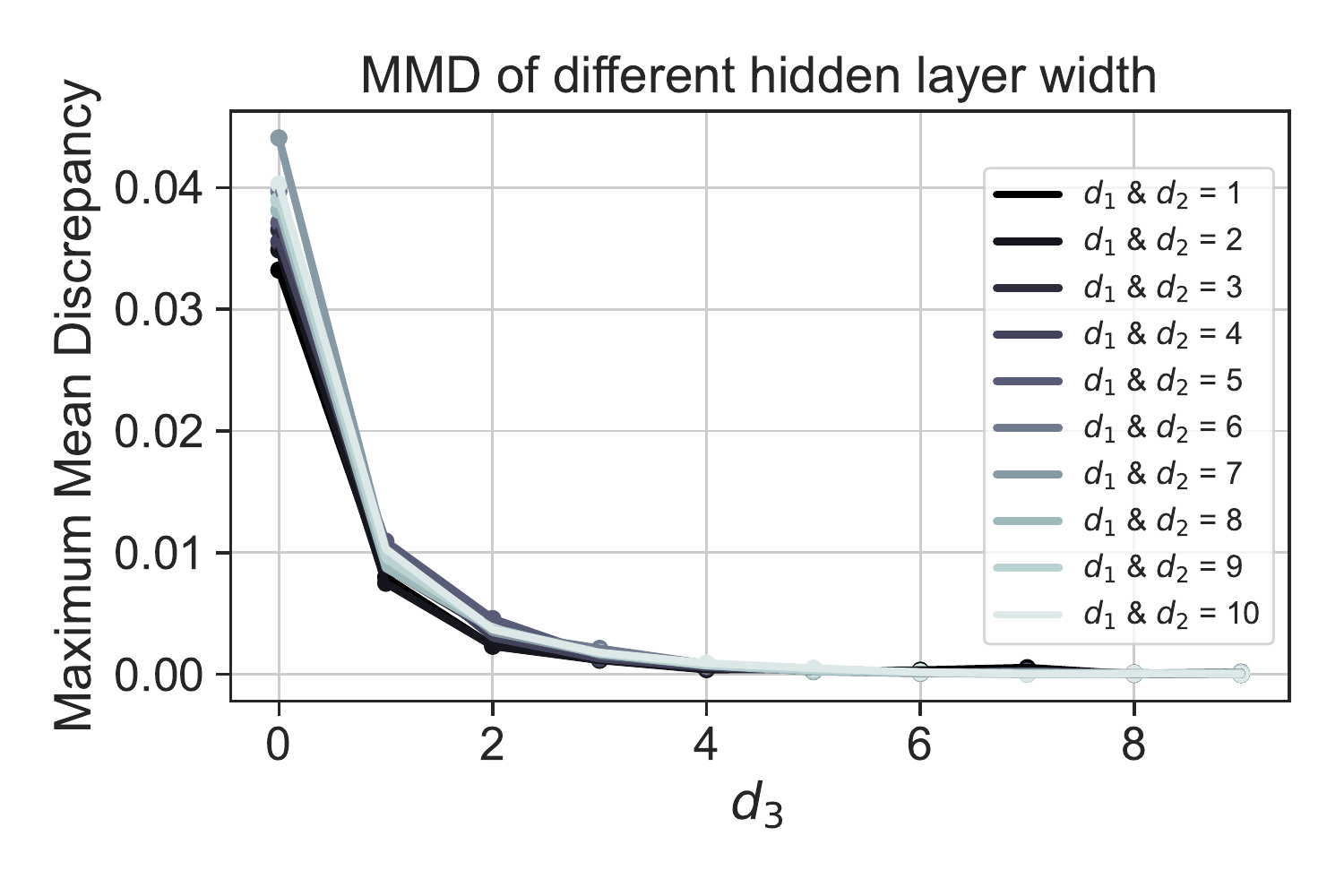}
    \caption{Maximum mean discrepancy between output distribution and Gaussian distribution}
    \label{fig:mmd}
\end{figure}

\underline{\bf observation :} In the above computation, we find that the quantity (\ref{dist3}) gets small, namely approaching the normal distribution, independently of $d_1, d_2$ as $d_3$ gets large.

\subsection{convergence to the normal distribution}\label{sub3}
We use the same notations as in \S\ref{subsec:conv-thm}. 
We have shown in Theorem \ref{thm:main} that  $(1/\sqrt{d_L}) \sum_{i=1}^{d_L}Y_i$ converges weakly to a Gaussian mixture distribution. 
Moreover, by Corollary \ref{cor:equiv}, we see that this Gaussian mixture distribution is Gaussian if and only if $\mathrm{Cov}(Y_1^2, Y_2^2) = 0$. 
In this subsection, we empirically compute the value $\mathrm{Cov}(Y_1^2, Y_2^2)$ in the same setting of neural network in subsection \ref{sub2}. 
Note that, in this setting, the value $\mathrm{Cov}(Y_1^2, Y_2^2)$ depends on the dimensions $d_1$ and $d_2$ of the first and second hidden layers. 
As realised in previous studies, the limit 
\[
\lim_{d_1, d_2, d_3 \to \infty}\frac{1}{\sqrt{d_L}} \sum_{i=1}^{d_L}Y_i
\]
converges weakly to a Gaussian distribution. 
Hence it should be happened that 
the Gaussian mixture distribution tends to a Gaussian distribution as $d_1$ and $d_2$ go to $\infty$, which means that 
\[
\lim_{d_1, d_2 \to \infty}\mathrm{Cov}(Y_1^2, Y_2^2) = 0. 
\]

To check how $\mathrm{Cov}(Y_1^2, Y_2^2)$ approaches $0$ when each hidden layer grows, we empirically compute the covariance of squared outputs of the last hidden layers. The setting of the neural network is the same as that of subsection \ref{sub2}. Unless otherwise noted, the width of the last hidden layer is $2$ and the widths of the other hidden layer are $1$. We sample neural networks $1000$ times and compute the covariance between $Y_1^2$ and $Y_2^2$ on these samples. We compare three cases, i) only $d_1$ grows, ii) only $d_2$ grows, iii) and both $d_1$ and $d_2$ grow, to see how each hidden layer width affects the result. 

We can see from the Figures \ref{fig:covariance-1}, \ref{fig:covariance-2}, and \ref{fig:covariance-12} that 
$\mathrm{Cov}(Y_1^2, Y_2^2)$ actually approaches 0 as $d_1$ and $d_2$ go to infinity, as expected. 
Moreover, for $\mathrm{Cov}(Y_1^2, Y_2^2)$ to approach 0, it is necessary that $d_1$ and $d_2$ go to infinity. 
Even if only one of $d_1$ and $d_2$ goes to infinity, $\mathrm{Cov}(Y_1^2, Y_2^2)$ does not approach 0. 
In particular, one can say that the Gaussian mixture distribution is not Gaussian (in general). 

\begin{figure}[H]
    \centering
    \includegraphics[width=0.8\textwidth]{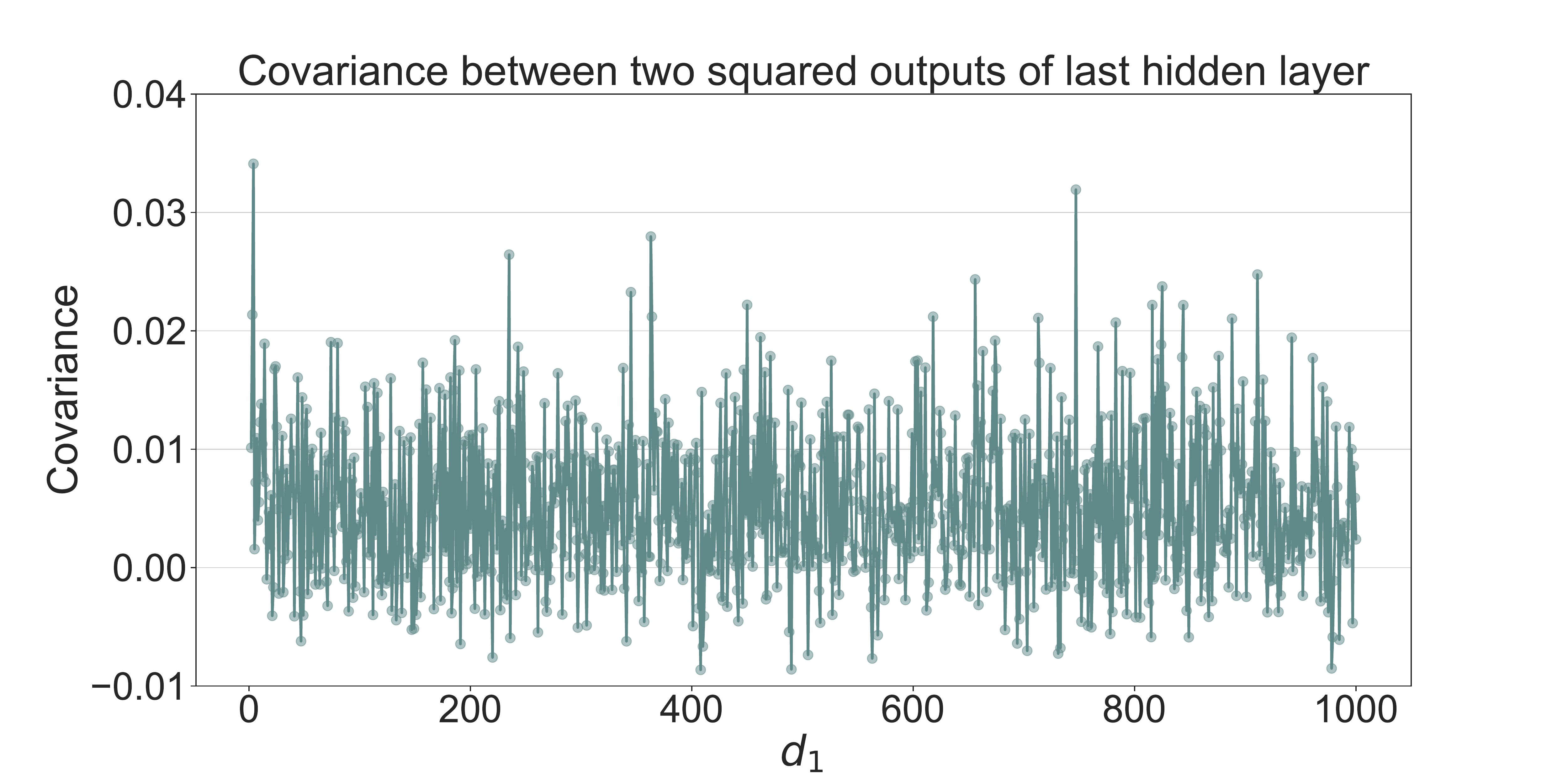}
    \caption{Covariance between two squared outputs of last hidden layer. Only the first hidden layer width grows.}
    \label{fig:covariance-1}
\end{figure}

\begin{figure}[H]
    \centering
    \includegraphics[width=0.8\textwidth]{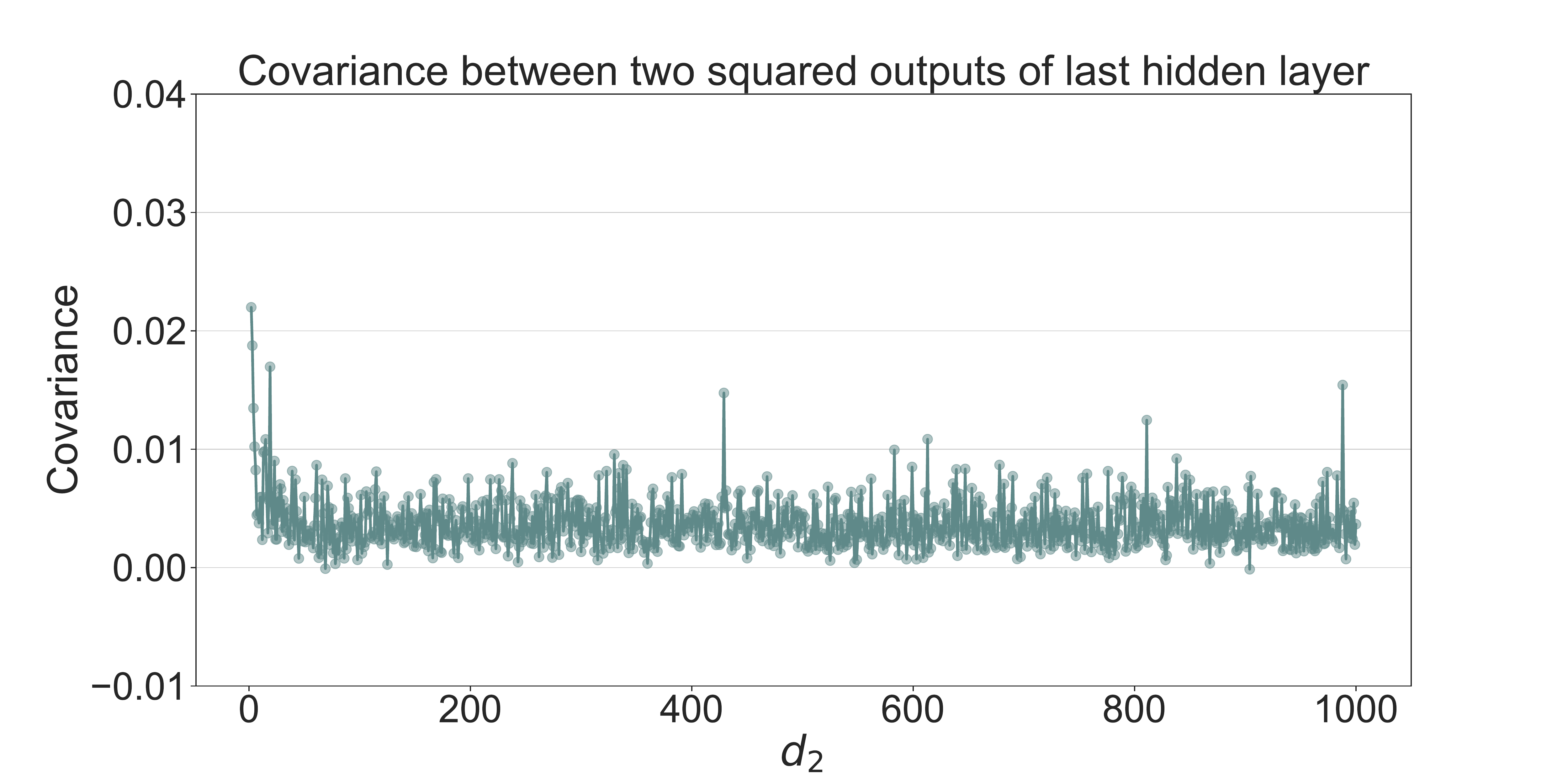}
    \caption{Covariance between two squared outputs of last hidden layer. Only the second hidden layer width grows.}
    \label{fig:covariance-2}
\end{figure}

\begin{figure}[H]
    \centering
    \includegraphics[width=0.8\textwidth]{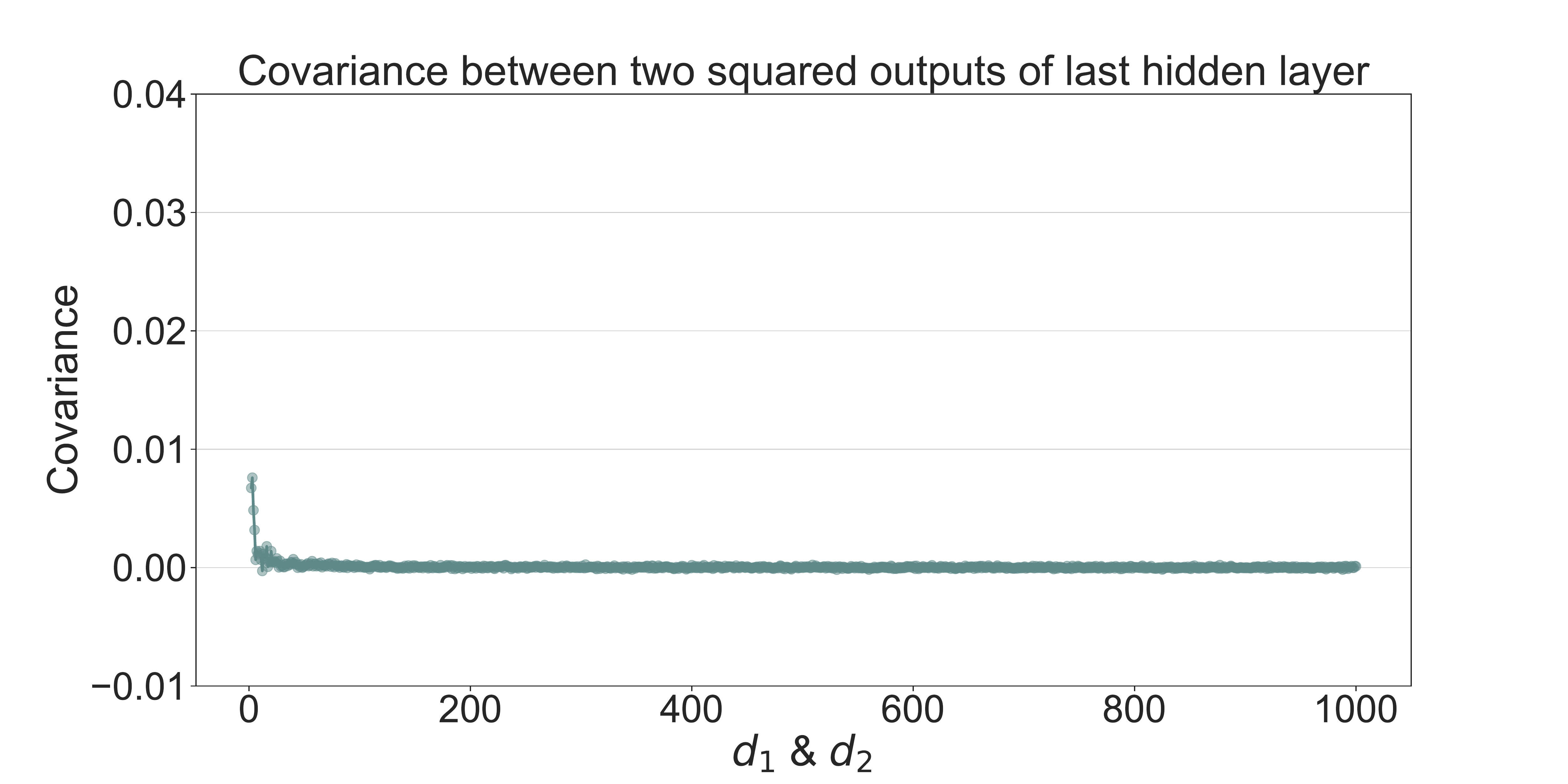}
    \caption{Covariance between two squared outputs of last hidden layer. Both the first and second hidden layer width grow.}
    \label{fig:covariance-12}
\end{figure}

\bibliographystyle{alpha}
\bibliography{ref}

\newcommand{\etalchar}[1]{$^{#1}$}
\begin{thebibliography}{dGMHR{\etalchar{+}}18b}

\bibitem[APH20]{Agrawal20}
Devanshu Agrawal, Theodore Papamarkou, and Jacob Hinkle.
\newblock {Wide Neural Networks with Bottlenecks are Deep Gaussian Processes}.
\newblock {\em Journal of Machine Learning Research}, 21(175), 2020.

\bibitem[BCRT58]{BCRT58}
J.~R. Blum, H.~Chernoff, M.~Rosenblatt, and H.~Teicher.
\newblock {Central Limit Theorems for Interchangeable Processes}.
\newblock {\em Canadian J. Math.}, 10:222--229, 1958.

\bibitem[BFFP21]{Bracale21}
Daniele Bracale, Stefano Favaro, Sandra Fortini, and Stefano Peluchetti.
\newblock {Large-width Functional Asymptotics for Deep Gaussian Neural
  Networks}.
\newblock In {\em International Conference on Learning Representations}, 2021.

\bibitem[dF37]{Finetti37}
Bruno de~Finetti.
\newblock {La pr\'{e}vision : ses lois logiques, ses sources subjectives}.
\newblock {\em Ann. Inst. H. Poincar\'{e}}, 7(1):1--68, 1937.

\bibitem[dGMHR{\etalchar{+}}18a]{Matthews18}
Alexander~G. de~G.~Matthews, Jiri Hron, Mark Rowland, Richard~E. Turner, and
  Zoubin Ghahramani.
\newblock {Gaussian Process Behaviour in Wide Deep Neural Networks}.
\newblock In {\em International Conference on Learning Representations}, 2018.

\bibitem[dGMHR{\etalchar{+}}18b]{Matthews18v2}
Alexander~G. de~G.~Matthews, Jiri Hron, Mark Rowland, Richard~E. Turner, and
  Zoubin Ghahramani.
\newblock {Gaussian Process Behaviour in Wide Deep Neural Networks}.
\newblock {\em arXiv preprint arXiv:1804.11271}, 2018.

\bibitem[FFP21]{Favaro21}
Stefano Favaro, Sandra Fortini, and Stefano Peluchetti.
\newblock {Deep Stable Neural Networks: Large-width Asymptotics and Convergence
  Rates}.
\newblock {\em arXiv preprint arXiv:2108.02316}, 2021.

\bibitem[FGSS07]{Fukumizu07}
Kenji Fukumizu, Arthur Gretton, Xiaohai Sun, and Bernhard Sch\"{o}lkopf.
\newblock {Kernel measures of conditional dependence}.
\newblock {\em Advances in neural information processing systems}, 20, 2007.

\bibitem[GARA19]{Garriga-Alonso18}
Adri^^c3^^a0 Garriga-Alonso, Carl~Edward Rasmussen, and Laurence Aitchison.
\newblock {Deep Convolutional Networks as shallow Gaussian Processes}.
\newblock In {\em International Conference on Learning Representations}, 2019.

\bibitem[GBR{\etalchar{+}}12]{Gretton12}
Arthur Gretton, Karsten~M Borgwardt, Malte~J Rasch, Bernhard Sch\"{o}lkopf, and
  Alexander Smola.
\newblock {A kernel two-sample test}.
\newblock {\em The Journal of Machine Learning Research}, 13(1):723--773, 2012.

\bibitem[Han21]{Hanin21}
Boris Hanin.
\newblock {Random Neural Networks in the Infinite Width Limit as Gaussian
  Processes}.
\newblock {\em arXiv preprint arXiv:2107.01562}, 2021.

\bibitem[Iga98]{Igari98}
Satoru Igari.
\newblock {\em Real Analysis: With An Introduction To Wavelet Theory}.
\newblock American Mathematical Society, 1998.

\bibitem[JGH18]{Jacot18}
Arthur Jacot, Franck Gabriel, and Cl{\'e}ment Hongler.
\newblock {Neural Tangent Kernel: Convergence and Generalization in Neural
  Networks}.
\newblock In {\em Advances in neural information processing systems}, 2018.

\bibitem[Klu21]{Klukowski21}
Adam Klukowski.
\newblock {Rate of Convergence of Polynomial Networks to Gaussian Processes}.
\newblock {\em arXiv preprint arXiv:2111.03175}, 2021.

\bibitem[LSdP{\etalchar{+}}18]{Lee18}
Jaehoon Lee, Jascha Sohl-dickstein, Jeffrey Pennington, Roman Novak, Sam
  Schoenholz, and Yasaman Bahri.
\newblock {Deep Neural Networks as Gaussian Processes}.
\newblock In {\em International Conference on Learning Representations}, 2018.

\bibitem[Nea96]{Neal96}
Radford~M Neal.
\newblock {Priors for Infinite Networks}.
\newblock In {\em Bayesian Learning for Neural Networks}, pages 29--53.
  Springer, 1996.

\bibitem[NXB{\etalchar{+}}19]{Novak19}
Roman Novak, Lechao Xiao, Yasaman Bahri, Jaehoon Lee, Greg Yang, Daniel~A.
  Abolafia, Jeffrey Pennington, and Jascha Sohl-dickstein.
\newblock {Bayesian Deep Convolutional Networks with Many Channels are Gaussian
  Processes}.
\newblock In {\em International Conference on Learning Representations}, 2019.

\bibitem[PGM{\etalchar{+}}19]{Paszke19}
Adam Paszke, Sam Gross, Francisco Massa, Adam Lerer, James Bradbury, Gregory
  Chanan, Trevor Killeen, Zeming Lin, Natalia Gimelshein, Luca Antiga, et~al.
\newblock Pytorch: An imperative style, high-performance deep learning library.
\newblock {\em Advances in neural information processing systems}, 32, 2019.

\bibitem[Yan19]{Yang19}
Greg Yang.
\newblock {Wide Feedforward or Recurrent Neural Networks of Any Architecture
  are Gaussian Processes}.
\newblock In {\em Advances in Neural Information Processing Systems}, 2019.

\end{thebibliography}

\newpage

\newpage

\section{Supplemental Material}

\subsection{ Measurable space of distribution functions}

\begin{remark}\label{fundamental}
The following are fundamental facts of probability theory.
\begin{itemize}
    \item For every probability measure $\mu$ on $\bR$, we can construct a one-dimensional distribution function $F_\mu$ by $F_\mu(t) = \mu((-\infty, t])$. On the other hand, given a one-dimensional distribution function $F$, we can construct a probability measure $\mu_F$ on $\bR$ satisfying $F_{\mu_{F}} = F$. 
    Since we also have $\mu_{F_\mu} = \mu$, this correspondence is a bijection. 
     Note that we can also construct a $\bR$-valued random variable whose associated measure is $\mu_F$, which we denote by $X_F$.  
    \item Given a measurable space $\Omega$, we can equip the countably infinite product $\Omega^{\bN}$ with the smallest family of measurable sets such that each projection $\Omega^{\bN} \longrightarrow \Omega$ is measurable. We denote such a measurable space by $\Omega^{\bN}$. In particular, $\bR^{\bN}$ is a measurable space.
    \item Given a random variable $X : \Omega \longrightarrow \bR$, we can construct a probability measure on $\Omega^{\bN}$ such that random variables $X_1, X_2, \ldots : \Omega^{\bN} \longrightarrow \Omega \longrightarrow \bR$ are i.i.d. and their distributions are same as $X$'s. The existence of a sequence in Definition \ref{dfdist} (4) is guaranteed by this fact. 
    \end{itemize}
\end{remark}

\begin{definition}
Let $\cM_{1}(\bR)$ be the set of probability measures on the Borel space $(\bR, \cB(\bR))$. We endow
$\cM_{1}(\bR)$ the $\sigma$-field generated by all projection maps $\pi_A ; \mu \mapsto \mu(A)$, $A \in \cB(\bR)$. 
\end{definition}
\begin{lemma}\label{generating}
Every projection $\pi_A \colon \cM_{1}(\bR) \longrightarrow \bR$ is measurable if and only if the projection $\pi_{(-\infty, t]}$ is measurable for all $t \in \bR$.
\end{lemma}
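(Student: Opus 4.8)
The plan is to read this as the standard ``generating class'' statement: for a fixed $\sigma$-field on $\cM_1(\bR)$, the whole family $\{\pi_A \mid A \in \cB(\bR)\}$ is measurable precisely when its subfamily $\{\pi_{(-\infty,t]} \mid t \in \bR\}$ is, equivalently $\sigma(\pi_A : A \in \cB(\bR)) = \sigma(\pi_{(-\infty,t]} : t \in \bR)$. The forward implication is immediate, since each half-line $(-\infty,t]$ lies in $\cB(\bR)$, so measurability of every $\pi_A$ trivially includes measurability of every $\pi_{(-\infty,t]}$. The real content is the converse, which I would establish by a Dynkin $\pi$--$\lambda$ argument. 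Assuming each $\pi_{(-\infty,t]}$ is measurable, I introduce the collection
\[
\cD := \{A \in \cB(\bR) \mid \pi_A \colon \cM_1(\bR) \longrightarrow \bR \text{ is measurable}\},
\]
and the goal becomes to show $\cD = \cB(\bR)$.

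First I would record that $\cP := \{(-\infty,t] \mid t \in \bR\}$ is a $\pi$-system, being closed under finite intersection via $(-\infty,s]\cap(-\infty,t] = (-\infty,\min(s,t)]$, and that $\cP$ generates $\cB(\bR)$; by hypothesis $\cP \subseteq \cD$. The crux is then to verify that $\cD$ is a $\lambda$-system, which rests on three elementary identities for a probability measure $\mu$: (i) $\pi_{\bR}(\mu) = \mu(\bR) = 1$ is constant, hence measurable, so $\bR \in \cD$; (ii) for $A,B \in \cD$ with $A \subseteq B$ one has $\pi_{B \setminus A}(\mu) = \mu(B) - \mu(A) = \pi_B(\mu) - \pi_A(\mu)$, a difference of measurable functions, so $B \setminus A \in \cD$; and (iii) for an increasing sequence $A_n \in \cD$ with $A_n \uparrow A$, continuity from below gives $\pi_A(\mu) = \lim_n \mu(A_n) = \lim_n \pi_{A_n}(\mu)$, a pointwise limit of measurable functions, so $A \in \cD$. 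These are exactly the defining closure properties of a $\lambda$-system.

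With that in place, the conclusion is purely formal: since $\cD$ is a $\lambda$-system containing the $\pi$-system $\cP$, Dynkin's $\pi$--$\lambda$ theorem gives $\cD \supseteq \sigma(\cP) = \cB(\bR)$, and as $\cD \subseteq \cB(\bR)$ by construction we obtain $\cD = \cB(\bR)$, i.e.\ every $\pi_A$ is measurable. I do not expect a genuine obstacle here; the argument is a routine application of the $\pi$--$\lambda$ framework, and the only points requiring mild care are using $A \subseteq B$ to split $\mu(B\setminus A)$ additively in step (ii) and invoking continuity from below of measures in step (iii), both of which preserve measurability through algebraic operations and pointwise limits.
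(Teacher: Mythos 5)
Your proof is correct, but it takes a genuinely different route from the paper's. The paper argues ``from the generators outward'': it observes $\pi_{\bR \setminus A} = 1 - \pi_A$, claims it then suffices to handle $\pi_{\cap_i A_i}$ and $\pi_{\cup_i A_i}$ where the $A_i$ are half-lines or their complements, and obtains these as pointwise limits of $\pi_{\cap_i^k A_i}$ and $\pi_{\cup_i^k A_i}$ using its lemma that pointwise limits of measurable maps are measurable (Lemma \ref{ptwconverge}). You argue ``from the good sets inward'': you form the class $\cD$ of all Borel sets $A$ for which $\pi_A$ is measurable, verify that $\cD$ is a $\lambda$-system (constancy of $\pi_\bR$, additivity on proper differences, continuity from below for increasing limits), note that the half-lines form a $\pi$-system generating $\cB(\bR)$, and invoke Dynkin's $\pi$--$\lambda$ theorem. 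The comparison is in fact favorable to you: the paper's reduction is not literally sufficient, since a single round of complementation and countable unions/intersections applied to half-lines produces only a small fragment of $\cB(\bR)$ (essentially intervals and their complements), not the whole Borel $\sigma$-field, which requires transfinitely iterated operations; and one cannot simply declare $\cD$ a $\sigma$-field either, because $\mu(A \cap B)$ is not a function of $\mu(A)$ and $\mu(B)$, so closure of $\cD$ under intersections is exactly what fails to be checkable directly. Your $\pi$--$\lambda$ argument is the standard device that resolves precisely this difficulty: intersection-stability is demanded only of the generators (where it is trivial), and $\cD$ need only satisfy the $\lambda$-system axioms (where it does). In short, the paper's proof buys elementarity and self-containedness but has a genuine gap at its ``it suffices'' step, while your proof invokes one standard named theorem and is airtight; your version is the rigorous completion of what the paper's proof is implicitly attempting.
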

\begin{proof}
Suppose that $\pi_{(-\infty, t]}$ is measurable for all $t \in \bR$, and we will show that  $\pi_A$ is measurable for any $A \in \cB(\bR)$. Note that $\pi_{\bR \setminus A} = 1 - \pi_A$ is measurable when $\pi_A$ is measurable. Hence it suffices to show that $\pi_{\cap_iA_i}$ and $\pi_{\cup_i A_i}$ are measurable when $A_i$'s are of the form $(-\infty, t_i]$ or $(t_i, \infty)$ for $i \in \bN$, since $\cB(\bR)$ is generated by $\{(-\infty, t]\}_{t\in \bR}$. It is obvious that $\pi_{\cap_i^kA_i}$ and $\pi_{\cup_i^k A_i}$ are measurable for any $k \in \bN$. Since the sequences $\{\pi_{\cap_i^kA_i}\}_{k \in \bN}$ and $\{\pi_{\cup_i^k A_i}\}_{k \in \bN}$ converge point-wise to $\pi_{\cap_iA_i}$ and $\pi_{\cup_i A_i}$ respectively, the following Lemmma \ref{ptwconverge} shows that they are measurable. 
\end{proof}
\begin{lemma}\label{ptwconverge}
Let $E$ be any measurable space, and let $f_i : E \longrightarrow (\overline{\bR}, \cB(\overline{\bR}))$ be measurable maps for $i \in \bN$. If the sequence $\{f_i\}_{i \in \bN}$ converge point-wise to a map $f$, then $f$ is measurable.
\end{lemma}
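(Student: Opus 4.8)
The plan is to reduce measurability of $f$ to a countable set-theoretic identity, and then to recognize $f$ as the limit superior of the $f_i$, exploiting the fact that suprema and infima of countable families of measurable functions are automatically measurable.

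First I would recall that the Borel $\sigma$-field $\cB(\overline{\bR})$ on the extended line is generated by the rays $(a, +\infty]$ with $a \in \bR$ (equivalently by the rays $[-\infty, a)$). Consequently, to prove that $f$ is measurable it suffices to verify that the preimage $f^{-1}((a, +\infty]) = \{x \in E \mid f(x) > a\}$ lies in the $\sigma$-field of $E$ for every $a \in \bR$.

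Next I would establish the two elementary building blocks. For any countable family $\{g_i\}_{i \in \bN}$ of measurable maps into $\overline{\bR}$, the pointwise supremum is measurable because
\[
\{x \in E \mid \sup_i g_i(x) > a\} = \bigcup_{i \in \bN} \{x \in E \mid g_i(x) > a\}
\]
is a countable union of measurable sets; dually, $\inf_i g_i$ is measurable via $\{\inf_i g_i < a\} = \bigcup_i \{g_i < a\}$. Applying the first identity to the tails, each function $h_N := \sup_{i \geq N} f_i$ is measurable, and then $\limsup_i f_i = \inf_N h_N$ is measurable as the infimum of the countable family $\{h_N\}_{N \in \bN}$.

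Finally, since $\{f_i\}_{i \in \bN}$ converges pointwise to $f$, we have $f(x) = \limsup_i f_i(x)$ for every $x \in E$, so $f = \limsup_i f_i$ is measurable, which completes the argument. There is no substantive obstacle here: the proof is the standard one for pointwise limits of measurable functions. The only point requiring mild care is that everything takes place on the extended line $\overline{\bR}$ rather than $\bR$, so one must make sure to use the extended generating rays $(a, +\infty]$ and to read the displayed set identities inside $\overline{\bR}$; beyond this bookkeeping the verification is routine.
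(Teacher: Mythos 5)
Your proof is correct and follows essentially the same route as the paper: the paper's own argument is precisely that $\sup_i f_i$ is measurable, hence $\limsup_i f_i$ is measurable, and $f = \limsup_i f_i$ by pointwise convergence. You have simply filled in the standard details (the generating rays $(a,+\infty]$ of $\cB(\overline{\bR})$ and the countable-union identities) that the paper leaves implicit.
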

\begin{proof}
Note that $\sup_i f_i$ is measurable, hence so is $\limsup_i f_i$. Thus we obtain that $f = \limsup_i f_i$ is measurable.
\end{proof}
\begin{proposition}\label{isom}
The bijection in Remark \ref{fundamental}
\[
\varphi : \cM_{1}(\bR) \stackrel{\sim}{\longrightarrow} \fF; \mu \mapsto \mu((-\infty,t])
\]
is an isomorphism of measure spaces. 
\end{proposition}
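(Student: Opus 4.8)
The plan is to show that $\varphi$ is a bi-measurable bijection by checking that it carries a generating family of one $\sigma$-field onto a generating family of the other. Since $\varphi$ is already known to be a bijection by Remark \ref{fundamental}, the set-theoretic operations of complement, countable union and countable intersection are preserved by the image map $S \mapsto \varphi(S)$; consequently, for any family $\cG$ of subsets one has $\varphi(\sigma(\cG)) = \sigma(\varphi(\cG))$. Thus it suffices to match up generators on the two sides.

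First I would identify convenient generators. On $\fF$, the $\sigma$-field $\fA$ is by definition generated by the sets $\fF(x,y) = \{F \in \fF \mid F(x) \leq y\}$. On $\cM_{1}(\bR)$, the defining $\sigma$-field is generated by all projections $\pi_A$, $A \in \cB(\bR)$; but by Lemma \ref{generating} the smallest $\sigma$-field making every $\pi_{(-\infty,t]}$ measurable already makes every $\pi_A$ measurable, so this $\sigma$-field is equally generated by the sets $\pi_{(-\infty,t]}^{-1}(B)$ with $t \in \bR$ and $B \in \cB(\bR)$. Since the half-lines $(-\infty,y]$ generate $\cB(\bR)$, I may further restrict to the generators $\pi_{(-\infty,t]}^{-1}((-\infty,y])$, $t,y \in \bR$.

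The key computation is then a one-line unwinding of definitions. For $\mu \in \cM_{1}(\bR)$ one has $\varphi(\mu)(x) = \mu((-\infty,x]) = \pi_{(-\infty,x]}(\mu)$, so
\[
\varphi\bigl(\pi_{(-\infty,x]}^{-1}((-\infty,y])\bigr) = \{F \in \fF \mid F(x) \leq y\} = \fF(x,y),
\]
and equivalently $\varphi^{-1}(\fF(x,y)) = \pi_{(-\infty,x]}^{-1}((-\infty,y])$. Hence $\varphi$ sends the chosen generators of the $\sigma$-field on $\cM_{1}(\bR)$ bijectively onto the generators $\fF(x,y)$ of $\fA$. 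Combining this with the fact that $\varphi$ commutes with the $\sigma$-operations, I conclude that $\varphi$ carries the $\sigma$-field of $\cM_{1}(\bR)$ exactly onto $\fA$, so both $\varphi$ and $\varphi^{-1}$ are measurable and $\varphi$ is an isomorphism of measurable spaces.

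I do not expect a serious obstacle here; the only point requiring care is the reduction of the $\sigma$-field of $\cM_{1}(\bR)$ from the full family $\{\pi_A\}_{A \in \cB(\bR)}$ to the one-sided projections $\{\pi_{(-\infty,t]}\}_{t \in \bR}$, which is precisely the content of Lemma \ref{generating}, together with the standard (but worth stating explicitly) fact that for a bijection the image map commutes with Boolean and countable set operations, so that pushing a generated $\sigma$-field forward yields the $\sigma$-field generated by the pushed-forward generators.
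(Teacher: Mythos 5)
Your proof is correct and takes essentially the same route as the paper: both arguments rest on Lemma \ref{generating} together with the key identity $\varphi^{-1}(\fF(x,y)) = \pi_{(-\infty,x]}^{-1}((-\infty,y])$, checked on the same generating families. The only difference is packaging --- the paper verifies measurability of $\varphi$ and of $\varphi^{-1}$ separately via the generator criterion, whereas you handle both directions at once using the fact that the image map of a bijection commutes with $\sigma$-field generation.
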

\begin{proof}
The following shows that $\varphi$ is measurable: 
\begin{align*}
\varphi^{-1}(\fF(x, y)) 
&= \{\mu \in \cM_1(\bR) \mid \mu((-\infty, x ]) \leq y\} 
\\
&= \pi_{(-\infty, x]}^{-1}((-\infty, y]).
\end{align*}
To show that $\varphi^{-1}$ is measurable, by Lemma \ref{generating}, it is enough to show that $\varphi(\pi_{(-\infty, x]}^{-1}((-\infty, y]))$ is measurable in $\fF$. 
Since we have $\pi_{(-\infty, x]}^{-1}((-\infty, y]) = \{\mu \in \cM_1(\bR) \mid \mu((-\infty, x ]) \leq y\}$, we obtain that $\varphi(\pi_{(-\infty, x]}^{-1}((-\infty, y])) \subset \fF(x, y)$. The inverse inclusion follows from the definition of the inverse map of $\varphi$.
\end{proof}
\begin{proposition}\label{intmeasurable}
For any measurable map $f : \bR \longrightarrow \bR$, the map $s_f : \cM_1(\bR) \longrightarrow \overline{\bR}\  ; \mu \mapsto \int f d\mu$ is measurable.
\end{proposition}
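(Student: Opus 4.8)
The plan is to use the standard measure-theoretic approximation argument (the ``standard machine''), building $s_f$ up from indicator functions through simple functions to general measurable $f$, and invoking Lemma \ref{ptwconverge} at the limiting step. The starting observation is that the $\sigma$-field on $\cM_1(\bR)$ is precisely the one making every projection $\pi_A$ measurable, so indicator functions will be settled directly from the definition, and the rest is a matter of propagating measurability through the integral.

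First I would treat the case $f = \mathbf{1}_A$ for $A \in \cB(\bR)$. Here $s_f(\mu) = \int \mathbf{1}_A \, d\mu = \mu(A) = \pi_A(\mu)$, which is measurable by the very definition of the $\sigma$-field on $\cM_1(\bR)$. Next, for a nonnegative simple function $f = \sum_{i=1}^{n} c_i \mathbf{1}_{A_i}$ with $c_i \geq 0$ and $A_i \in \cB(\bR)$, linearity of the integral gives $s_f(\mu) = \sum_{i=1}^{n} c_i \pi_{A_i}(\mu)$, a finite nonnegative linear combination of measurable maps, hence measurable.

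I would then pass to a general nonnegative measurable $f \colon \bR \longrightarrow [0,\infty]$ by choosing the usual increasing sequence of nonnegative simple functions $f_k \uparrow f$. For each fixed $\mu$, the monotone convergence theorem yields $s_{f_k}(\mu) = \int f_k \, d\mu \uparrow \int f \, d\mu = s_f(\mu)$, so $s_f$ is the pointwise limit of the measurable maps $s_{f_k}$; Lemma \ref{ptwconverge} then shows $s_f$ is measurable as a map into $\overline{\bR}$. Finally, for an arbitrary measurable $f \colon \bR \longrightarrow \bR$, I would write $f = f^{+} - f^{-}$ with $f^{+}, f^{-}$ nonnegative measurable, so that $s_{f^{+}}$ and $s_{f^{-}}$ are measurable into $[0,\infty]$ by the preceding step, and $s_f = s_{f^{+}} - s_{f^{-}}$.

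The only delicate point, and hence the main obstacle, is this last decomposition: the difference $s_{f^{+}} - s_{f^{-}}$ can run into the $\infty - \infty$ indeterminacy. This is handled by noting that the map $\mu \mapsto (s_{f^{+}}(\mu), s_{f^{-}}(\mu))$ is measurable into $[0,\infty] \times [0,\infty]$, and that subtraction $[0,\infty]^2 \longrightarrow \overline{\bR}$ is Borel measurable on the set where at least one coordinate is finite (equivalently, one restricts to the $\mu$ for which $\int f \, d\mu$ is well defined). Composing these gives measurability of $s_f$. Everything apart from this extended-real bookkeeping is routine, and no step requires more than the definition of the $\sigma$-field together with the two lemmas already established.
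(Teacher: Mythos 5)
Your proof is correct and follows essentially the same route as the paper's: the standard machine (indicators realized as the projections $\pi_A$, nonnegative simple functions by linearity, monotone convergence plus Lemma \ref{ptwconverge} for nonnegative $f$, then the decomposition $f = f^{+} - f^{-}$). If anything, your handling of the $\infty - \infty$ indeterminacy in the final subtraction $s_{f^{+}} - s_{f^{-}}$ is more careful than the paper's, which simply asserts ``we may assume $f \geq 0$'' without addressing where the integral is well defined.
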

\begin{proof}
If we put $f^+(x) := \max\{f(x), 0\}$ and $f^-(x) := \max\{-f(x), 0\}$, then we have $f = f^+ - f^-$. Hence we may assume that $f \geq 0$. 
Let $\{f_n\}_n$ be a simple function approximation of $f$, that is a sequence of simple functions converging point-wise to $f$ with $f_n \leq f$. By the monotone convergent theorem, we have 
\begin{align*}
    s_f(\mu) &= \int f d\mu \\
    &= \int \lim_n f_n d\mu \\
    &= \lim_n \int f_n d\mu.
\end{align*}
Since $\int f_n d\mu$ is a summation of $\pi_A$'s for some Borel sets $A$'s, it is measurable. Hence $s_f$ is also measurable by Lemma \ref{ptwconverge}.
\end{proof}

\begin{proposition}\label{expmeasurable}
For any measurable map $f : \bR \longrightarrow \bR$ and any $r \in \overline{\bR}$, the set $\{F \in \fF \mid \bE[f(X_F)] \leq r\}$ is measurable in $\fF$.
\end{proposition}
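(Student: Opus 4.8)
The plan is to reduce the measurability of $\{F \in \fF \mid \bE[f(X_F)] \leq r\}$ to the two facts already established, namely Proposition \ref{isom} (the isomorphism $\varphi$) and Proposition \ref{intmeasurable} (measurability of the integration map $s_f$), so that no new estimate is needed and everything becomes a preimage of a Borel set under a composition of measurable maps.

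First I would identify the function $F \mapsto \bE[f(X_F)]$ with an already-understood composite. Recall from Remark \ref{fundamental} that each $F \in \fF$ corresponds to a unique probability measure $\mu_F \in \cM_1(\bR)$ and that the associated random variable $X_F$ has law $\mu_F$. Hence, by the change-of-variables formula for the distribution of $X_F$,
\[
\bE[f(X_F)] = \int_{\bR} f \, d\mu_F = s_f(\mu_F),
\]
where $s_f : \cM_1(\bR) \longrightarrow \overline{\bR}$ is the map of Proposition \ref{intmeasurable}. Since $\mu_F = \varphi^{-1}(F)$ under the bijection $\varphi : \cM_1(\bR) \stackrel{\sim}{\longrightarrow} \fF$ of Proposition \ref{isom}, the function $F \mapsto \bE[f(X_F)]$ is exactly the composite $s_f \circ \varphi^{-1} : \fF \longrightarrow \overline{\bR}$.

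Next I would write the set in question as a preimage:
\[
\{F \in \fF \mid \bE[f(X_F)] \leq r\} = (s_f \circ \varphi^{-1})^{-1}\big([-\infty, r]\big),
\]
which is valid for every $r \in \overline{\bR}$, since $[-\infty, r]$ is a Borel subset of $\overline{\bR}$ in every case (for $r = +\infty$ it is all of $\overline{\bR}$, and the set is then all of $\fF$). By Proposition \ref{isom}, $\varphi$ is an isomorphism of measurable spaces, so $\varphi^{-1}$ is measurable; by Proposition \ref{intmeasurable}, $s_f$ is measurable. A composition of measurable maps is measurable, so $s_f \circ \varphi^{-1}$ is measurable, and therefore the preimage of the Borel set $[-\infty, r]$ is measurable in $\fF$. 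This gives the claim.

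The only point requiring care — and which I regard as the main, though minor, obstacle — is the bookkeeping in the first identification $\bE[f(X_F)] = s_f(\mu_F)$: one must confirm that the expectation in the statement is literally the Lebesgue integral $\int f \, d\mu_F$ against the law $\mu_F$ attached to $F$, and that this is consistent with the construction of $X_F$ recalled in Remark \ref{fundamental}. Once that correspondence is pinned down, the argument is purely formal, resting on the measurability of $\varphi^{-1}$ and $s_f$ together with the Borel measurability of the half-lines $[-\infty, r] \subset \overline{\bR}$.
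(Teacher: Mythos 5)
Your proof is correct and follows essentially the same route as the paper's: both identify $F \mapsto \bE[f(X_F)]$ with the integration map $s_f$ of Proposition \ref{intmeasurable} via the isomorphism $\varphi$ of Proposition \ref{isom}, and conclude by taking the preimage of a half-line. If anything, you are more explicit than the paper, which writes the set as $s_f^{-1}((-\infty,r])$ and leaves the composition with $\varphi^{-1}$ implicit, whereas you spell out $s_f \circ \varphi^{-1}$ and the Borel measurability of $[-\infty,r]$ in $\overline{\bR}$.
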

\begin{proof}
Note that we have  $\bE[f(X_F)] = \int_\bR f(x) d\mu_F (x)$, which shows that the map $\fF \longrightarrow \overline{\bR} ; F \mapsto \bE[f(X_F)]$ is identified with $s_f$ by Proposition \ref{isom}. Since the map $s_f$ is measurable by Proposition \ref{intmeasurable}, the set $\{F \in \fF \mid \bE[f(X_F)] \leq r\} = s_f^{-1}((-\infty, r])$ is measurable in $\fF$.
\end{proof}

\subsection{$L^p$-spaces and their intersection}
\begin{definition}
Let $\Omega$ be a probability space. For $p \in \bN$, we put 
\[
L^p(\Omega) := \{X : \Omega \longrightarrow \bR \mid \bE[|X|^p] < \infty\}. 
\]
Then we have $L^{p+1}(\Omega) \subset L^p(\Omega)$, and we put $\tilde{L}(\Omega) := \cap_{p\in \bN}L^p(\Omega)$.
\end{definition}

\begin{remark}\label{vector}
By an elementary inequality $|s + t|^p \leq 2^{p-1}(|s|^p + |t|^p)$ for $s, t \in \bR$, we have that $\tilde{L}(\Omega)$ is a real vector space.
\end{remark}

\begin{lemma}\label{bddpoly}
If a function $\sigma : \bR \longrightarrow \bR$ satisfies $|\sigma| \leq |u|$ for some polynomial $u$, then $\sigma$ induces a map $\tilde{L}(\Omega) \longrightarrow \tilde{L}(\Omega); X \mapsto \sigma \circ X$.
\end{lemma}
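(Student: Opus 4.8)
The plan is to show that if $X \in \tilde{L}(\Omega)$, i.e. $\bE[|X|^p] < \infty$ for every $p \in \bN$, then $\sigma \circ X$ again lies in $\tilde{L}(\Omega)$. First I would observe that $\sigma \circ X$ is measurable, being the composition of the (measurable) activation $\sigma$ with the random variable $X$, so it defines a bona fide random variable and it remains only to control its moments. The pointwise hypothesis $|\sigma| \le |u|$ gives $|\sigma(X)| \le |u(X)|$ everywhere, hence $\bE[|\sigma(X)|^p] \le \bE[|u(X)|^p]$ for each $p$, and the problem reduces to verifying that $u(X) \in \tilde{L}(\Omega)$.

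Writing $u(x) = \sum_{k=0}^d a_k x^k$, I would bound $|u(X)| \le \sum_{k=0}^d |a_k|\,|X|^k$ and raise to the $p$-th power. Iterating the elementary inequality recorded in Remark \ref{vector} (equivalently, invoking convexity of $t \mapsto t^p$) yields
\[
|u(X)|^p \le (d+1)^{p-1}\sum_{k=0}^d |a_k|^p\,|X|^{kp}.
\]
Taking expectations then gives
\[
\bE[|u(X)|^p] \le (d+1)^{p-1}\sum_{k=0}^d |a_k|^p\,\bE[|X|^{kp}].
\]

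Each exponent $kp$ is a nonnegative integer, with $\bE[|X|^0] = 1$ and $\bE[|X|^{kp}] < \infty$ for $k \ge 1$ precisely because $X \in \tilde{L}(\Omega)$; thus the right-hand side is a finite sum of finite quantities. Consequently $\bE[|\sigma(X)|^p] < \infty$ for every $p \in \bN$, which is exactly the assertion $\sigma \circ X \in \tilde{L}(\Omega)$.

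There is no genuine obstacle here: the statement is a moment estimate, and the only point requiring care is that the bound must hold \emph{simultaneously} for all $p$. This is exactly what the intersection $\tilde{L}(\Omega) = \cap_{p \in \bN} L^p(\Omega)$ supplies, since all moments of $X$ are finite and polynomial images raise the degree by only a fixed factor. The mild bookkeeping is choosing the power-mean/convexity inequality to pass from a sum of monomials to a sum of moments; everything else is routine.
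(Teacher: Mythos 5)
Your proof is correct and follows essentially the same route as the paper: bound $\bE[|\sigma(X)|^p] \leq \bE[|u(X)|^p]$ and reduce to showing that polynomial images of $X$ have all moments finite. The only difference is cosmetic — where the paper cites the vector-space structure of $\tilde{L}(\Omega)$ (Remark \ref{vector}) together with closure under monomials, you inline that same argument as an explicit power-mean inequality with the constant $(d+1)^{p-1}$.
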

\begin{proof}
We show that $\bE[|\sigma(X)|^p] \leq \bE[|u(X)|^p] < \infty$ for any $p \in \bN$. Since $\tilde{L}(\Omega)$ is a vector space by Remark \ref{vector}, it suffices to show that $X^n \in \tilde{L}(\Omega)$ for any  $X \in \tilde{L}(\Omega)$ and $n \in \bN$, which follows from the definition of $\tilde{L}(\Omega)$.
\end{proof}

\begin{corollary}\label{prodl}
$\tilde{L}(\Omega)$ is an algebra over $\bR$.
\end{corollary}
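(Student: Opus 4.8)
The plan is to leverage that $\tilde{L}(\Omega)$ is already known to be a real vector space by Remark \ref{vector}, so the only substantive thing to verify is that it is closed under pointwise multiplication. The remaining algebra axioms—bilinearity of the product, associativity, and distributivity over addition—are inherited directly from the pointwise ring structure on the space of all $\bR$-valued functions on $\Omega$ and require no analytic estimate, so I would dispatch them with a single sentence.

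For closure under multiplication I would take $X, Y \in \tilde{L}(\Omega)$ and show $XY \in \tilde{L}(\Omega)$, that is, $\bE[|XY|^{p}] < \infty$ for every $p \in \bN$. Fixing such a $p$ and writing $|XY|^{p} = |X|^{p}|Y|^{p}$, the Cauchy--Schwarz inequality yields
\[
\bE[|XY|^{p}] = \bE[|X|^{p}|Y|^{p}] \leq \bE[|X|^{2p}]^{1/2}\,\bE[|Y|^{2p}]^{1/2}.
\]
Since $X, Y \in \tilde{L}(\Omega) \subset L^{2p}(\Omega)$, both factors on the right-hand side are finite, hence $\bE[|XY|^{p}] < \infty$. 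As $p$ was arbitrary, $XY \in \cap_{p \in \bN} L^{p}(\Omega) = \tilde{L}(\Omega)$, which is exactly the required closure.

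There is essentially no obstacle here: the sole ingredient beyond the vector-space structure is a second-moment-type bound, which Cauchy--Schwarz supplies immediately precisely because the intersection $\tilde{L}(\Omega)$ already contains every $L^{2p}(\Omega)$ (one could equally invoke H\"older's inequality with conjugate exponents $2$ and $2$). I would finish by remarking that the constant function $1$ belongs to $\tilde{L}(\Omega)$ because $\Omega$ is a probability space, so $\tilde{L}(\Omega)$ is in fact a unital commutative algebra over $\bR$, although unitality is not needed for the stated conclusion.
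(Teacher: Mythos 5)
Your proof is correct, but it takes a genuinely different route from the paper's. The paper proves closure under multiplication purely algebraically, via the polarization identity $XY = \tfrac{1}{2}\left((X+Y)^{2} - X^{2} - Y^{2}\right)$: squaring preserves $\tilde{L}(\Omega)$ by Lemma \ref{bddpoly} (the polynomial-envelope lemma, applied with $u(x)=x^{2}$), and the three terms are then combined using the vector-space structure from Remark \ref{vector}. You instead prove the moment bound directly: $\bE[|XY|^{p}] \leq \bE[|X|^{2p}]^{1/2}\,\bE[|Y|^{2p}]^{1/2} < \infty$ by Cauchy--Schwarz, using that $\tilde{L}(\Omega) \subset L^{2p}(\Omega)$ for every $p$. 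The trade-off is this: the paper's argument recycles machinery it has already established (Lemma \ref{bddpoly} and Remark \ref{vector}) and needs no inequality beyond the elementary one underlying the vector-space claim, whereas your argument is self-contained and independent of Lemma \ref{bddpoly}, at the cost of invoking Cauchy--Schwarz (or H\"older with conjugate exponents $2, 2$). Your closing observation that $1 \in \tilde{L}(\Omega)$ because $\Omega$ has finite total mass, so the algebra is in fact unital and commutative, is a correct refinement the paper does not state.
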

\begin{proof}
For $X, Y \in \tilde{L}(\Omega)$, we have $XY = \frac{1}{2}((X + Y)^2 - X^2 - Y^2) \in \tilde{L}(\Omega)$ by Lemma \ref{bddpoly} and Remark \ref{vector}. This completes the proof.
\end{proof}
\subsection{Maximum Mean Discrepancy}\label{sub1}
Let $\mathcal{M}$ be the set of all probability measures on $\bR$ which are absolutely continuous. To define a distance on $\mathcal{M}$, we construct a $\mathbb{C}$-Hilbert space $\mathcal{H}$ and a map $m : \mathcal{M} \longrightarrow \mathcal{H} ; p \mapsto m_p$ satisfying the following.
\begin{enumerate}
    \item $\mathcal{H} \subset \mathrm{Map}(\bR, \bC)$ is a {\it reproducing kernel Hilbert space (RKHS)}.
    \item for any $f \in \mathcal{H}$, we have $\langle f, m_p \rangle_{\mathcal{H}} = \bE_p[f]$.
    \item $m$ is injective.
\end{enumerate}
Then we can introduce a distance function $d$ on $\mathcal{M}$ by 
\[
d(p, q) := \|m_p - m_q\|_{\mathcal{H}},
\]
which is called {\it maximum mean discrepancy} in \cite{Gretton12}. By the following lemma, we can compute this metric function from the kernel $k$ of the RKHS $\mathcal{H}$.

\begin{proposition}\label{formula}
\begin{equation}\label{dist}
d(p, q) = \bE_{x, x'\sim p}[k(x, x')] - 2\bE_{x \sim p, y\sim q}[\mathrm{Re}\,  k(x, y)] + \bE_{y, y'\sim q}[k(y, y')],
\end{equation}
where $\mathrm{Re}$ denotes the real part of a function.
\begin{proof}
For any $x \in \bR$, let $\varphi_x \in \mathcal{H}$ be a function satisfying $f(x) = \langle f, \varphi_x \rangle_{\mathcal{H}}$ and $\varphi_x(y) = k(x, y)$ for any $f \in \mathcal{H}$. Such $\varphi_x$'s exist since $\mathcal{H}$ is the RKHS associated with $k$. Then we have
\begin{align*}
    d(p, q)  &= \|m_p - m_q\|_{\mathcal{H}} \\
    &= \langle m_p - m_q, m_p - m_q \rangle_{\mathcal{H}} \\
    &= \langle m_p , m_p  \rangle_{\mathcal{H}} - 2\mathrm{ Re}\ \langle m_p , m_q  \rangle_{\mathcal{H}} + \langle m_q , m_q  \rangle_{\mathcal{H}} \\
    &= \bE_{x\sim p}[m_p(x)] - 2\bE_{x \sim p}[\mathrm{ Re}\  m_q(x)] + \bE_{y \sim q}[m_q(y)] \\
    &= \bE_{x\sim p}[\langle m_p, \varphi_x\rangle_{\mathcal{H}}] - 2\bE_{x \sim p}[\mathrm{ Re}\  \langle m_q, \varphi_x\rangle_{\mathcal{H}}] + \bE_{y \sim q}[\langle m_q, \varphi_y \rangle_{\mathcal{H}}] \\
    &= \bE_{x, x'\sim p}[k(x, x')] - 2\bE_{x \sim p, y\sim q}[\mathrm{ Re}\  k(x, y)] + \bE_{y, y'\sim q}[k(y, y')],
\end{align*}
here we applied $\langle f, m_p \rangle_{\mathcal{H}} = \bE_p[f]$ for any $f \in \mathcal{H}$ to the fourth and sixth line, and $f(x) = \langle f, \varphi_x \rangle_{\mathcal{H}}$ $f \in \mathcal{H}$ to the fifth line. This completes the proof.
\end{proof}
\end{proposition}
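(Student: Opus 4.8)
The plan is to compute $d(p,q)$ directly from its definition as the inner product $\langle m_p - m_q, m_p - m_q\rangle_{\cH}$, reducing everything to the kernel $k$ by repeatedly invoking the two structural properties of the mean embedding: the reproducing identity $f(x) = \langle f, \varphi_x\rangle_{\cH}$ together with $\varphi_x(y) = k(x,y)$, and the embedding identity $\langle f, m_p\rangle_{\cH} = \bE_p[f]$ valid for every $f \in \cH$.

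First I would expand $\langle m_p - m_q, m_p - m_q\rangle_{\cH}$ by sesquilinearity into $\langle m_p, m_p\rangle_{\cH} - \langle m_p, m_q\rangle_{\cH} - \langle m_q, m_p\rangle_{\cH} + \langle m_q, m_q\rangle_{\cH}$. Since $\cH$ is a $\bC$-Hilbert space the inner product is conjugate-symmetric, so the two cross terms combine to $-2\,\mathrm{Re}\,\langle m_p, m_q\rangle_{\cH}$; this is exactly where the real part in \eqref{dist} comes from.

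Next, for each surviving term I would apply the embedding identity to trade an inner product against $m_p$ (resp.\ $m_q$) for an expectation over $p$ (resp.\ $q$), and then apply it once more after rewriting the integrand through the reproducing property. Concretely, $\langle m_p, m_p\rangle_{\cH} = \bE_{x\sim p}[m_p(x)]$ by reading $m_p$ itself as the test function $f$; then $m_p(x) = \langle m_p, \varphi_x\rangle_{\cH}$, and applying the embedding identity a second time with $\varphi_x(y) = k(x,y)$ gives $\langle m_p, \varphi_x\rangle_{\cH} = \bE_{x'\sim p}[k(x,x')]$, so that $\langle m_p, m_p\rangle_{\cH} = \bE_{x,x'\sim p}[k(x,x')]$. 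The identical two-step reduction applied to $\langle m_q, m_q\rangle_{\cH}$ and to $\mathrm{Re}\,\langle m_p, m_q\rangle_{\cH}$ produces $\bE_{y,y'\sim q}[k(y,y')]$ and $\bE_{x\sim p,\,y\sim q}[\mathrm{Re}\, k(x,y)]$ respectively; assembling the three pieces yields \eqref{dist}.

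I expect the delicate point to be the interchange of expectation and inner product, i.e.\ the passage from $\bE_{x\sim p}[\langle m_p, \varphi_x\rangle_{\cH}]$ to the iterated expectation $\bE_{x,x'\sim p}[k(x,x')]$, together with keeping track of which inner-product slot is conjugate-linear (this is what confines the real part to the cross term while leaving the two diagonal terms real and unconjugated). This interchange is legitimate precisely because $m_p$ is the Bochner integral $\bE_{x\sim p}[\varphi_x]$ of the feature map and is a genuine element of $\cH$ under the integrability conditions implicit in the existence of the embedding $m$; under those conditions the nested expectation over independent copies $x, x' \sim p$ is well defined and a Fubini-type argument justifies the manipulation. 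All remaining steps are mechanical applications of the reproducing and embedding identities.
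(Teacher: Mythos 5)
Your proposal is correct and follows essentially the same route as the paper's proof: expand $\langle m_p - m_q,\, m_p - m_q\rangle_{\mathcal{H}}$ by sesquilinearity (with conjugate symmetry producing the $-2\,\mathrm{Re}$ cross term), then reduce each of the three terms to a double expectation of $k$ by applying the embedding identity $\langle f, m_p\rangle_{\mathcal{H}} = \bE_p[f]$ and the reproducing identity $f(x) = \langle f, \varphi_x\rangle_{\mathcal{H}}$, $\varphi_x(y) = k(x,y)$ in exactly the paper's order. Your added remark justifying the interchange of expectation and inner product via the Bochner-integral representation of $m_p$ is a sound elaboration of a step the paper leaves implicit, not a different argument.
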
\label{expbdd}
In the following, we explain the construction of $\mathcal{H}$ and $m$ satisfying (1)--(3) above. Let $k(x, y) = \exp(-(x-y)^2/2)$, and let $\mathcal{H}$ be the $\bC$-RKHS associated with $k$. Then we have the following.
\begin{proposition}
For any $p \in \mathcal{M}$, there exists $m_p \in \mathcal{H}$ such that $\langle f, m_p \rangle_{\mathcal{H}} = \bE_p[f]$ for any $f \in \mathcal{H}$.
\end{proposition}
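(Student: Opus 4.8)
The plan is to realize $m_p$ as the Riesz representative of the expectation functional. Define the linear functional $L_p \colon \cH \longrightarrow \bC$ by $L_p(f) := \bE_p[f]$; if I can show $L_p$ is a \emph{bounded} linear functional on the Hilbert space $\cH$, then the Riesz representation theorem produces a unique element $m_p \in \cH$ with $L_p(f) = \langle f, m_p\rangle_{\cH}$ for all $f$, which is exactly the claim. The key structural fact I would exploit is that the Gaussian kernel satisfies $k(x,x) = \exp(0) = 1$ for every $x \in \bR$, so the reproducing elements $\varphi_x$ all have unit norm $\|\varphi_x\|_{\cH} = \sqrt{k(x,x)} = 1$.

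First I would check that $L_p$ is even well defined. Every $f \in \cH$ is continuous (the Gaussian kernel is continuous) and therefore Borel measurable, and by the reproducing property together with Cauchy--Schwarz, $|f(x)| = |\langle f, \varphi_x\rangle_{\cH}| \le \|f\|_{\cH}\,\|\varphi_x\|_{\cH} = \|f\|_{\cH}$ for all $x$. Thus every element of $\cH$ is a bounded measurable function with $\|f\|_\infty \le \|f\|_{\cH}$, so $\bE_p[f]$ is finite for any probability measure $p$ and $L_p$ makes sense. The same bound immediately gives boundedness of the functional, since $|L_p(f)| = |\bE_p[f]| \le \bE_p[|f|] \le \|f\|_{\cH}$, so that $\|L_p\| \le 1$; linearity is clear. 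Applying Riesz (with the conjugation convention of the inner product on $\cH$) then yields the desired $m_p$.

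Equivalently, and more explicitly, I would construct $m_p$ as the Bochner integral $m_p := \int_{\bR}\varphi_x\, dp(x)$ taken in $\cH$. Its existence is guaranteed because $\int_{\bR}\|\varphi_x\|_{\cH}\, dp(x) = \int_{\bR}\sqrt{k(x,x)}\, dp(x) = 1 < \infty$, so the integrand is Bochner integrable. Since for each fixed $f$ the map $g \mapsto \langle f, g\rangle_{\cH}$ is a bounded functional, it commutes with the Bochner integral, giving $\langle f, m_p\rangle_{\cH} = \int_{\bR}\langle f, \varphi_x\rangle_{\cH}\, dp(x) = \int_{\bR} f(x)\, dp(x) = \bE_p[f]$, as required; this also exhibits $m_p = \bE_{x\sim p}[k(x,\cdot)]$.

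The main point---rather than an obstacle---is the uniform boundedness $\sup_x k(x,x) = 1$, which is what forces both the evaluation functionals and the expectation functional to be bounded; without a bounded kernel this step could fail. The only genuine care required is bookkeeping: confirming that members of $\cH$ are measurable and bounded so that $\bE_p[f]$ is finite, and matching the complex-conjugation convention of the braket inner product when invoking Riesz (or, in the Bochner formulation, when pulling $\langle f, -\rangle_{\cH}$ through the integral). Note that absolute continuity of $p$ plays no role here; the argument works verbatim for every probability measure on $\bR$.
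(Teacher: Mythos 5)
Your proof is correct and follows essentially the same route as the paper: bound the expectation functional $f \mapsto \bE_p[f]$ via the reproducing property and Cauchy--Schwarz, using $\sqrt{k(x,x)} = 1$, then invoke the Riesz representation theorem. Your additional Bochner-integral construction of $m_p$ and the measurability/well-definedness bookkeeping are harmless refinements of the same argument, not a different approach.
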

\begin{proof}
We show that the map $\mathcal{H} \longrightarrow \bC ; f \mapsto \bE_p[f]$ is bounded. Then Riez's lemma implies the statement. Let $f \in \mathcal{H}$ with $\|f\|_{\mathcal{H}} = 1$. Then we have 
\begin{align*}
    |\bE_p[f]| &\leq \bE_p[|f|] = \int_{\bR}|f(x)| dp(x) \\
    &= \int_{\bR}|\langle f, \varphi_x \rangle_{\mathcal{H}}| dp(x) \\
    &\leq  \int_{\bR} \sqrt{\langle f, f \rangle_{\mathcal{H}}}\sqrt{\langle \varphi_x, \varphi_x \rangle_{\mathcal{H}}} dp(x) \\
    &= \int_{\bR} \sqrt{k(x,x)}dp(x) \\
    &\leq  \int_{\bR}dp(x) = 1 < \infty,
\end{align*}
here we applied Cauchy--Schwarz inequality to the third line. This completes the proof.
\end{proof}
By Proposition \ref{expbdd}, we can define $m : \mathcal{M} \longrightarrow \mathcal{H}$ by $m(p) = m_p$. Finally, we show the injectivity of $m$. It reduces to show that $\mathcal{H}\cap L^2(\bR, p)$ is dense in $L^2(\bR, p)$ for any $p \in \mathcal{M}$ by the following lemma.
\begin{lemma}\label{denseinj}
If $\mathcal{H}\cap L^2(\bR, p)$ is dense in $L^2(\bR, p)$ with respect to the $L^2(\bR, p)$-norm for any $p \in \mathcal{M}$, then the map $m$ is injective. That is, $m_p = m_q$ implies that $p = q$ as measures.
\end{lemma}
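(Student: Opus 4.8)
The plan is to unwind the defining property of the mean embedding, reduce the equality $m_p = m_q$ to an orthogonality statement inside a single $L^2$-space, and then let the density hypothesis force the difference of the two measures to vanish. First I would suppose $m_p = m_q$ and apply the reproducing property (property (2) in the construction of $m$): for every $f \in \mathcal{H}$,
\[
\bE_p[f] = \langle f, m_p\rangle_{\mathcal{H}} = \langle f, m_q\rangle_{\mathcal{H}} = \bE_q[f],
\]
so that $\int_{\bR} f\, dp = \int_{\bR} f\, dq$ for all $f \in \mathcal{H}$. The task is then to upgrade this to $p = q$, and the whole point of the density hypothesis is to make exactly this upgrade possible.

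Next I would introduce a common dominating measure. Since $p, q \in \mathcal{M}$ are absolutely continuous, so is $r := \tfrac12(p+q)$, and $r \in \mathcal{M}$ as well, with $p, q \ll r$ and Radon--Nikodym derivatives bounded by $2$. Setting $h := \tfrac{dp}{dr} - \tfrac{dq}{dr}$, which is real and bounded and hence lies in $L^2(\bR, r)$, the identity above rewrites as
\[
\int_{\bR} f h \, dr = \int_{\bR} f\, dp - \int_{\bR} f\, dq = 0 \qquad \text{for all } f \in \mathcal{H}.
\]
Because the kernel satisfies $k(x,x) = 1$, each $f \in \mathcal{H}$ obeys $|f(x)| = |\langle f, \varphi_x\rangle_{\mathcal{H}}| \le \|f\|_{\mathcal{H}}$, so $\mathcal{H}$ consists of bounded functions and in particular $\mathcal{H} \cap L^2(\bR, r) = \mathcal{H}$.

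Finally I would interpret the vanishing integral as an inner product. As $h$ is real, $\int_{\bR} f h\, dr = \langle f, h\rangle_{L^2(\bR, r)}$, so the continuous linear functional $\langle \,\cdot\,, h\rangle_{L^2(\bR,r)}$ vanishes on $\mathcal{H}$. Applying the density hypothesis to $r \in \mathcal{M}$, the subspace $\mathcal{H} = \mathcal{H}\cap L^2(\bR,r)$ is dense in $L^2(\bR, r)$, so this functional vanishes identically; evaluating it at $h$ itself gives $\|h\|_{L^2(\bR,r)}^2 = 0$, hence $h = 0$ $r$-almost everywhere. Therefore $\tfrac{dp}{dr} = \tfrac{dq}{dr}$ and $p = q$ as measures, which is the injectivity claim.

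I do not expect a genuine obstacle here, since the substantive content of injectivity has been deliberately packaged into the density hypothesis; what remains is the standard Radon--Nikodym-plus-orthogonality argument. The only points requiring care are bookkeeping ones: verifying $r \in \mathcal{M}$ and $h \in L^2(\bR,r)$, both immediate from absolute continuity and the bound $\tfrac{dp}{dr}\le 2$, and noting that the argument must invoke the hypothesis for the auxiliary measure $r$ rather than for $p$ and $q$ separately -- which is precisely why the density is assumed for every $p \in \mathcal{M}$.
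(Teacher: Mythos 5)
Your proof is correct, but it takes a genuinely different route from the paper's. The paper argues set by set: given a Borel set $A$, it picks $f \in \mathcal{H}$ with $\|f - 1_A\|_{L^2(\bR,p)} < \varepsilon$ and $\|f - 1_A\|_{L^2(\bR,q)} < \varepsilon$ simultaneously, then combines $\bE_p[f] = \bE_q[f]$ (from $m_p = m_q$) with $\|\cdot\|_{L^1} \leq \|\cdot\|_{L^2}$ to conclude $|p(A) - q(A)| \leq 2\varepsilon$, hence $p = q$. You instead pass to the dominating measure $r = \tfrac{1}{2}(p+q)$, encode the difference of the two measures in the bounded Radon--Nikodym density $h = \tfrac{dp}{dr} - \tfrac{dq}{dr} \in L^2(\bR, r)$, and run a single orthogonality argument: the continuous functional $\langle \cdot, h\rangle_{L^2(\bR,r)}$ vanishes on a dense subspace, hence identically, hence $h = 0$. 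Notably, your detour through $r$ buys something concrete: the density hypothesis is stated one measure at a time, so it directly produces an $f_p$ close to $1_A$ in $L^2(\bR,p)$ and an $f_q$ close in $L^2(\bR,q)$, but not obviously a \emph{single} $f$ close in both norms at once, which is what the paper's proof asserts ``by the density assumption.'' The standard repair of that step is exactly your trick --- invoke the hypothesis for $r$ and use $\|g\|_{L^2(\bR,p)}^2 \leq 2\|g\|_{L^2(\bR,r)}^2$ --- so your version is, if anything, the more careful one. One small remark: the observation that $k(x,x)=1$ makes every $f \in \mathcal{H}$ bounded is dispensable (and it leans on the specific Gaussian kernel rather than on the abstract hypotheses of the lemma); it suffices to note that your orthogonality relation holds in particular for all $f \in \mathcal{H} \cap L^2(\bR,r)$, and that subspace is dense by hypothesis.
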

\begin{proof}
Let $A \subset \bR$ be a Borel set. Then $1_A \in L^2(\bR, p)$ for any $p \in \mathcal{M}$. Here $1_A(x) = \begin{cases}1 & x \in A \\ 0 & x \not\in A\end{cases}$. Let $p, q \in \mathcal{M}$. For any $\varepsilon > 0$, there exists $f \in \mathcal{H}\cap L^2(\bR, p)$ such that 
\[
\begin{cases}
\|f - 1_A\|_{L^2(\bR, p)} < \varepsilon, \\
\|f - 1_A\|_{L^2(\bR, q)} < \varepsilon, 
\end{cases}
\]
by the density assumption. With the elementary fact that $\|\cdot\|_{L^1(\bR, p)} \leq \|\cdot\|_{L^2(\bR, p)}$, we obtain
\begin{align*}
    |\bE_p[f] - p(A)| &= |\int f - 1_A dp | \\
    &\leq  \int |f - 1_A | dp < \varepsilon,
\end{align*}
and similarly for $q$. Then $m_p = m_q$ implies 
\[
|p(A) - q(A)| \leq |\bE_p[f] - p(A)| + |\bE_q[f] - q(A)| \leq 2\varepsilon,
\]
which implies that $p = q$. This completes the proof.
\end{proof}
To check the density condition, we take a particular subset of $\mathcal{H}\cap L^2(\bR, p)$.
\begin{lemma}\label{edense}
For any $p \in \mathcal{M}$, the linear span of the set $\{e^{ixt}\}_{t\in \bR}$ is dense in $L^2(\bR, p)$.
\end{lemma}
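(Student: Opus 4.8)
The plan is to exploit Hilbert-space duality: in a Hilbert space a linear subspace is dense if and only if its orthogonal complement is trivial. Accordingly, I would let $g \in L^2(\bR, p)$ be orthogonal to every exponential $e_t := (x \mapsto e^{ixt})$ and aim to show that $g = 0$, which forces the closed span of $\{e^{ixt}\}_{t \in \bR}$ to be all of $L^2(\bR, p)$.

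First I would record two routine facts about the probability measure $p$. Since $|e^{ixt}| = 1$ and $p$ is a probability measure, each $e_t$ satisfies $\int_\bR |e_t|^2 \, dp = 1 < \infty$, so $e_t \in L^2(\bR, p)$ and the statement makes sense. For the same reason, Cauchy--Schwarz against the constant function $1$ gives $L^2(\bR, p) \subset L^1(\bR, p)$; in particular $g \in L^1(\bR, p)$, so that $d\nu := g \, dp$ defines a finite complex Borel measure on $\bR$.

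Next I would translate the orthogonality hypothesis. Using the complex inner product $\langle f, h\rangle_{L^2(\bR, p)} = \int_\bR f \overline{h} \, dp$, orthogonality of $g$ to every $e_t$ reads
\[
\int_\bR g(x) e^{-ixt} \, dp(x) = 0 \qquad \text{for all } t \in \bR,
\]
which says precisely that the Fourier transform $\widehat{\nu}(t) = \int_\bR e^{-ixt} \, d\nu(x)$ of the finite complex measure $\nu$ vanishes identically. I would then invoke the uniqueness theorem for Fourier transforms of finite complex measures on $\bR$ (a finite complex Borel measure whose Fourier transform is identically zero is the zero measure) to conclude $\nu = 0$, i.e. $g = 0$ $p$-almost everywhere, hence $g = 0$ in $L^2(\bR, p)$. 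This shows that the orthogonal complement of $\{e_t \mid t \in \bR\}$ is $\{0\}$, and therefore the linear span of $\{e^{ixt}\}_{t \in \bR}$ is dense.

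The only non-elementary ingredient, and hence the main point to get right, is this Fourier-uniqueness step; everything else is bookkeeping. The two small subtleties to handle carefully are the complex conjugation in the inner product (which produces the standard kernel $e^{-ixt}$ rather than $e^{ixt}$) and the verification that $g \, dp$ is a genuine finite complex measure, so that the uniqueness theorem is legitimately applicable.
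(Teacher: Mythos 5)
Your proof is correct, and it follows the same skeleton as the paper's: reduce density to triviality of the orthogonal complement, observe that orthogonality of $g$ to every $e^{ixt}$ says a Fourier transform vanishes identically, and conclude by a Fourier uniqueness theorem. The one genuine difference is which uniqueness theorem is invoked. The paper uses the standing assumption that $p \in \mathcal{M}$ is absolutely continuous: writing $dp = \rho\,dx$, it notes that $f\rho \in L^1(\bR, dx)$ and applies injectivity of the Fourier transform on $L^1(\bR, dx)$ to get $f\rho = 0$ Lebesgue-a.e., whence $\|f\|_{L^2(\bR,p)}^2 = \int_\bR |f|^2 \rho\, dx = 0$. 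You instead form the finite complex measure $d\nu = g\,dp$ and invoke the uniqueness theorem for Fourier--Stieltjes transforms of finite complex measures, concluding $\nu = 0$ and hence $g = 0$ $p$-a.e. Your route is marginally more general---it never uses absolute continuity of $p$, so it proves the lemma for arbitrary probability measures on $\bR$---at the cost of citing the measure-level uniqueness theorem rather than the more elementary $L^1$ injectivity. In the context of the paper this extra generality is not needed (absolute continuity is built into the definition of $\mathcal{M}$, and the density $\rho$ is genuinely used elsewhere in that section), but both arguments are complete, and your bookkeeping steps (each $e^{ixt}$ lies in $L^2(\bR,p)$, the inclusion $L^2(\bR,p) \subset L^1(\bR,p)$ via Cauchy--Schwarz, and the passage from $g\,dp = 0$ to $g = 0$ $p$-a.e.) are all valid.
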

\begin{proof}
Let $S$ be the linear span of $\{e^{ixt}\}_{t\in \bR}$. It reduces to show that $S^{\perp} = 0$ since it implies that $L^2(\bR, p) = \overline{S} \oplus \overline{S^{\perp}} = \overline{S}$. Note that  $f \in S^{\perp}$ is equivalent to that 
\begin{equation}\label{fourier}
    \int_{\bR} e^{-ixt}f(x)p(x) = \int_{\bR} e^{-ixt}f(x)\rho(x)dx = 0,\   \forall t \in \bR,
\end{equation}
here $\rho$ is a density function of $p$. Since $f \in L^2(\bR, p) \subset L^1(\bR, p)$, we have $f\rho \in L^1(\bR, dx)$. Hence (\ref{fourier}) is equivalent to $\mathcal{F}f\rho = 0$, where $\mathcal{F}$ denotes the Fourier transform on $L^1(\bR, dx)$, which implies that $f\rho = 0$ almost everywhere with respect to the Lebesgue measure \cite{Igari98}. Therefore we obtain 
\[
\|f\|_{L^2(\bR, p)} = \int_\bR |f(x)|^2\rho(x)dx = \int_\bR \overline{f(x)}f(x)\rho(x)dx = 0,
\]
which implies that $f = 0$ in $L^2(\bR, p)$. This completes the proof.
\end{proof}
\begin{lemma}\label{erhodense}
For any $p \in \mathcal{M}$, the linear span of the set $\{e^{ixt} \exp(-x^2/2\tau^2)\}_{t\in \bR, \tau > 0}$ is dense in $L^2(\bR, p)$.
\end{lemma}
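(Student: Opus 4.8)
The plan is to bootstrap from Lemma \ref{edense}, which already establishes that the linear span of $\{e^{ixt}\}_{t \in \bR}$ is dense in $L^2(\bR, p)$. Denote by $T$ the linear span of $\{e^{ixt}\exp(-x^2/2\tau^2)\}_{t \in \bR, \tau > 0}$ and by $\overline{T}$ its closure in $L^2(\bR, p)$. Since $\overline{T}$ is a closed subspace, it will suffice to show that every generator $e^{ixt}$ lies in $\overline{T}$: then the span of $\{e^{ixt}\}_{t}$ is contained in $\overline{T}$, and taking closures together with Lemma \ref{edense} gives $\overline{T} = L^2(\bR, p)$.

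To place $e^{ixt}$ in $\overline{T}$, I would fix $t \in \bR$ and let $\tau \to \infty$ in the family $e^{ixt}\exp(-x^2/2\tau^2) \in T$. Each such function indeed belongs to $L^2(\bR, p)$, since $|e^{ixt}\exp(-x^2/2\tau^2)|^2 = \exp(-x^2/\tau^2) \leq 1$ and $p$ has total mass one. Because $|e^{ixt}| = 1$, the relevant $L^2$-distance reduces to
\[
\|e^{ixt}\exp(-x^2/2\tau^2) - e^{ixt}\|_{L^2(\bR, p)}^2 = \int_\bR \left|\exp(-x^2/2\tau^2) - 1\right|^2 \, dp(x).
\]

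The crux is then to show that this integral vanishes as $\tau \to \infty$. The integrand tends pointwise to $0$ and is dominated by the constant function $1$, which is integrable against the probability measure $p$; hence the dominated convergence theorem forces the integral to $0$. This shows $e^{ixt} = \lim_{\tau \to \infty} e^{ixt}\exp(-x^2/2\tau^2) \in \overline{T}$, and since $t$ was arbitrary, the reduction above completes the argument.

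I expect no serious obstacle here: once Lemma \ref{edense} is available, everything rests on the elementary facts that the Gaussian damping factor increases to $1$ and stays bounded by $1$, so that dominated convergence applies verbatim. The only points meriting a line of care are verifying membership of the approximants in $L^2(\bR, p)$ and exhibiting an integrable dominating function, both of which are immediate from $p$ being a probability measure.
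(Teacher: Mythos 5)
Your proof is correct and takes essentially the same route as the paper's: both approximate each generator $e^{ixt}$ in $L^2(\bR,p)$ by $e^{ixt}\exp(-x^2/2\tau^2)$ as $\tau \to \infty$ via the dominated convergence theorem, and then invoke Lemma \ref{edense} to conclude density. The only (immaterial) difference is that you factor out $|e^{ixt}|=1$ and dominate by the constant $1$, while the paper bounds the difference directly by $4$; you also spell out the closed-subspace argument that the paper leaves implicit.
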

\begin{proof}
We show that $\bE_p|e^{ixt} - e^{ixt} \exp(-x^2/2\tau^2)|^2 \to 0$ as $\tau \to \infty$ for any $p \in \mathcal{M}$. Then Lemma \ref{edense} implies the statement. Since we have $|e^{ixt} - e^{ixt} \exp(-x^2/2\tau^2)|^2 \leq (1 + 1)^2 = 4$, Lebesgue's convergence theorem implies $\lim_{\tau \to \infty }\bE_p|e^{ixt} - e^{ixt} \exp(-x^2/2\tau^2)|^2 = 0$. This completes the proof.
\end{proof}

We show that $\{e^{ixt} \exp(-x^2/2\tau^2)\}_{t\in \bR, \tau > 0} \subset \mathcal{H}$ for large $\tau$ by explicitly representing $\mathcal{H}$ as follows. Note that $k(x, y) = \exp(-(x-y)^2/2)$ is the characteristic function of the normal distribution. Namely we have 
\begin{align*}
    k(x, y) &= \exp\left(-\frac{(x-y)^2}{2}\right) \\
    &= \int_\bR e^{i(x-y)\xi}\frac{1}{\sqrt{2\pi}}\exp\left(-\frac{\xi^2}{2}\right) d\xi \\
    &= \int_\bR e^{ix\xi}\overline{e^{iy\xi}} d\varphi(\xi) \\
    &= \langle e^{ix\xi}, e^{iy\xi}\rangle_{L^2(\bR, d\varphi)},
\end{align*}
where $\varphi(\xi) = \exp(-\xi^2/2)/\sqrt{2\pi}$. Now we consider a map $j : L^2(\bR, d\varphi) \longrightarrow \mathrm{ Map}(\bR, \bC)$ defined by 
\[
(jF)(x) = \langle F(\xi), e^{ix\xi}\rangle_{L^2(\bR, d\varphi)},
\]
which is injective by Lemma \ref{edense}. Then $\mathrm{ Im}j$ equipped with the inner product 
\[
\langle jF, jG \rangle_{\mathrm{ Im}j} = \langle F, G\rangle_{L^2(\bR, d\varphi)}
\]
is the RKHS with kernel $k$. It is checked as follows. For any $jF \in \mathrm{ Im}j$, we have 
\[
jF(x) = \langle F(\xi), e^{ix\xi}\rangle_{L^2(\bR, d\varphi)} = \langle jF, je^{ix\xi} \rangle_{\mathrm{ Im}j}.
\]
We also have $\langle je^{ix\xi}, je^{iy\xi} \rangle_{\mathrm{ Im}j} = k(x, y)$. Hence we can identify $\mathcal{H}$ with $\mathrm{ Im}j$.
\begin{lemma}\label{erhoinh}
For any $t \in \bR$ and $\tau > 1$, we have $e^{ixt} \exp(-x^2/2\tau^2) \in \mathrm{ Im}j$.
\end{lemma}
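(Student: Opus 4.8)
The plan is to exhibit an explicit preimage under $j$, since membership in $\mathrm{Im}\, j$ is most easily certified by producing a concrete $F \in L^2(\bR, d\varphi)$ with $jF$ equal to the target function. The key observation is that, up to the Gaussian weight, $j$ is just a Fourier transform: writing $d\varphi(\xi) = \tfrac{1}{\sqrt{2\pi}} e^{-\xi^2/2}\, d\xi$, the definition gives
\[
(jF)(x) = \frac{1}{\sqrt{2\pi}} \int_{\bR} F(\xi)\, e^{-\xi^2/2}\, e^{-ix\xi}\, d\xi,
\]
so realizing a prescribed $(jF)(x)$ amounts to inverting the Fourier transform of $F(\xi)e^{-\xi^2/2}$. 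Because the target $e^{ixt}\exp(-x^2/2\tau^2)$ is a Gaussian modulated by a phase, and Gaussians are stable under the Fourier transform, I expect the preimage to again be a Gaussian, shifted in $\xi$ by $t$ to generate the phase factor $e^{ixt}$.

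Concretely, I would set $\alpha := \tau^2/2$ and propose the candidate
\[
F(\xi) := \tau \exp\!\left(-\frac{\tau^2}{2}(\xi + t)^2 + \frac{\xi^2}{2}\right),
\]
where the factor $e^{\xi^2/2}$ is designed precisely to cancel the weight $e^{-\xi^2/2}$ occurring inside $j$, and the translation $\xi + t$ is designed to produce the phase. I would then verify $(jF)(x) = e^{ixt}\exp(-x^2/2\tau^2)$ by a routine Gaussian computation: after the cancellation one has $F(\xi)e^{-\xi^2/2} = \tau\, e^{-\alpha(\xi+t)^2}$, and the substitution $\eta = \xi + t$ pulls out the factor $e^{ixt}$ and reduces the remaining integral to the standard Fourier transform of a centered Gaussian,
\[
\frac{1}{\sqrt{2\pi}}\int_{\bR} e^{-\alpha\eta^2}e^{-ix\eta}\, d\eta = \frac{1}{\sqrt{2\alpha}}\, e^{-x^2/4\alpha},
\]
which with $\alpha = \tau^2/2$ and the chosen normalization $\tau$ yields exactly the target. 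This part is a forced calculation once the candidate is written down.

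The main point to check — and the only place where the hypothesis $\tau > 1$ enters — is that this $F$ genuinely lies in $L^2(\bR, d\varphi)$, so that it is a legitimate element of the domain of $j$ and hence $jF \in \mathrm{Im}\, j$. Here I would estimate $\int_{\bR}|F(\xi)|^2\, d\varphi(\xi)$, whose integrand has leading exponent $(-\tau^2 + \tfrac12)\xi^2$; the integral converges precisely when $\tau^2 > \tfrac12$, which is guaranteed by $\tau > 1$. This is the crux of the argument: the construction is \emph{forced} to include the growing factor $e^{\xi^2/2}$ in order to invert $j$, so square-integrability against $d\varphi$ is not automatic, and it is exactly the Gaussian decay $e^{-\tau^2 \xi^2}$ coming from a sufficiently wide target that rescues it. I would finish by assembling the two verifications: $F \in L^2(\bR, d\varphi)$ and $jF = e^{ixt}\exp(-x^2/2\tau^2)$, which together give the claim.
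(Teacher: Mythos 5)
Your proof is correct and takes essentially the same route as the paper: both arguments exhibit the explicit preimage $F(\xi) = \tau\exp\left(-\tfrac{\tau^2}{2}(\xi+t)^2 + \tfrac{\xi^2}{2}\right)$ (the paper reaches it by recognizing the target as the characteristic function of $N(t, 1/\tau^2)$, and writes the constant as $-\tau$, an apparent sign slip from the change of variables $\xi \mapsto -\xi$; your $+\tau$ is the correct normalization) and both hinge on the same square-integrability check against $d\varphi$, which is where $\tau > 1$ enters. No gaps.
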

\begin{proof}
Note that $e^{ixt} \exp(-x^2/2\tau^2)$ is the characteristic function of some normal distribution. Namely we have 
\begin{align*}
    e^{ixt} \exp(-x^2/2\tau^2) &= \int_\bR e^{ix\xi}\frac{1}{\sqrt{2\pi \frac{1}{\tau^2}}}\exp\left(-\frac{\tau^2(\xi - t)^2}{2}\right) d\xi \\
    &= \int_\bR -\tau e^{-ix\xi}\frac{1}{\sqrt{2\pi}}\exp\left(-\frac{\tau^2(\xi + t)^2}{2}\right) d\xi \\
    &= \int_\bR -\tau e^{-ix\xi}\exp\left(-\frac{\tau^2(\xi + t)^2}{2} + \frac{\xi^2}{2}\right) d\varphi(\xi) \\
    &= \langle -\tau \exp\left(-\frac{\tau^2(\xi + t)^2}{2} + \frac{\xi^2}{2}\right) , e^{ix\xi}  \rangle_{L^2(\bR, d\varphi)} ,
\end{align*}
since we have $-\tau \exp(-\tau^2(\xi + t)^2/2 + \xi^2/2) \in L^2(\bR, d\varphi)$ for $\tau >1$. Hence $e^{ixt} \exp(-x^2/2\tau^2) \in \mathrm{Im} j$. This completes the proof.
\end{proof}
Now we apply the method to the distribution $p, q$ in the following.
\begin{itemize}
    \item $q$ is a distribution on the output layer of the neural network with 3 hidden layers with $d_1, d_2, d_3$ components respectively. We set the output layer $1$-dimensional for simplicity.
    \item $p$ is a normal distribution $N(0, \sigma^2)$ for some $\sigma > 0$.
\end{itemize}
Now we take samples $\mathcal{Y} = \{y_n\}_n$ from $q$, and take $\sigma^2$ as the variance of the sample $\mathcal{Y}$. By the law of large numbers, the above quantity is approximated by the following for large $n$ :

\begin{equation}\label{dist2}
 \bE_{x, x'\sim p}\left[\exp\left(-\frac{(x-x')^2}{2}\right)\right] - \frac{2}{n}\sum_{i=1}^{n}\bE_{x \sim p}\left[\exp\left(-\frac{(x-y_i)^2}{2}\right)\right] + \frac{1}{n^2}\sum_{i, j = 1}^{n}\exp\left(-\frac{(y_i-y_j)^2}{2}\right).
\end{equation}
\begin{lemma}\label{calc1}
\[
\bE_{x \sim N(0, \sigma^2)}\left[\exp\left(-\frac{(x-y)^2}{2}\right)\right] = \frac{1}{\sqrt{\sigma^2 + 1}} \exp\left(-\frac{y^2}{2(\sigma^2 + 1)}\right).
\]

\end{lemma}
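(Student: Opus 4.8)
The plan is to evaluate the expectation as an explicit Gaussian integral and collapse it to a single Gaussian normalization by completing the square in $x$. First I would unfold the definition of the expectation against the density of $N(0,\sigma^2)$, so that the left-hand side becomes
\[
\int_{\bR} \exp\left(-\frac{(x-y)^2}{2}\right)\frac{1}{\sqrt{2\pi\sigma^2}}\exp\left(-\frac{x^2}{2\sigma^2}\right) dx.
\]
Merging the two exponentials and collecting powers of $x$, the exponent is a quadratic polynomial in $x$. Writing $a := 1 + 1/\sigma^2 = (\sigma^2+1)/\sigma^2$, it takes the form $-\tfrac{a}{2}x^2 + xy - \tfrac{y^2}{2}$.

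The key step is then to complete the square in $x$, splitting the exponent into an $x$-dependent Gaussian part $-\tfrac{a}{2}(x - y/a)^2$ and a remainder that is constant in $x$. Using $1/a - 1 = -1/(\sigma^2+1)$, this remainder simplifies to exactly $-y^2/(2(\sigma^2+1))$, matching the exponential factor on the right-hand side. The constant factor pulls out of the integral, and the surviving integral $\int_{\bR}\exp(-\tfrac{a}{2}(x-y/a)^2)\,dx = \sqrt{2\pi/a}$ is a standard Gaussian integral, whose value is unaffected by the shift of its center by $y/a$.

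Finally I would combine the normalization constants: $\frac{1}{\sqrt{2\pi\sigma^2}}\sqrt{2\pi/a} = \frac{1}{\sigma\sqrt{a}} = \frac{1}{\sqrt{\sigma^2+1}}$, which together with the exponential factor produces the claimed right-hand side. There is no genuine obstacle beyond careful bookkeeping; the only places demanding attention are the two algebraic simplifications of the constants — the $x$-independent remainder after completing the square, and the combined prefactor — both of which reduce cleanly once $a$ is written as $(\sigma^2+1)/\sigma^2$.
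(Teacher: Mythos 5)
Your proof is correct, and every algebraic step checks out: with $a = (\sigma^2+1)/\sigma^2$ the completed square leaves the remainder $\tfrac{y^2}{2}\bigl(\tfrac{1}{a}-1\bigr) = -\tfrac{y^2}{2(\sigma^2+1)}$, and the prefactor $\tfrac{1}{\sqrt{2\pi\sigma^2}}\sqrt{2\pi/a} = \tfrac{1}{\sqrt{\sigma^2+1}}$, exactly as you claim. However, your route differs from the paper's. The paper never completes a square: it instead writes the kernel $\exp\bigl(-\tfrac{(x-y)^2}{2}\bigr)$ as the characteristic function of a standard normal, i.e.\ $\int_{\bR} e^{i(x-y)\xi}\tfrac{1}{\sqrt{2\pi}}\exp\bigl(-\tfrac{\xi^2}{2}\bigr)\,d\xi$, swaps the two integrals by Fubini--Tonelli, evaluates the inner $x$-integral as the characteristic function of $N(0,\sigma^2)$ (yielding $\exp\bigl(-\tfrac{\xi^2\sigma^2}{2}\bigr)$), and then recognizes the remaining $\xi$-integral as yet another Gaussian characteristic function, which produces the right-hand side directly. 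The trade-off: your argument is more elementary and self-contained, needing only the standard Gaussian integral $\int_{\bR}\exp\bigl(-\tfrac{a}{2}u^2\bigr)\,du = \sqrt{2\pi/a}$ and some bookkeeping; the paper's argument outsources all the algebra to the known characteristic-function formula (used twice) and fits naturally with the surrounding supplemental material, where the same Fourier representation of the kernel $k(x,y)$ is the backbone of the RKHS construction (e.g.\ in the proof that $e^{ixt}\exp(-x^2/2\tau^2)$ lies in the image of $j$) and of the companion computation in Lemma \ref{calc2}, which simply reuses Lemma \ref{calc1}. Either proof is acceptable; yours just pays the quadratic bookkeeping cost explicitly that the paper hides inside the characteristic-function identity.
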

\begin{proof}
We have
\begin{align*}
    \bE_{x \sim N(0, \sigma^2)}\left[\exp\left(-\frac{(x-y)^2}{2}\right)\right] &= \frac{1}{\sqrt{2\pi\sigma^2}}\int_{\bR} \exp\left(-\frac{x^2}{2\sigma^2}\right) \exp\left(-\frac{(x-y)^2}{2}\right)dx \\
    &= \frac{1}{\sqrt{2\pi\sigma^2}}\int_{\bR} \exp\left(-\frac{x^2}{2\sigma^2}\right) \int_{\bR}\frac{1}{\sqrt{2\pi}}e^{i(x-y)\xi}\exp\left(-\frac{\xi^2}{2}\right)d\xi dx \\
    &= \frac{1}{\sqrt{2\pi}}\int_\bR  \exp\left(-\frac{\xi^2\sigma^2}{2}\right)e^{-iy\xi}\exp\left(-\frac{\xi^2}{2}\right)d\xi \\
    &= \frac{1}{\sqrt{2\pi}}\int_\bR \exp\left(-\frac{\xi^2(\sigma^2 + 1)}{2}\right)e^{-iy\xi}d\xi \\
    &= \frac{1}{\sqrt{\sigma^2 + 1}} \exp\left(-\frac{y^2}{2(\sigma^2 + 1)}\right),
\end{align*}
where we applied the characteristic function formula of normal distribution to the first and fourth line, and Fubini-Tonelli theorem to the second line. This completes the proof.
\end{proof}
\begin{lemma}\label{calc2}
\[
\bE_{x, x' \sim N(0, \sigma^2)}\left[\exp\left(-\frac{(x-x')^2}{2}\right)\right] = \frac{1}{\sqrt{2\sigma^2 + 1}}.
\]

\end{lemma}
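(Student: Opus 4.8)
The plan is to reduce the two-variable expectation to a single one-dimensional Gaussian integral, which is then evaluated by completing the square. There are two essentially equivalent routes, and I would take the one that reuses Lemma~\ref{calc1}.

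First I would use the independence of $x$ and $x'$ together with the tower property to rewrite the double expectation as an iterated one,
\[
\bE_{x, x' \sim N(0, \sigma^2)}\left[\exp\left(-\frac{(x-x')^2}{2}\right)\right] = \bE_{x' \sim N(0, \sigma^2)}\left[\bE_{x \sim N(0, \sigma^2)}\left[\exp\left(-\frac{(x-x')^2}{2}\right)\right]\right].
\]
Applying Lemma~\ref{calc1} to the inner expectation, with $y$ replaced by the (now fixed) value $x'$, collapses it to $\frac{1}{\sqrt{\sigma^2+1}}\exp\!\big(-\frac{(x')^2}{2(\sigma^2+1)}\big)$, so the problem reduces to computing the $\bE_{x' \sim N(0,\sigma^2)}$ of this Gaussian function of $x'$.

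Next I would write out this remaining expectation explicitly against the $N(0,\sigma^2)$ density and combine the two quadratic exponents into a single $\exp(-a (x')^2)$ with $a = \frac{2\sigma^2+1}{2\sigma^2(\sigma^2+1)}$. Evaluating $\int_\bR e^{-a(x')^2}\,dx' = \sqrt{\pi/a}$ by the standard Gaussian integral formula and multiplying by the prefactors $\frac{1}{\sqrt{\sigma^2+1}}\cdot\frac{1}{\sqrt{2\pi\sigma^2}}$ should, after cancellation, leave exactly $\frac{1}{\sqrt{2\sigma^2+1}}$.

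Alternatively, and perhaps more transparently, I would observe that $x - x'$ is itself normal with mean $0$ and variance $2\sigma^2$, reducing the claim directly to $\bE_{z \sim N(0,2\sigma^2)}[\exp(-z^2/2)]$; the same one-dimensional Gaussian integral then yields the result. Neither route involves any genuine difficulty — the only thing to be careful about is the bookkeeping when merging the two Gaussian exponents and tracking the normalization constants, so that the powers of $\pi$, $\sigma^2$, and $\sigma^2+1$ cancel correctly to produce the clean answer $1/\sqrt{2\sigma^2+1}$.
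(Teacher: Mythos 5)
Your primary route---reducing the double expectation via Lemma~\ref{calc1} to a single expectation over $x'$, merging the two Gaussian exponents into $\exp(-a(x')^2)$ with $a = \tfrac{2\sigma^2+1}{2\sigma^2(\sigma^2+1)}$, and evaluating the standard Gaussian integral---is exactly the paper's proof, and your bookkeeping (the prefactors $\tfrac{1}{\sqrt{\sigma^2+1}}\cdot\tfrac{1}{\sqrt{2\pi\sigma^2}}$ cancelling against $\sqrt{\pi/a}$) is correct. Your alternative observation that $x - x' \sim N(0, 2\sigma^2)$ gives an even cleaner derivation that bypasses Lemma~\ref{calc1} entirely, but since you present it only as a variant, the proposal matches the paper's approach.
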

\begin{proof}
By Lemma \ref{calc1}, we have 
\begin{align*}
    \bE_{x, x' \sim N(0, \sigma^2)}\left[\exp\left(-\frac{(x-x')^2}{2}\right)\right] &= \bE_{ x' \sim N(0, \sigma^2)}\left[\frac{1}{\sqrt{\sigma^2 + 1}} \exp\left(-\frac{x'^2}{2(\sigma^2 + 1)}\right)\right] \\
    &= \int_\bR \frac{1}{\sqrt{\sigma^2 + 1}} \exp\left(-\frac{x'^2}{2(\sigma^2 + 1)}\right) \frac{1}{\sqrt{2\pi\sigma^2}}\exp\left(-\frac{x'^2}{2\sigma^2}\right)dx' \\
    &= \frac{1}{\sqrt{\sigma^2(\sigma^2 + 1)}}\frac{1}{\sqrt{2\pi}} \int_\bR \exp\left(-\frac{x'^2}{2}\left(\frac{1}{\sigma^2} + \frac{1}{\sigma^2 + 1}\right)\right) dx' \\
    &= \frac{1}{\sqrt{2\sigma^2 + 1}}.
\end{align*}
This completes the proof.
\end{proof}

Now we obtain the following proposition.  

\begin{proposition}\label{quantity}
The quantity (\ref{dist2}) is equal to the following:
\begin{equation}\label{dist3}
    \frac{1}{\sqrt{2\sigma^2 + 1}} - \frac{2}{n}\sum_{i=1}^{n}\frac{1}{\sqrt{\sigma^2 + 1}} \exp\left(\frac{y_i^2}{2(\sigma^2 + 1)}\right) + \frac{1}{n^2}\sum_{i, j = 1}^{n}\exp\left(-\frac{(y_i-y_j)^2}{2}\right),
\end{equation}
where $\sigma^2$ is the variance of the samples $\mathcal{Y}$.
\end{proposition}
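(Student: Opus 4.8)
The plan is to observe that the quantity~(\ref{dist2}) is a sum of three summands, two of which are precisely the Gaussian integrals already evaluated in Lemmas~\ref{calc1} and~\ref{calc2}, while the third depends only on the samples $\mathcal{Y} = \{y_n\}_n$ and is therefore carried through verbatim. No new estimate is needed; the proof is simply an assembly of the two preceding lemmas together with the definition $p = N(0,\sigma^2)$.

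Concretely, first I would treat the leading term $\bE_{x, x'\sim p}[\exp(-(x-x')^2/2)]$. The inner expectation over $x$ (for fixed $x'$) is exactly the integral evaluated in Lemma~\ref{calc1}, and the subsequent outer expectation over $x'$ is the computation performed in the proof of Lemma~\ref{calc2}; hence this term equals $1/\sqrt{2\sigma^2+1}$. I would invoke Lemma~\ref{calc2} directly to record this value, matching the first summand of~(\ref{dist3}).

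Next I would handle the middle sum. For each fixed index $i$, the summand $\bE_{x\sim p}[\exp(-(x-y_i)^2/2)]$ is an instance of Lemma~\ref{calc1} with $y = y_i$, so it equals $\frac{1}{\sqrt{\sigma^2+1}}\exp\left(-\frac{y_i^2}{2(\sigma^2+1)}\right)$. Substituting this for every $i$ converts $\frac{2}{n}\sum_{i=1}^n \bE_{x\sim p}[\exp(-(x-y_i)^2/2)]$ into the middle term of~(\ref{dist3}). The final term $\frac{1}{n^2}\sum_{i,j}\exp(-(y_i-y_j)^2/2)$ involves no expectation over $p$ and is left unchanged. Combining the three pieces yields~(\ref{dist3}).

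Since every step is a direct citation of an already-proved identity, there is no genuine obstacle here. The only point requiring care is the bookkeeping of which integrals are Gaussian (and therefore collapse via Lemmas~\ref{calc1} and~\ref{calc2}) versus the purely sample-dependent term, which must be kept intact rather than integrated against $p$ again.
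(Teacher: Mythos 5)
Your proposal is correct and is essentially the paper's own proof: the paper gives no separate argument for Proposition \ref{quantity} beyond the phrase ``Now we obtain the following proposition,'' precisely because the result is the direct substitution of Lemmas \ref{calc1} and \ref{calc2} into (\ref{dist2}) that you carry out, with the purely sample-dependent double sum left untouched.

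One point you should have flagged rather than glossed over: Lemma \ref{calc1} gives the middle summand as $\frac{1}{\sqrt{\sigma^2+1}}\exp\left(-\frac{y_i^2}{2(\sigma^2+1)}\right)$, with a \emph{negative} exponent, whereas the displayed formula (\ref{dist3}) in the statement reads $\exp\left(\frac{y_i^2}{2(\sigma^2+1)}\right)$. This is a sign typo in the paper's statement (present both here and in the main-text display), so your substitution asserts a match that does not literally hold; the correct conclusion is (\ref{dist3}) with the minus sign restored in the middle term, exactly as your own computation produces it.
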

\end{document}